\definecolor{ourblue}{rgb}{0.368,0.507,0.71}
\definecolor{ourorange}{rgb}{0.881,0.611,0.142}
\definecolor{ourgreen}{rgb}{0.56,0.692,0.195}
\definecolor{ourred}{rgb}{0.923,0.386,0.209}
\definecolor{ourviolet}{rgb}{0.528,0.471,0.701}
\definecolor{ourbrown}{rgb}{0.772,0.432,0.102}
\definecolor{ourlightblue}{rgb}{0.364,0.619,0.782}
\definecolor{ourdarkgreen}{rgb}{0.572,0.586,0.}
\definecolor{new}{rgb}{0.923,0.386,0.209}
\theoremstyle{plain}
\newtheorem{theorem}{Theorem}[section]
\newtheorem{proposition}[theorem]{Proposition}
\theoremstyle{definition}
\theoremstyle{remark}
\newcommand{\fs}[1]{{#1}}
\DeclareRobustCommand\onedot{\futurelet\@let@token\@onedot}
\def\@onedot{\ifx\@let@token.\else.\null\fi\xspace}
\newcommand{\eg}{e.g\onedot}
\newcommand{\ie}{i.e\onedot}
\newcommand{\wrt}{w.r.t\onedot}
\def\figref#1{figure~\ref{#1}}
\def\Figref#1{Figure~\ref{#1}}
\def\secref#1{section~\ref{#1}}
\def\eqref#1{equation~\ref{#1}}
\def\Algref#1{Algorithm~\ref{#1}}
\def\tabref#1{table~\ref{#1}}
\def\Tabref#1{Table~\ref{#1}}
\def\1{\bm{1}}
\def\va{{\bm{a}}}
\def\vb{{\bm{b}}}
\def\eva{{a}}
\def\evb{{b}}
\def\mC{{\bm{C}}}
\def\mT{{\bm{T}}}
\DeclareMathAlphabet{\mathsfit}{\encodingdefault}{\sfdefault}{m}{sl}
\SetMathAlphabet{\mathsfit}{bold}{\encodingdefault}{\sfdefault}{bx}{n}
\def\gA{{\mathcal{A}}}
\def\gD{{\mathcal{D}}}
\def\gG{{\mathcal{G}}}
\def\gM{{\mathcal{M}}}
\def\gP{{\mathcal{P}}}
\def\gS{{\mathcal{S}}}
\def\gU{{\mathcal{U}}}
\def\gX{{\mathcal{X}}}
\def\gY{{\mathcal{Y}}}
\def\sN{{\mathbb{N}}}
\def\sR{{\mathbb{R}}}
\def\emC{{C}}
\def\emT{{T}}
\newcommand{\E}{\mathbb{E}}
\DeclareMathOperator*{\argmax}{arg\,max}
\DeclareMathOperator*{\argmin}{arg\,min}
\icmltitlerunning{Zero-Shot Offline Imitation Learning via Optimal Transport}
\begin{document}

\twocolumn[
\icmltitle{Zero-Shot Offline Imitation Learning via Optimal Transport}

\icmlsetsymbol{equal}{*}

\begin{icmlauthorlist}
\icmlauthor{Thomas Rupf}{unitue,mpi}
\icmlauthor{Marco Bagatella}{mpi,eth}
\icmlauthor{Nico Gürtler}{unitue,mpi}
\icmlauthor{Jonas Frey}{mpi,eth}
\icmlauthor{Georg Martius}{unitue,mpi}
\end{icmlauthorlist}

\icmlaffiliation{unitue}{Universität Tübingen, Tübingen, Germany}
\icmlaffiliation{mpi}{MPI for Intelligent Systems, Tübingen, Germany}
\icmlaffiliation{eth}{ETH, Zürich, Switzerland}

\icmlcorrespondingauthor{Thomas Rupf}{thrupf@ethz.ch}

\icmlkeywords{Imitation Learning, Deep Learning, Optimal Transport}

\vskip 0.3in
]

\printAffiliationsAndNotice{}  %

\begin{abstract}
Zero-shot imitation learning algorithms hold the promise of reproducing unseen behavior from as little as a single demonstration at test time.
Existing practical approaches view the expert demonstration as a sequence of goals, enabling imitation with a high-level goal selector, and a low-level goal-conditioned policy. 
However, this framework can suffer from myopic behavior: the agent's immediate actions towards achieving individual goals may undermine long-term objectives.
We introduce a novel method that mitigates this issue by directly optimizing the occupancy matching objective that is intrinsic to imitation learning. 
We propose to lift a goal-conditioned value function to a distance between occupancies, which are in turn approximated via a learned world model.
The resulting method can learn from offline, suboptimal data, and is capable of non-myopic, zero-shot imitation, as we demonstrate in complex, continuous benchmarks.
The code is available at \url{https://github.com/martius-lab/zilot}.
\end{abstract}

\section{Introduction}\label{sec:intro}

\begin{figure*}
    \vspace{-1.5mm}
    \centering
    \includegraphics[width=0.9\linewidth]{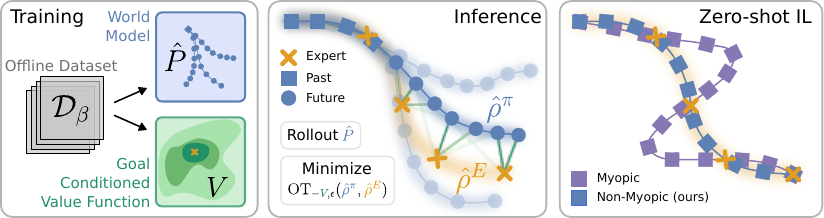}
    \vspace{-2mm}
    \caption{Overview of ZILOT. After learning a world model $\Hat{P}$ and a goal-conditioned value function $V$ from offline data (left), a zero-order optimizer directly matches the occupancy of rollouts $\Hat{\rho}^\pi$ from the learned world model to the occupancy of a single expert demonstration $\Hat{\rho}^E$ (center). This is done by lifting the goal-conditioned value function to a distance between occupancies using Optimal Transport. The resulting policy displays non-myopic behavior (right).}\label{fig:overview}
    \vspace{-4mm}
\end{figure*}

\fs{The emergence of zero/few-shot capabilities in language modeling \citep{brownLanguageModelsAre2020,weiFinetunedLanguageModels2022,NEURIPS2022_8bb0d291} has renewed interest in generalist agents across all fields in machine learning.}
Typically, such agents are pretrained with minimal human supervision.
At inference, they are capable of generalization across diverse tasks, without further training, \ie zero-shot.
Such capabilities have also been a long-standing goal in learning-based control \citep{duanOneShotImitationLearning2017}.
Promising results have been achieved by leveraging the scaling and generalization properties of supervised learning~\citep{jangBCZZeroShotTask2022,reedGeneralistAgent2022,collaborationOpenXEmbodimentRobotic2023,octomodelteamOctoOpenSourceGeneralist2024,kimOpenVLAOpenSourceVisionLanguageAction2024}, which however rely on large amounts of expert data, usually involving costly human participation, \eg teleoperation.
A potential solution to this issue can be found in reinforcement learning approaches, which enable learning from suboptimal data sources \citep{Sutton1998}.
Existing methods within this framework ease the burden of learning general policies by limiting the task class to additive rewards \citep{laskinURLBUnsupervisedReinforcement2021,sancaktarCuriousExplorationStructured2022,fransUnsupervisedZeroShotReinforcement2024} or single goals \citep{bagatellaGoalconditionedOfflinePlanning2023}.\looseness -1

This work lifts the restrictions of previous approaches, and proposes a method that can reproduce rich behaviors from offline, suboptimal data sources.
We allow arbitrary tasks to be specified through a \textit{single} demonstration \textit{devoid of actions} at inference time, conforming to a zero-shot Imitation Learning (IL) setting \citep{pathakZeroShotVisualImitation2018,fbil2024}.
Furthermore, we consider a relaxation of this setting \citep{pathakZeroShotVisualImitation2018}, where the expert demonstration may be \textit{rough}, consisting of an ordered sequence of states without precise time-step information, and \textit{partial}, meaning each state contains only partial information about the full state.
These two relaxations are desirable from a practical standpoint, as they allow a user to avoid specifying information that is either inconsequential to the task or costly to attain (\eg only through teleoperation).
For example, when tasking a robot arm with moving an object along a path, it is sufficient to provide the object's position for a few ``checkpoints'' without specifying the exact arm pose.\looseness -1

In principle, a specified goal sequence can be decomposed into multiple single-goal tasks that can be accomplished by goal-conditioned policies, as proposed by recent zero-shot IL approaches \citep{pathakZeroShotVisualImitation2018,haoSOZILSelfOptimalZeroShot2023}.
However, we show that this decomposition is prone to myopic behavior when expert demonstrations are partial.
Continuing the robotic manipulation example from above, let us consider a task described by two sequential goals, each specifying a certain position that the object should reach.
In this case an optimal goal-conditioned policy would attempt to reach the first goal as fast as possible, possibly by throwing the object towards it.
The agent would then relinquish control of the object, leaving it in a suboptimal\textemdash or even unrecoverable\textemdash state.
In this case, the agent would be unable to move the object towards the second goal.
This myopic behavior is a fundamental issue arising from goal abstraction, as we formally argue in Section \ref{sec:gc-myopic}, and results in catastrophic failures in hard-to-control environments, as we demonstrate empirically in Section \ref{sec:experiments}.\looseness -1

In this work we instead provide a holistic solution to zero-shot offline imitation learning by adopting an occupancy matching formulation.
We name our method ZILOT (\textbf{Z}ero-shot Offline \textbf{I}mitation \textbf{L}earning from \textbf{O}ptimal \textbf{T}ransport).
We utilize Optimal Transport (OT) to lift the state-goal distance inherent to GC-RL to a distance between the expert's and the policy's occupancies, where the latter is approximated by querying a learned world model.
Furthermore, we operationalize this distance as an objective in a standard fixed horizon MPC setting.
Minimizing this distance leads to non-myopic behavior in zero-shot imitation.
We verify our claims empirically by comparing our planner to previous zero-shot IL approaches across multiple robotic simulation environments, down-stream tasks, and offline datasets.

\section{Preliminaries}\label{sec:prelim}
\subsection{Imitation Learning}
We model an environment as a controllable Markov Chain\footnote{or reward-free Markov Decision Process.} $\gM = (\gS, \gA, P, \mu_0)$, where $\gS$ and $\gA$ are state and action spaces, $P:\gS\times\gA \to \Omega(\gS)$\footnote{where $\Omega(\gS)$ denotes the set of distributions over $\gS$.} is the transition function and $\mu_0 \in \Omega(\gS)$ is the initial state distribution.
In order to allow for partial demonstrations, we additionally define a goal space $\gG$ and a surjective function $\phi: \gS \to \gG$ which maps each state to its abstract representation.
To define ``goal achievement'', we assume the existence of a goal metric $h$ on $\gG$ that does not need to be known.
We then regard state $s\in\gS$ as having achieved goal $g\in\gG$ if we have $h(\phi(s), g) < \epsilon$ for some fixed $\epsilon > 0$.
For each policy $\pi:\gS \to \Omega(\gA)$, we can measure the (undiscounted) $N$-step state and goal occupancies respectively as
\begin{align}
    \varrho_N^\pi(s) = \frac{1}{N+1} \sum_{t=0}^N \Pr[s=s_t]
\end{align}
and
\begin{align}
    \rho_N^\pi(g) = \frac{1}{N+1} \sum_{t=0}^N \Pr[g=\phi(s_t)],
\end{align}
where $s_0 \sim \mu_0$, $s_{t+1} \sim P(s_t, a_t)$ and $a_t \sim \pi(s_t)$.
\fs{These quantities are particularly important in the context of imitation learning.}
We refer the reader to~\cite{liuCEILGeneralizedContextual2023} for a full overview over IL settings, and limit this discussion to offline IL with rough and partial expert trajectories. 
It assumes access to two datasets:
$\gD_\beta = (s_0^i, a_0^i, s_1^i, a_1^i, \dots)_{i=1}^{|\gD_\beta|}$ consisting of full state-action trajectories from $\mathcal{M}$ and $\gD_E = (g_0^i, g_1^i, \dots)_{i=1}^{|\gD_E|}$ containing demonstrations of an expert in the form of goal sequences, not necessarily abiding to the dynamic of $\mathcal{M}$.
Note that both datasets do not have reward labels.
The goal is to train a policy $\pi$ that imitates the expert, which is commonly formulated as matching goal occupancies
\begin{align}
    \rho_N^{\pi} \overset{D}{=} \rho_N^{\pi_E}.  \label{eq:occ-match}
\end{align}

The setting we consider in this work is \textit{zero-shot} offline IL which imposes two additional constraints on offline IL. 
First, $\gD_E$ is only available at inference time, which means pre-training has to be task-agnostic.
We further assume $\gD_E$ consists of a single trajectory $(g_0, \dots, g_m) = g_{0:m}$.
Second, at inference, the agent should imitate $\pi_E$, with a ``modest compute-overhead'' \citep{pathakZeroShotVisualImitation2018,touatiLearningOneRepresentation2021,fbil2024}.
In practice, imitation of unseen trajectories should be order of magnitudes cheaper than IL from scratch, and largely avoid costly operations (e.g. network updates).

\subsection{Optimal Transport}\label{sec:ot}
In the field of machine learning, it is often of interest to match distributions, \ie find some probability measure $\mu$ that resembles some other probability measure $\nu$.
In recent years there has been an increased interest in Optimal Transportation (OT) \citep{amosMetaOptimalTransport2023,haldarWatchMatchSupercharging2023,Bunne2023,amos2024otlagrangian}.
As illustrated in \figref{fig:ot}, OT does not only compare probability measures in a point-wise fashion, like $f$-Divergences such as the Kullbach-Leibler Divergence ($D_\text{KL}$), but also incorporates the geometry of the underlying space.
This also makes OT robust to empirical approximation (sampling) of probability measures~(\cite{peyreComputationalOptimalTransport2020}, p.129).
\begin{figure}
    \centering
    \includegraphics[width=\columnwidth]{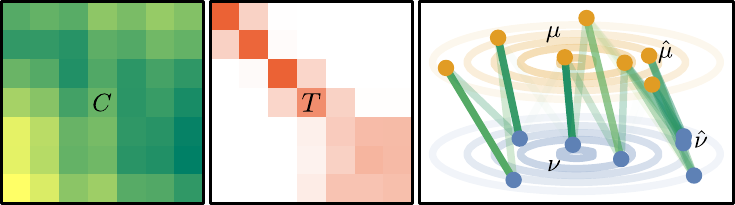}
    \caption{An example of Optimal Transport between the discrete approximation $\Hat{\mu},\Hat{\nu}$ of two Gaussians $\mu,\nu$. The cost matrix $C$ consists of the point-wise costs where the cost here is the Euclidian distance. A coupling matrix $T\in \mathcal{U}(\Hat{\mu}, \Hat{\nu})$ (middle) is visualized through lines representing the matching (right).}
    \label{fig:ot}
    \vspace{-4mm}
\end{figure}
Formally, OT describes the coupling $\gamma \in \gP(\gX\times\gY)$ of two measures $\mu\in\gP(\gX),\nu\in\gP(\gY)$ with minimal transportation cost \wrt some cost function $c: \gX\times \gY \rightarrow \sR$.
The primal Kantorovich form is given as the optimization problem
\begin{align}
  \text{OT}_c(\mu, \nu) = \inf_{\gamma \in \gU(\mu, \nu)} \int_{\gX \times \gY} c(x_1, x_2) d\gamma(x_1, x_2) \label{eq:ot}
\end{align}
where the optimization is over all joint distributions of $\mu$ and $\nu$ denoted as $\gamma \in \gU(\mu, \nu)$ (\textit{couplings}). 
If $\gX = \gY$ and $(\gX, c)$ is a metric space
then for $p \in \sN$, $W_p^p = \text{OT}_{c^p}$ is called the Wasserstein-$p$ distance which was shown to be a metric on the subset of measures on $\gX$ with finite $p$-th moments \citep{clement2008trianglewasserstein}.

Given samples $x_1,\dots,x_n \sim \mu$ and $y_1,\dots,y_m \sim \nu$ the discrete OT problem between the discrete probability measures $\Hat{\mu} = \sum_{i=1}^n \eva_i \delta_{x_i}$ and $\Hat{\nu} = \sum_{j=1}^m \evb_j \delta_{y_j}$ 
can be written as a discrete version of \eqref{eq:ot}, namely
\begin{align}
  \text{OT}_c(\Hat{\mu},\Hat{\nu}) &= \min_{\mT \in \mathcal{U}(\va,\vb)} \sum_{i=1}^n\sum_{j=1}^m c(x_i, y_j) \emT_{ij}\\  
  &= \min_{\mT \in \mathcal{U}(\va,\vb)} \langle \mC, \mT \rangle
\end{align}
with the cost matrix $\emC_{ij} = c(x_i, y_j)$.
The marginal constraints can now be written as
$ \mathcal{U}(\va, \vb) = \{\mT \in \sR^{n\times m} : \mT \cdot \1_m = \vb \text{ and } \mT^\top \cdot \1_n = \va\}$.
This optimization problem can be solved via Linear Programming.
Furthermore, \citet{cuturiSinkhornDistancesLightspeed2013} shows that the entropically regularized version, commonly given as $\text{OT}_{c,\eta}(\Hat{\mu},\Hat{\nu}) = \min_{\mT \in \mathcal{U}(\va,\vb)} \langle \mC, \mT \rangle - \eta D_\text{KL}(\mT, \va\vb^\top)$, can be efficiently solved in its dual form using Sinkhorn's algorithm \citep{Sinkhorn1967ConcerningNM}.

\subsection{Goal-conditioned Reinforcement Learning}\label{sec:gc-rl}
As techniques from the literature will be recurring in this work, we provide a short introduction to fundamental ideas in GC-RL.
We can introduce this framework by enriching the controllable Markov Chain $\gM$. We condition it on a goal $g\in\gG$ and cast it as an (undiscounted) Markov Decision Process $\gM_g = (\gS\cup\{\perp\}, \gA, P_g, \mu_0, R_g, T_{\max})$. Compared to the reward-free setting above, the dynamics now include a sink-state $\perp$ upon goal-reaching and a reward of $-1$ until this happens:
\begin{align}
    P_g(s, a) &= \begin{cases} P(s, a) & \text{if } h(\phi(s), g) \ge \epsilon \\ \delta_\perp & \text{otherwise} \end{cases}\\
    R_g(s, a) &= \begin{cases} -1 & \text{if } h(\phi(s), g) \ge \epsilon \\ 0 & \text{otherwise} \end{cases}
\end{align} 
where $\delta_x$ stands for the probability distribution assigning all probability mass to $x$.

We can now define the goal-conditioned value function as
\begin{align}
    V^\pi(s_0, g) = \mathop{\E}_{\mu_0, P_g, \pi} \left[ \sum_{t=0}^{T_{\max}} R_g(s_t, a_t) \right] 
\end{align}
where  $s_0 \sim \mu_0, s_{t+1} \sim P_g(s_t, a_t), a_t \sim \pi(s_t, g)$.
The optimal goal-conditioned policy is then $\pi^\star = \argmax_{\pi} \E_{g\sim\mu_{\gG},s\sim\mu_{0}}V^\pi(s_0; g)$ for some goal distribution $\mu_{\gG} \in \Omega(\gG)$.
\label{sec:quasimetric}
Intuitively, the value function $V^\pi(s,g)$ corresponds to the negative number of expected steps that $\pi$ needs to move from state $s$ to goal $g$.
Thus the distance $d = -V^\star$ corresponds to the expected first hit time.
If no goal abstraction is present, \ie $\phi = \text{id}_\gS$, then $(\gS, d)$ is a quasimetric space \citep{wangOptimalGoalReachingReinforcement2023}, \ie $d$ is non-negative and satisfies the triangle inequality.
Note, though, that $d$ does not need be be symmetric.

\section{Goal Abstraction and Myopic Planning} \label{sec:gc-myopic}
The distribution matching objective at the core of IL problems is in general hard to optimize.
For this reason, traditional methods for zero-shot IL leverage a hierarchical decomposition into a sequence of GC-RL problems \citep{pathakZeroShotVisualImitation2018,haoSOZILSelfOptimalZeroShot2023}.
We will first describe this approach, and then show how it potentially introduces myopic behavior and suboptimality.

In the pretraining phase, \citet{pathakZeroShotVisualImitation2018} propose to train a goal-conditioned policy $\pi_g: \gS \times \gG \to \gA$ on reaching single goals and a goal-recognizer $C: \gS \times \gG \to \{0,1\}$ that detects whether a given state achieves the given goal.
Given an expert demonstration $g_{1:M}$ and an initial state $s_0$, imitating the expert can then be sequentially decomposed into $M$ goal-reaching problems, and solved with a hierarchical agent consisting of two policies. 
On the lower level, $\pi_g$ chooses actions to reach the current goal; on the higher level, $C$ decides whether the current goal is achieved and $\pi_g$ should target the next goal in the sequence.\looseness -1

We define the pre-image $\phi^{-1}(g) = \{s\in \gS : \phi(s) = g\}$ as the set of all states that map to a goal, and formalize the suboptimality of the above method under goal abstraction as follows.
\begin{figure}
\centering
\adjustbox{max width=.8\columnwidth}{
\begin{tikzpicture}[shorten >=1pt, node distance=2cm, on grid, auto]
  \node[state,initial] (s00) at (0,0) {$(0,0)$};
  \node[state] (s10) at (2,0) {$(1,0)$};
  \node[state] (s11) at (2,-2) {$(1,1)$};
  \node[state] (s21) at (4,-2) {$(2,1)$};
  \path[->] 
    (s00) edge node [above] {$a_0$} (s10)
          edge [bend right, below] node [above] {$\quad 1\text{-}p$} (s11)  %
          edge [loop below] node [below] {$\quad \ p\ \ \ a_1\ $} (s00) %
    (s10) edge [loop right] node {$a_0$} (s10)
    (s11) edge node [above] {$a_0$} (s21)
    (s21) edge [loop right] node {$a_0$} (s21);
  \begin{scope}[on background layer]
    \draw[rounded corners, dashed, thick] 
      ($(s00.north west)+(-0.35,0.9)$) rectangle ($(s00.south east)+(0.35,-0.35)$);
  \end{scope}
  \node[anchor=north west] at ($(s00.north west)+(-0.25,0.85)$) {$\phi^{-1}(g_0)$};
  \begin{scope}[on background layer]
    \draw[rounded corners, dashed, thick] 
      ($(s10.north west)+(-0.35,0.35)$) rectangle ($(s11.south east)+(0.35,-0.35)$);
  \end{scope}
  \node[anchor=south west] at ($(s11.north west)+(-0.25,0.25)$) {$\phi^{-1}(g_1)$};
  \begin{scope}[on background layer]
    \draw[rounded corners, dashed, thick] 
      ($(s21.north west)+(-0.35,0.9)$) rectangle ($(s21.south east)+(0.35,-0.35)$);
  \end{scope}
  \node[anchor=south west] at ($(s21.north west)+(-0.25,0.25)$) {$\phi^{-1}(g_2)$};
\end{tikzpicture}
}
\vspace{-2mm}
\captionof{figure}{Controllable Markov Chain with $\phi: (x, y) \mapsto x$.}\label{fig:tikz}
\vspace{-4mm}
\end{figure}
\begin{proposition}\label{prop}
    Let us define the optimal classifier $C(s,g) = \mathbf{1}_{h(\phi(s),g) < \epsilon}$. 
    Given a set of visited states $\mathcal{P} \subseteq \mathcal{S}$, the current state $s \in \mathcal{P}$, and a goal sequence $g_{1:M} \in \mathcal{G}^M$, let the optimal hierarchical policy be $\pi_h^\star(s) = \pi^\star(s, g_{i+1})$, where $i$ is the smallest integer such that there exist a state $s_p \in \mathcal{P}$ with $h(\phi(s_p), g_i) < \epsilon$, and $i=0$ otherwise.
    There exists a controllable Markov Chain $\mathcal{M}$ and a realizable sequence of goals $g_{0:M}$ such that, under a suitable goal abstraction $\phi(\cdot)$, $\pi^\star_h$ will not reach all goals in the sequence, \ie $\rho^{\pi^\star_h}_N(g_i) = 0$ for some $i \in [0, \dots, M]$ and all $N \in \mathbb{N}$.
\end{proposition}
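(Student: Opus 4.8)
The plan is to instantiate the controllable Markov Chain drawn in \Figref{fig:tikz}. I take $\gS = \{(0,0),(1,0),(1,1),(2,1)\}$ with $\mu_0 = \delta_{(0,0)}$, goal abstraction $\phi(x,y) = x$, and goals $g_0 = 0$, $g_1 = 1$, $g_2 = 2$ (so $M = 2$), choosing $\epsilon$ small enough that $h(\phi(s), g_i) < \epsilon$ holds exactly when the first coordinate of $s$ equals $g_i$. The two actions behave as in the figure: from $(0,0)$, action $a_0$ moves deterministically to the \emph{trap} $(1,0)$, whereas $a_1$ stays at $(0,0)$ with probability $p \in (0,1)$ and jumps to $(1,1)$ otherwise; the states $(1,0)$ and $(2,1)$ are absorbing, and $(1,1)$ moves under $a_0$ to $(2,1)$. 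I would first record realizability: playing $a_1$ until the transition to $(1,1)$ fires and then $a_0$ yields, with probability one, the trajectory $(0,0) \to (1,1) \to (2,1)$, which visits states achieving $g_0, g_1, g_2$ in order, so $g_{0:2}$ is a realizable goal sequence.

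Next I would solve the first subgoal optimally. Since the reward is $-1$ per step until goal-reaching, $V^\pi(s,g)$ equals the (truncated) negative expected hitting time of $\phi^{-1}(g)$, so $\pi^\star(\cdot, g_1)$ minimizes the expected number of steps to $\phi^{-1}(g_1) = \{(1,0),(1,1)\}$. From $(0,0)$, action $a_0$ reaches this set in a single deterministic step (value $-1$), while $a_1$ reaches it only after a geometric number of steps of mean $1/(1-p) > 1$ (value $-1/(1-p)$). Hence for every $p \in (0,1)$ the action $a_0$ is strictly optimal, and the hierarchical agent — which, having achieved only $g_0$ at the start, targets its successor $g_1$ — is driven into the trap $(1,0)$.

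Finally, invoking the rule defining $\pi_h^\star$, I would close the loop. At $(1,0)$ we have $\phi((1,0)) = 1 = g_1$, so the optimal classifier marks $g_1$ as achieved and the agent advances its target to $g_2$; but $(1,0)$ is absorbing, so no action can ever move the first coordinate to $2$. Consequently the trajectory never enters $\phi^{-1}(g_2) = \{(2,1)\}$, giving $\Pr[\phi(s_t) = g_2] = 0$ for all $t$ and therefore $\rho_N^{\pi_h^\star}(g_2) = 0$ for every $N$ — the claimed failure with $i = M = 2$.

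The main obstacle is not any single calculation but making the greedy-versus-farsighted gap airtight. I must argue that the myopic subgoal solver \emph{must} pick $a_0$ (strict optimality, which genuinely needs $p > 0$ and a horizon $T_{\max}$ large enough that truncation does not rescue the slow action), while some occupancy-matching policy could instead reach all three goals via $(1,1)$; this contrast is what exhibits the suboptimality. I would also pin down the indexing/tie-breaking convention in the definition of $\pi_h^\star$, so that ``pursue $g_1$ first, then $g_2$'' is unambiguous, since that advancing behavior is precisely where the myopia is injected.
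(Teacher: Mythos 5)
Your proposal is correct and follows essentially the same route as the paper: it uses the exact Markov chain of \figref{fig:tikz} with $\phi(x,y)=x$, the goal sequence $(0,1,2)$, the observation that the greedy subgoal solver strictly prefers $a_0$ (the paper verifies this via the truncated geometric sum $\sum_{t=0}^{T_{\max}}-p^t < -1$, you via the expected hitting time $1/(1-p)>1$, which is the same comparison), and the conclusion that the absorbing trap $(1,0)$ makes $g_2$ unreachable, so $\rho^{\pi_h^\star}_N(g_2)=0$ for all $N$. Your added realizability check and the explicit classifier-advancement step at $(1,0)$ are fine elaborations of details the paper states more tersely.
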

\begin{proof}
    Consider the Markov Chain $\gM$ depicted in \figref{fig:tikz} with goal abstraction $\phi: (x, y) \mapsto x$ and $p > 0$.
    Now, consider the goal sequence $(g_0,g_1,g_2)=(0,1,2)$, which can only be achieved, by a policy taking action $a_1$ in the initial state $s_0 = (0,0)$.
    Consider $\pi^\star_h$ in $s_0$, with $\mathcal{P} = \{s_0\}$.
    The smallest integer $i$ such that $h(\phi(s_0), g_i) < \epsilon$ is $i = 0$, therefore $\pi_h^\star(s_0) = \pi^\star(s_0, g_1)$.
    We can then compare the state-action values $Q$ in $s_0$:
    \begin{align}
        Q^{\pi^\star(\cdot, g_1)}(s_0, a_1, g_1) &= \sum_{t=0}^{T_{\max}} -p^t = -1 \cdot \frac{1 - p^{(T_{\max} + 1)}}{1 - p} \nonumber\\ 
        &<  -1 = Q^{\pi^\star(\cdot, g_1)}(s_0, a_0, g_1).
    \end{align}
    This implies that $\pi_h^\star(s_0) = \pi^\star(s_0, 1) = a_0$.
    The next state visited by $\pi_h^\star$ will always be $(1, 0)$, from which $(2, 1)$ is not reachable, and $g_2$ is not achievable. We thus have $\rho^{\pi^\star_h}_N(g_2) = 0$ for all $N \in \mathbb{N}$.
\end{proof}

We remark that this issue arises in the presence of goal abstraction which plays a vital role in the partial demonstration setting we consider. 
Without goal abstraction, \ie, if each goal is fully specified, there is no leeway in how to achieve it for the policy (assuming $\epsilon \to 0$ as well).
Nevertheless, goal abstraction is ubiquitous in practice \citep{schaulUniversalValueFunction2015} and necessary to enable learning in complex environments \citep{andrychowiczHindsightExperienceReplay2018}.\looseness -1

\section{Optimal Transport for Zero-Shot IL}\label{sec:method}

Armed with recent tools in value estimation, model-based RL and trajectory optimization, we propose a method for zero-shot offline imitation learning that \textit{directly} optimizes the occupancy matching objective, introducing only minimal approximations.
As a result, the degree of myopia is greatly reduced, as we show empirically in \secref{sec:experiments}.

\fs{In particular, we propose to solve the occupancy matching problem in \eqref{eq:occ-match} by minimizing the Wasserstein-$1$ metric $W_1$ with respect to goal metric $h$ on the goal space $\gG$, \ie}
\begin{align}
  W_1(\rho_N^{\pi},\rho_N^E) = \text{OT}_h(\rho_N^{\pi},\rho_N^E) .  \label{eq:problem-def}
\end{align}
This objective involves two inaccessible quantities: goal occupancies $\rho_N^\pi, \rho_N^E$, as well as the goal metric $h$.
Our key contribution lies in how these quantities can be practically estimated, enabling optimization of the objective with scalable deep RL techniques.

\paragraph{Occupancy Estimation}
\fs{Since the expert's and the policy's occupancy are both inaccessible, we opt for discrete, sample-based approximations.}
In the case of the expert occupancy $\rho_N^E$, the single trajectory provided at inference $(g_0,\dots,g_M)$ represents a valid sample from it, and we use it directly.
For an arbitrary agent policy $\pi$, we use a discrete approximation after training a dynamics model $\Hat{P} \approx P$ on $\gD_\beta$, which can be done offline through standard supervised learning.
We can then approximate $\rho_N^\pi$ by jointly rolling out the learned dynamics model and the policy $\pi$.
We thus get the discrete approximations
\begin{align}
  \rho_N^E & \approx \Hat{\rho}_M^E = \frac{1}{M+1} \sum_{j=0}^M \delta_{g_j} \quad \text{and}\\
    \rho_N^\pi & \approx \Hat{\rho}_N^\pi = \frac{1}{N+1} \sum_{t=0}^{N} \delta_{\phi({s_t})}
\end{align}
where for the latter we sample 
\begin{align}
    s_0 \sim \mu_0, s_{t+1} \sim \Hat{P}(s_t, a_t), a_t \sim \pi(s_t).
\end{align}
Similarly, we can also obtain an estimate for the \textit{state} occupancy of $\pi$ as $\varrho_N^\pi \approx \Hat{\varrho}_N^\pi = \frac{1}{N+1}\sum_{t=0}^{N}\delta_{s_t}$.

\paragraph{Metric Approximation}
As $h$ may be unavailable or hard to specify in practical settings, we propose to train a goal-conditioned value function $V^\star$ from the offline data $\mathcal{D}_\beta$ and use the distance $d(s, g) = -V^\star(s,g)$ (i.e. the learned first hit time) as a proxy.
For a given state-goal pair $(s,g)$, this corresponds to the approximation $d(s, g) \approx h(\phi(s), g)$.
It is easy to show that a minimizer of $h(\phi(\cdot), g)$ also minimizes $d(\cdot, g)$.
Using $d$ also has the benefit of incorporating the dynamics of the MDP into the cost of the OT problem.
The use of this distance has seen some use as the cost function in Wasserstein metrics between state occupancies in the past \citep{durugkarAdversarialIntrinsicMotivation2021}.
As we show in \secref{sec:ablations}, $d$ is able to capture potential asymmetries in the MDP, while remaining informative of $h$.
We note that, while $h: \gG \times \gG \to \mathbb{R}$ is a distance in goal-space, $d: \gS \times \gG \to \mathbb{R}$ is a distance between states and goals. Nonetheless, $d$ remains applicable as the policy's occupancy can also be estimated in state spaces as $\Hat{\varrho}_N^\pi$.
Given the above considerations, we can rewrite our objective as the discrete optimal transport problem
\begin{align}
  \pi^\star = \argmin_{\pi} \text{OT}_{d}(\Hat{\varrho}_N^\pi, \Hat{\rho}_M^E). \label{eq:obj}
\end{align}
\paragraph{Optimization}
Having addressed density and metric approximations, we now focus on optimizing the objective in \eqref{eq:obj}.
Fortunately, as a discrete OT problem, the objective can be evaluated efficiently using Sinkhorn's algorithm when introducing entropic regularization with a factor $\eta$ \citep{cuturiSinkhornDistancesLightspeed2013,peyreComputationalOptimalTransport2020}.
A non-Markovian, deterministic policy optimizing the objective at state $s_k \in \mathcal{S}$ can be written as 
\begin{align}
  &\pi(s_{0:k}, g_{0:m})\\ 
  &\approx \argmin_{a_k} \min_{a_{k+1:N-1}} \text{OT}_{d,\eta}\left(\tfrac{1}{N+1} \sum_{i=0}^{N} \delta_{s_i}, \tfrac{1}{M+1}\sum_{j=0}^M \delta_{g_j}\right)\nonumber
  \label{eq:policy}
\end{align}
where $s_{0:k}$ are the states visited so far and $s_{k+1:N}$ are rolled out using the learned dynamics model $\Hat{P}$ and actions $a_{k:N-1}$.
Note that while $s_{0:k}$ are part of the objective, they are constant and are not actively optimized.

Intuitively, this optimization problem corresponds to finding the first action from a sequence ($a_{k:N-1}$) that minimizes the OT costs between the empirical expert goal occupancy, and the induced empirical policy state occupancy.
This type of optimization problem fits naturally into the framework of planning with zero-order optimizers and learned world models \citep{chua2018pets,haWorldModels2018}; while these algorithms are traditionally used for additive costs, the flexibility of zero-order optimizers \citep{rubinsteinCrossEntropyMethod2004,Williams2015MPPI,pinneriSampleefficientCrossEntropyMethod2020} allows a straightforward application to our problem. The objective in \eqref{eq:policy} can thus be directly optimized with CEM variants \citep{pinneriSampleefficientCrossEntropyMethod2020} or MPPI \citep{Williams2015MPPI}, in a model predictive control (MPC) fashion.

Like for other MPC approaches, we are forced to plan for a finite horizon $H$, which might be smaller than $N$, because of imperfections in the learned dynamics model or computational constraints.
This is referred to as receding horizon control \citep{Lee1967FoundationsOO}.
When the policy rollouts used for computing $\hat{\varrho}^\pi_N$ are truncated, it is also necessary to truncate the goal sequence to exclude any goals that cannot be reached within $H$ steps.
To this end, we train an extra value function $W$ that estimates the number of steps required to go from one goal to the next by regressing onto $V$, \ie by minimizing $\E_{s, s'\sim \gD_\beta}[(W(\phi(s);\phi(s'))-V(s;\phi(s')))^2]$.
For $i \in [0, \dots, M]$, we can then estimate the  time when $g_i$ should be reached as
\begin{align}
  t_i \approx -V(s_0; g_0) - \sum_{j=1}^i W(g_{j-1}; g_j). 
\end{align}
We then simply truncate the online problem to only consider goals relevant to $s_1,\dots,s_{k + H}$, \ie $g_0,\dots, g_K$ where $K = \min \{j: t_j \ge k + H\}$.
We note that this approximation of the infinite horizon objective can potentially result in myopic behavior if $K < M$; nonetheless, optimal behavior is recovered as the effective planning horizon increases. \Algref{alg:zilot} shows how the practical OT objective is computed.\looseness -1

\vspace{-2mm}
\begin{algorithm}[H]
    \caption{OT cost computation for ZILOT}\label{alg:zilot}
    \begin{algorithmic}
        \REQUIRE Pretrained GC value functions $V, W$ and dynamics model $\Hat{P}$; horizon $H$, solver iterations $r$ and regularization factor $\eta$.\\
        \vspace{4pt}
        \hspace{-9pt}\textbf{Initialization:} State $s_0$ and expert trajectory $g_{1:M}$, precomputed $t_{0:M}$ \\
        \vspace{4pt}
        \INPUT State history $s_{0:k}$, future actions $a_{k:k+H-1}$
        \STATE \quad \COMMENT{Rollout learned dynamics}
        \STATE $s_{k+1:k+H} \gets \texttt{rollout}(\Hat{P}, s_k, a_{k:k+H-1})$
        \STATE \quad \COMMENT{Compute which goals are reachable}
        \STATE $K \gets \min\{j: t_j \ge k + H\}$
        \STATE \quad \COMMENT{Compute cost matrix}
        \STATE $\emC_{ij} \gets -V(s_i; g_j),\ i \in \{0,\dots,k+H\}, j \in \{0,\dots,K\}$
        \STATE \quad \COMMENT{Compute uniform marginals}
        \STATE $\va \gets \tfrac{1}{k+H+1}\1_{k+H+1}, \vb \gets \tfrac{1}{K+1}\1_{K+1}$
        \STATE \quad \COMMENT{Run Sinkhorn Algorithm}
        \STATE $\mT \gets \texttt{sinkhorn}(\va, \vb, \mC, r, \eta)$
        \OUTPUT $\sum_{ij} \emT_{ij} \emC_{ij}$  \COMMENT{Return OT cost}
    \end{algorithmic}
\end{algorithm}
\vspace{-7mm}

\paragraph{Implementation}
\fs{The method presented relies solely on three learned components: a dynamics model $\Hat{P}$, and the state-goal and goal-goal GC value functions $V$ and $W$.
All of them can be learned offline from the dataset $\gD_\beta$.}
In practice, we found that several existing deep reinforcement learning frameworks can be easily adapted to learn these functions. 
We adopt TD-MPC2~\citep{hansenTDMPC2ScalableRobust2024}, a state of the art model-based algorithm that has shown promising results in single- and multitask online and offline RL.
We note that planning takes place in the latent space constructed by TD-MPC2's encoders.
We adapt the method to allow estimation of goal-conditioned value functions, as described in appendix~\ref{sec:impl-details}.
We follow prior work \citep{andrychowiczHindsightExperienceReplay2018,bagatellaGoalconditionedOfflinePlanning2023,tianModelBasedVisualPlanning2020} and sample goals from the future part of trajectories in $\gD_\beta$ in order to synthesize rewards without supervision.
We note that this goal-sampling method also does not require any knowledge of~$h$.

\section{Experiments}\label{sec:experiments}
This section constitutes an extensive empirical evaluation of ZILOT for zero-shot IL.
We first describe our experimental settings, and then present qualitative and quantitative result, as well as an ablation study. 
We consider a selection of 30 tasks defined over 5 environments, as summarized below and described in detail in appendix~\ref{sec:additional-results} and~\ref{sec:impl-details}.

\texttt{fetch} \citep{fetch} is a manipulation suite in which a robot arm either pushes (Push), or lifts (Pick\&Place) a cube towards a goal.
To illustrate the failure cases of myopic planning, we also evaluate a variation of Push (i.e. Slide), in which the table size exceeds the arm's range, the table's friction is reduced, and the arm is constrained to be touching the table.
As a result, the agent cannot fully constrain the cube, \eg by picking it up, or pressing on it, and the environment strongly punishes careless manipulation.
In all three environments, tasks consist of moving the cube along trajectories shaped like the letters ``S'', ``L'', and ``U''.

\texttt{halfcheetah} \citep{halfcheetah} is a classic Mujoco environment where the agent controls a cat-like agent in a 2D horizontal plane.
As this environment is not goal-conditioned by default, we choose the x-coordinate and the orientation of the cheetah as a meaninful goal-abstraction.
This allows the definition of tasks involving standing up and hopping on front or back legs, as well as doing flips.

\texttt{pointmaze} \citep{fuD4RLDatasetsDeep2021} involves maneuvering a pointmass through a maze via force control. Downstream tasks consist of following a series of waypoints through the maze.

\paragraph{Planners} 
The most natural comparison is the framework proposed by \citet{pathakZeroShotVisualImitation2018}, which addresses imitation through a hierarchical decomposition, as discussed in \secref{sec:gc-myopic}. 
Both hierarchical components are learned within TD-MPC2: the low-level goal-conditioned policy is by default part of TD-MPC2, while the goal-classifier (Cls) can be obtained by thresholding the learned value function $V$.
We privilege this baseline (\textbf{Policy+Cls}) by selecting the threshold minimizing $W_{\min}$ \textit{per environment} among the values $\{1,2,3,4,5\}$.
Moreover, we also compare to a version of this baseline replacing the low-level policy with zero-order optimization of the goal-conditioned value function (\textbf{MPC+Cls}), thus ablating any benefits resulting from model-based components.
We remark that all MPC methods use the same zero-order optimizer iCEM \citep{pinneriSampleefficientCrossEntropyMethod2020}.
We further compare ZILOT to \textbf{$\text{ER}_\text{FB}$} and \textbf{$\text{RER}_\text{FB}$}, two approaches that combine zero-shot RL and reward-based IL using the Forward-Backward (FB) framework \citep{fbil2024}. We refer the reader to appendix~\ref{sec:fb} for a discussion of all FB-IL approaches in our rough and partial setting. \looseness -1

\paragraph{Metrics} \fs{We report two metrics for evaluating planner performance.}
The first one is the minimal encountered (empirical) Wasserstein-$1$ distance under the goal metric $h$ of the agent's trajectory and the given goal sequence.
Formally, given trajectory $(s_0,\dots,s_N)$ and the goal sequence $(g_0,\dots,g_M)$ we define $W_{\min}(s_{0:N}, g_{1:M})$ as
\begin{align}
    \min_{k\in\{0,\dots,N\}} W_1\biggl( \frac{1}{k+1}\sum_{i=0}^k \delta_{\phi(s_i)}, \frac{1}{M+1}\sum_{j=0}^M \delta_{g_j}\biggr).
\end{align}
We introduce a secondary metric ``GoalFraction'', which represents the fraction of goals that are achieved in the order they were given.
Formally, this corresponds to the length of the longest subsequence of achieved goals that matches the desired order.

\subsection{Can ZILOT effectively imitate unseen trajectories?}
\begin{figure}
    \vspace{1mm}
    \centering
    \includegraphics[width=\columnwidth]{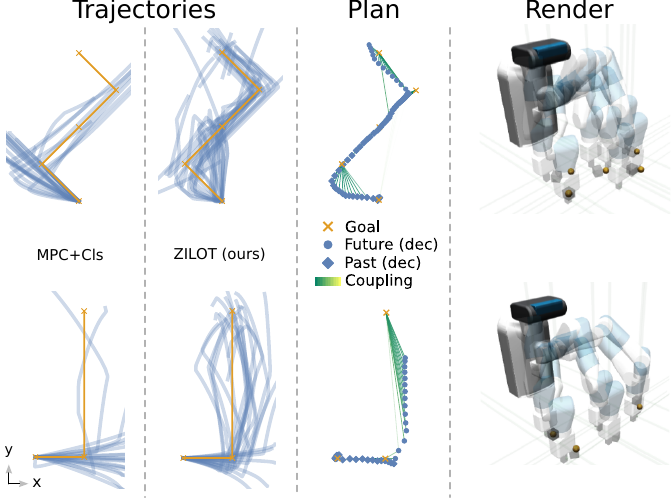}
    \vspace{-6mm}
    \caption{
        Example tasks in \texttt{fetch\_slide\_large\_2D}. The left two columns show five trajectories across five seeds of the myopic method MPC+Cls and ZILOT (ours).
        The trajectories are drawn in the $x$-$y$-plane of the goal space and just show the movement of the cube.
        ZILOT's behavior imitates the given goal trajectories more closely.
        On the right, we visualize the OT objective at around three quarters of the episode time.
        It includes both the past and planned future states, as well as their coupling to the goals.
        Note that planning occurs in the latent state of TD-MPC2, and separately trained decoders are used for this visualization.
    }\label{fig:qual-res}
    \vspace{-4mm}
\end{figure}
\begin{figure*}
    \vspace{-1mm}
    \centering
    \includegraphics[width=0.98\linewidth]{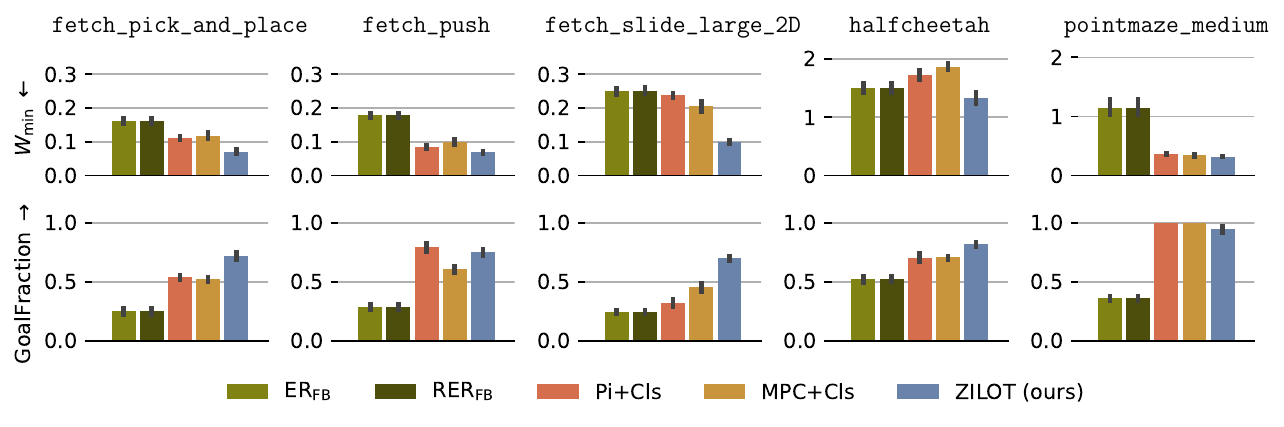}
    \vspace{-5mm}
    \caption{
    Performance comparison of ZILOT and other methods aggregated over environments.
    \Tabref{tab:baselines} reports more detailed results.
    }
    \label{fig:baselines}
    \vspace{-2mm}
\end{figure*}
We first set out to qualitatively evaluate whether the method is capable of imitation in complex environments, despite practical approximations. \Figref{fig:qual-res} illustrates how MPC+Cls and ZILOT imitate an expert sliding a cube across the big table of the \texttt{fetch\_slide\_large\_2D} environment.
The myopic baseline struggles to regain control over the cube after moving it towards the second goal, leading to trajectories that leave the manipulation range.
In contrast, ZILOT plans beyond the second goal.
As displayed in the middle part of \figref{fig:qual-res}, the coupling of the OT problem approximately pairs up each state in the planned trajectory with the appropriate goal, leading to closer imitation of the expert.

\subsection{How does ZILOT perform compared to prior methods?}
We provide a quantitative evaluation of ZILOT with respect to the other planners in \tabref{fig:baselines}. For more details we refer the reader to appendix~\ref{sec:additional-results}.
As ZILOT directly optimizes a distribution matching objective, it generally reproduces expert trajectories more closely, achieving a lower Wasserstein distance to its distribution.
This is especially evident in environments that are very punishing to myopic planning, such as the Fetch Slide environment shown in \figref{fig:qual-res}.
In most environments, our method also out-performs the baselines in terms of the fraction of goals reached.
In less punishing environments, ZILOT may sacrifice precision in achieving the next goal exactly for an overall closer match of the expert trajectory which is most clearly visible in the \texttt{pointmaze} environment.
We note that the performance of the two myopic baselines Pi+Cls and MPC+Cls are very similar, suggesting that the performance gap to ZILOT stems from the change in objective, rather than implementation or model-based components.
We suspect the origins of the subpar performance of $\text{ER}_\text{FB}$ and $\text{RER}_\text{FB}$ are two-fold.
First, the FB framework \citep{touatiLearningOneRepresentation2021,fbil2024} has been found to underperform in low-data regimes\citep{jeen2024}. 
Second, $\text{ER}_\text{FB}$ and $\text{RER}_\text{FB}$ use a regularized f-divergence objective, which translates to an RL problem with additive rewards.
As \citep{fbil2024} state, this regularization comes at a cost, particularly if states do not contain dynamical information or in ergodic MDPs. 
In this case, a policy can optimize the reward by remaining in the most likely expert state, yielding a degenerate solution. 
Conversely, such a solution would be discarded by ZILOT as it uses an unregularized objective.

\subsection{What matters for ZILOT?}\label{sec:ablations}
To validate some of our design choices we finally evaluate the following versions of our method.
\begin{itemize}[nosep]
  \item \textbf{OT+unbalanced}, our method with unbalanced OT \citep{lieroOptimalEntropyTransportProblems2018,sejourneSinkhornDivergencesUnbalanced2023}, which turns the hard marginal constraint $\mathcal{U}$ (see \secref{sec:ot}) into a soft constraint. We use this method to address the fact that a rough expert trajectory may not necessarily yield a feasible expert occupancy approximation.
  \item \textbf{OT+Cls}, a version of our method which uses the classifier (Cls) (with the same  hyperparameter search)  to discards all goals that are recognized as reached. This allows this method to only consider future goals and states in the OT objective.
  \item \textbf{OT+$h$}, our method with the goal metric $h$ on $\gG$ as the cost function in the OT problem, replacing $d$.
\end{itemize}
Our results are summarized in \figref{fig:ablations}.
First, we see that using unbalanced OT does not yield significant improvements.
Second, using a goal-classifier can have a bad impact on matching performance.
We suspect this is the case because keeping track of the history of states gives a better, more informative, estimate of which part of the expert occupancy has already been fulfilled.
Finally, we observe that the goal metric $h$ may not be preferable to $d$, even if it is available.
We mainly attribute this to the fact that, in the considered environments, any action directly changes the state occupancy, but the same cannot be said for the goal occupancy.
Since $h$ only allows for the comparison of goal occupancies, the optimization landscape can be very flat in situations where most actions do not change the future state trajectory under goal abstraction, such as the start of \texttt{fetch} tasks as visible in its achieved trajectories in the figures in appendix~\ref{sec:add-qual-res}.
Furthermore, while $h$ is locally accurate, it ignores the global geometry of MDPs, as shown by its poor performance in strongly asymmetric environments (i.e., \texttt{halfcheetah}).

\begin{figure*}
    \vspace{-1mm}
    \centering
    \includegraphics[width=0.98\linewidth]{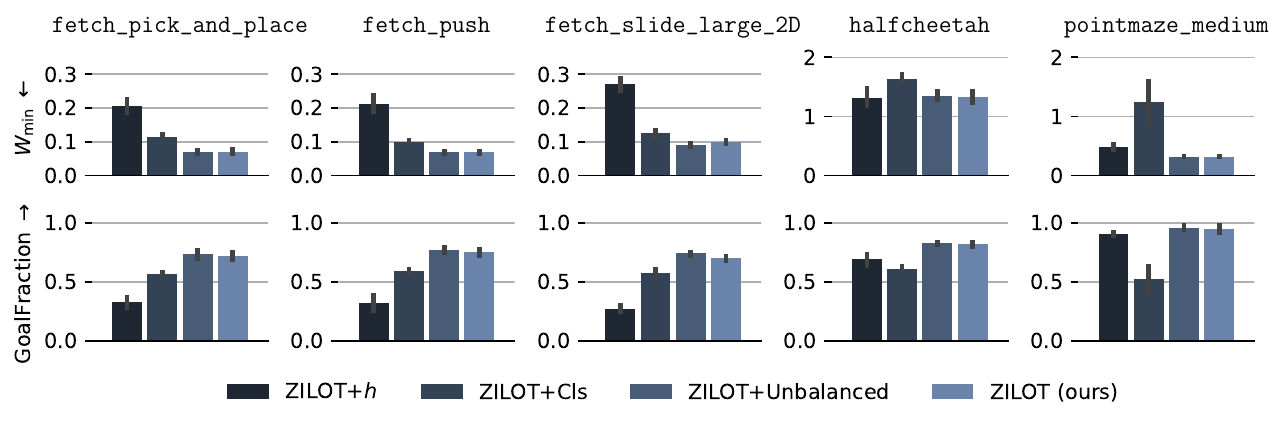}
    \vspace{-5mm}
    \caption{
    Ablation of design choices in ZILOT, including coupling constraints (OT+unbalanced), partial trajectory matching (OT+Cls), and the approximation of $h$ by $d$ (OT+$h$).
    For more detailed results, please refer to \tabref{tab:ablations}.
    }
    \label{fig:ablations}
    \vspace{-2mm}
\end{figure*}

\section{Related Work}\label{sec:related-work}

\textbf{Zero-shot IL} \quad
When a substantial amount of compute is allowed at inference time, several existing methods leverage pretrained models to infer actions, and retrieve an imitator policy via behavior cloning \citep{panZeroshotImitationLearning2020,zhangActionInferenceMaximising2023,torabiBehavioralCloningObservation2018}. As already discussed in \secref{sec:gc-myopic}, most (truly) zero-shot methods cast the problem of imitating an expert demonstration as following the sequence of its observations \citep{pathakZeroShotVisualImitation2018,haoSOZILSelfOptimalZeroShot2023}.
Expert demonstrations are then imitated by going from one goal to the next using a goal-conditioned policy.
In contrast, our work proposes a holistic approach to imitation, which considers all goals within the planning horizon.

\textbf{Zero-Shot RL}\quad
Vast amounts of effort have been dedicated to learning generalist agents without supervision, both on the theoretical \citep{touatiLearningOneRepresentation2021,touatiDoesZeroShotReinforcement2023} and practical side \citep{laskinURLBUnsupervisedReinforcement2021,mendoncaDiscoveringAchievingGoals2021}. 
Among others, \citep{sancaktarCuriousExplorationStructured2022,vlastelicaRiskAverseZeroOrderTrajectory2022,bagatellaGoalconditionedOfflinePlanning2023} learn a dynamics model through curious exploration and show how it can be leveraged to optimize additive objectives.
More recently, \citet{fransUnsupervisedZeroShotReinforcement2024} use Functional Reward Encodings to  encode arbitrary additive reward functions in a latent that is used to condition a policy.
While these approaches are effective in a standard RL setting, they are not suitable to solve instances of global RL problems \citep{desantiGlobalReinforcementLearning2024} (i.e., distribution matching). One notable exception is the forward-backward framework~\citep{touatiLearningOneRepresentation2021,fbil2024}, which we discuss in detail in appendix~\ref{sec:fb}.

\textbf{Imitation Learning}\quad
A range of recent work has been focused on training agents that imitate experts from their trajectories by matching state, state-action, or state-next-state occupancies depending on what is available.
These methods either directly optimize various distribution matching objectives \citep{liuCEILGeneralizedContextual2023,maVersatileOfflineImitation2022} or recover a reward using Generative Adversarial Networks (GAN) \citep{hoGenerativeAdversarialImitation2016,liLearningAgileSkills2022} or in one instance OT \citep{luoOptimalTransportOffline2022}.
Another line of work has shown impressive real-world results by matching the action distributions \citep{shafiullahBehaviorTransformersCloning2022,florenceImplicitBehavioralCloning2021,chiDiffusionPolicyVisuomotor2024} directly.
All these approaches do not operate in a zero-shot fashion, or need ad-hoc data collection.

\textbf{OT in RL}\quad
Previous works have often used OT as a reward signal in RL.
One application is online fine-tuning, where a policy's rollouts are rewarded in proportion to how closely they match expert trajectories \citep{dadashiPrimalWassersteinImitation2021,haldarWatchMatchSupercharging2023}.
\citet{luoOptimalTransportOffline2022} instead use a similar trajectory matching strategy to recover reward labels for unlabelled mixed-quality offline datasets.
Most of the works mentioned above rely on simple Cosine similarities and Euclidean distances as cost-functions their OT problems..

\section{Discussion}\label{sec:conclusion}
In this work, we point out a failure-mode of current zero-shot IL methods that cast imitating an expert demonstration as following a sequence of goals with myopic GC-RL policies.
We address this issue by framing the problem as occupancy matching.
By introducing discretizations and minimal approximations, we derive an Optimal Transportation problem that can be directly optimized at inference time using a learned dynamics model, goal-conditioned value functions, and zero-order optimizer. %
Our experimental results across various environments and tasks show that our approach outperforms state-of-the-art zero-shot IL methods, particularly in scenarios where non-myopic planning is crucial.
We additionally validate our design choices through a series of ablations.

From a practical standpoint, our method is mainly limited in its reliance on a world model. 
As the inaccuracy and computational cost of learned dynamics models increase with the prediction horizon, we are forced to optimize a fixed-horizon objective. 
This may reintroduce a slight degree of myopia that could lead to actions which cause suboptimal behavior beyond the planning horizon.
This, however, was not a practical issue in our empirical validation, and we expect our framework to further benefit as the accuracy of learned world models improves.
From a theoretical standpoint, ZILOT induces a non-Markovian policy, even when expert trajectories are collected by a Markovian policy, and a Markovian policy would thus be sufficient for imitation.
While the space of non-Markovian policies is larger, we find ZILOT to be able to efficiently find a near-optimal policy.
This aligns with the fact that several practical zero-shot IL algorithms are based on efficient search over non-Markovian policies (\eg those based on a goal-reaching policy and a classifier \citep{pathak19disagreement,fbil2024}).

\section*{Impact Statement}
Advancements in imitation learning may lead to more capable robotic systems across a variety of application domains. While such systems could have societal implications depending on their use cases, our contributions are algorithmic rather than domain-specific.

\section*{Acknowledgments}
We thank Anselm Paulus, Mikel Zhobro, and Núria Armengol Urpí for their help throughout the project.
Marco Bagatella and Jonas Frey are supported by the Max Planck ETH Center for Learning Systems. 
Georg Martius is a member of the Machine Learning Cluster of Excellence, EXC number 2064/1 – Project number 390727645.
We acknowledge the support from the German Federal Ministry of Education and Research (BMBF) through the Tübingen AI Center (FKZ: 01IS18039A).
Finally, this work was supported by the ERC -- 101045454 REAL-RL.

\bibliography{references}
\bibliographystyle{icml2025_titleurl}

\newpage
\appendix
\onecolumn

\renewcommand{\textfraction}{.0}
\renewcommand{\topfraction}{1.0}
\renewcommand{\bottomfraction}{1.0}
\renewcommand{\floatpagefraction}{1.0}

\section{Additional Results} \label{sec:additional-results}
\subsection{Main Result Details}
In tables~\ref{tab:baselines} and~\ref{tab:ablations} we provide detailed results for the figures~\ref{fig:baselines} and~\ref{fig:ablations}.
We also provide a summary of all planners we evaluated in \figref{fig:all}.

\begin{figure}[h]
    \vspace{-1mm}
    \centering
    \includegraphics[width=\linewidth]{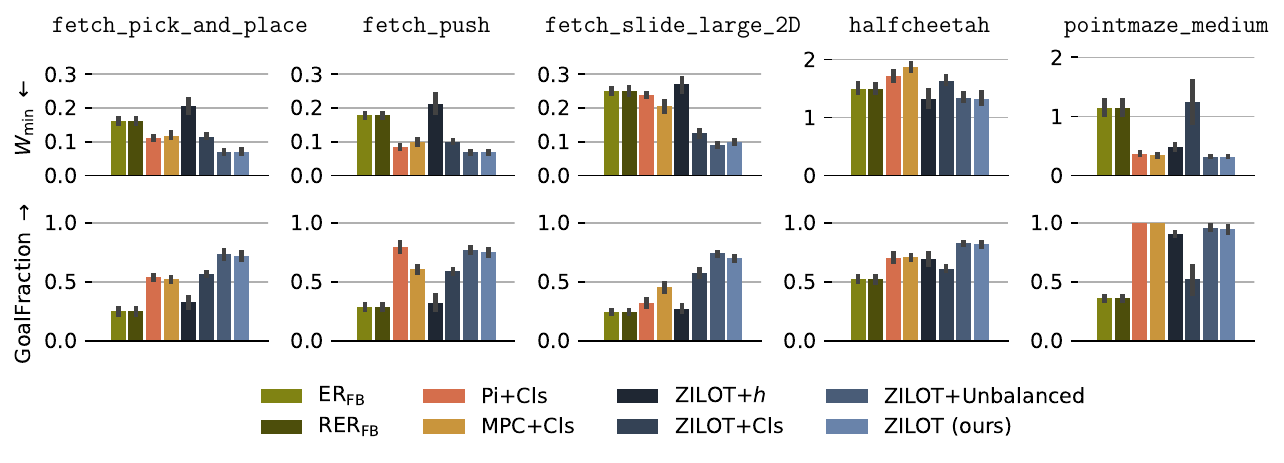}
    \vspace{-10mm}
    \caption{Summarized performance of all discussed Planners. See \tabref{tab:baselines} and \tabref{tab:ablations} for detailed results.}
    \label{fig:all}
    \vspace{-2mm}
\end{figure}

\begin{table}[h]
    \centering
    \caption{Performance of Pi+Cls, MPC+Cls and ZILOT (ours) in all environments and tasks. Each metric is the mean over 20 trials, we then report the mean and standard deviation of those metrics across 5 seeds. We perform a Welch $t$-test with $p=0.05$ do distinguish the best values and mark them bold. Values are rounded to 3 and 2 digits respectively.} \label{tab:baselines}
    \adjustbox{max width=\linewidth}{\begin{tabular}{l|ccccc|ccccc}
\toprule
 Task & \multicolumn{5}{c|}{$W_{\min}$ $\downarrow$} & \multicolumn{5}{c}{GoalFraction $\uparrow$} \\
 & ${\text{ER}}_\text{FB}$ & ${\text{RER}}_\text{FB}$ & Pi+Cls & MPC+Cls & ZILOT (ours) & ${\text{ER}}_\text{FB}$ & ${\text{RER}}_\text{FB}$ & Pi+Cls & MPC+Cls & ZILOT (ours) \\
\midrule
\texttt{fetch\_pick\_and\_place-L-dense} & 0.233$\pm$0.017 & 0.233$\pm$0.017 & 0.089$\pm$0.027 & 0.109$\pm$0.024 & \textbf{0.049$\pm$0.019} & 0.17$\pm$0.03 & 0.17$\pm$0.03 & 0.65$\pm$0.11 & 0.58$\pm$0.07 & \textbf{0.88$\pm$0.07} \\
\texttt{fetch\_pick\_and\_place-L-sparse} & 0.170$\pm$0.011 & 0.170$\pm$0.011 & 0.112$\pm$0.014 & 0.127$\pm$0.022 & \textbf{0.092$\pm$0.015} & 0.35$\pm$0.04 & 0.35$\pm$0.04 & \textbf{0.62$\pm$0.05} & 0.43$\pm$0.04 & \textbf{0.65$\pm$0.05} \\
\texttt{fetch\_pick\_and\_place-S-dense} & 0.183$\pm$0.019 & 0.183$\pm$0.019 & 0.113$\pm$0.022 & 0.101$\pm$0.022 & \textbf{0.049$\pm$0.014} & 0.16$\pm$0.04 & 0.16$\pm$0.04 & 0.41$\pm$0.07 & 0.62$\pm$0.08 & \textbf{0.85$\pm$0.08} \\
\texttt{fetch\_pick\_and\_place-S-sparse} & 0.098$\pm$0.008 & 0.098$\pm$0.008 & \textbf{0.081$\pm$0.017} & 0.091$\pm$0.007 & \textbf{0.067$\pm$0.006} & 0.33$\pm$0.08 & 0.33$\pm$0.08 & 0.57$\pm$0.06 & 0.50$\pm$0.04 & \textbf{0.70$\pm$0.06} \\
\texttt{fetch\_pick\_and\_place-U-dense} & 0.124$\pm$0.021 & 0.124$\pm$0.021 & 0.127$\pm$0.007 & 0.116$\pm$0.015 & \textbf{0.068$\pm$0.005} & 0.21$\pm$0.08 & 0.21$\pm$0.08 & 0.47$\pm$0.10 & 0.60$\pm$0.03 & \textbf{0.70$\pm$0.02} \\
\texttt{fetch\_pick\_and\_place-U-sparse} & 0.163$\pm$0.028 & 0.163$\pm$0.028 & 0.142$\pm$0.005 & 0.160$\pm$0.008 & \textbf{0.098$\pm$0.003} & 0.30$\pm$0.07 & 0.30$\pm$0.07 & \textbf{0.51$\pm$0.02} & 0.38$\pm$0.03 & \textbf{0.55$\pm$0.05} \\
\midrule
\texttt{fetch\_pick\_and\_place-all} & 0.162$\pm$0.010 & 0.162$\pm$0.010 & 0.111$\pm$0.007 & 0.117$\pm$0.012 & \textbf{0.070$\pm$0.009} & 0.25$\pm$0.04 & 0.25$\pm$0.04 & 0.54$\pm$0.02 & 0.52$\pm$0.02 & \textbf{0.72$\pm$0.04} \\
\midrule
\texttt{fetch\_push-L-dense} & 0.246$\pm$0.001 & 0.246$\pm$0.001 & 0.056$\pm$0.001 & 0.085$\pm$0.018 & \textbf{0.041$\pm$0.015} & 0.15$\pm$0.00 & 0.15$\pm$0.00 & \textbf{0.96$\pm$0.03} & 0.72$\pm$0.09 & \textbf{0.91$\pm$0.06} \\
\texttt{fetch\_push-L-sparse} & 0.184$\pm$0.014 & 0.184$\pm$0.014 & 0.101$\pm$0.011 & 0.103$\pm$0.010 & \textbf{0.082$\pm$0.004} & 0.35$\pm$0.04 & 0.35$\pm$0.04 & \textbf{0.65$\pm$0.09} & 0.44$\pm$0.04 & \textbf{0.69$\pm$0.06} \\
\texttt{fetch\_push-S-dense} & 0.182$\pm$0.019 & 0.182$\pm$0.019 & 0.077$\pm$0.024 & 0.104$\pm$0.026 & \textbf{0.049$\pm$0.010} & 0.25$\pm$0.05 & 0.25$\pm$0.05 & \textbf{0.83$\pm$0.09} & 0.70$\pm$0.08 & \textbf{0.87$\pm$0.08} \\
\texttt{fetch\_push-S-sparse} & 0.123$\pm$0.010 & 0.123$\pm$0.010 & \textbf{0.062$\pm$0.004} & 0.077$\pm$0.004 & \textbf{0.064$\pm$0.006} & 0.36$\pm$0.08 & 0.36$\pm$0.08 & \textbf{0.90$\pm$0.07} & 0.65$\pm$0.04 & 0.72$\pm$0.06 \\
\texttt{fetch\_push-U-dense} & 0.141$\pm$0.011 & 0.141$\pm$0.011 & \textbf{0.102$\pm$0.044} & 0.091$\pm$0.009 & \textbf{0.065$\pm$0.004} & 0.28$\pm$0.07 & 0.28$\pm$0.07 & \textbf{0.72$\pm$0.18} & 0.67$\pm$0.08 & \textbf{0.77$\pm$0.02} \\
\texttt{fetch\_push-U-sparse} & 0.195$\pm$0.024 & 0.195$\pm$0.024 & \textbf{0.106$\pm$0.014} & 0.131$\pm$0.012 & \textbf{0.109$\pm$0.007} & 0.32$\pm$0.03 & 0.32$\pm$0.03 & \textbf{0.70$\pm$0.12} & 0.45$\pm$0.05 & 0.53$\pm$0.03 \\
\midrule
\texttt{fetch\_push-all} & 0.178$\pm$0.008 & 0.178$\pm$0.008 & 0.084$\pm$0.007 & 0.098$\pm$0.010 & \textbf{0.068$\pm$0.005} & 0.29$\pm$0.02 & 0.29$\pm$0.02 & \textbf{0.79$\pm$0.05} & 0.61$\pm$0.03 & \textbf{0.75$\pm$0.03} \\
\midrule
\texttt{fetch\_slide\_large\_2D-L-dense} & 0.282$\pm$0.014 & 0.282$\pm$0.014 & 0.258$\pm$0.022 & 0.217$\pm$0.034 & \textbf{0.074$\pm$0.011} & 0.17$\pm$0.04 & 0.17$\pm$0.04 & 0.26$\pm$0.06 & 0.40$\pm$0.11 & \textbf{0.76$\pm$0.03} \\
\texttt{fetch\_slide\_large\_2D-L-sparse} & 0.255$\pm$0.007 & 0.255$\pm$0.007 & 0.223$\pm$0.014 & 0.185$\pm$0.027 & \textbf{0.120$\pm$0.011} & 0.38$\pm$0.05 & 0.38$\pm$0.05 & 0.47$\pm$0.10 & \textbf{0.70$\pm$0.05} & \textbf{0.73$\pm$0.04} \\
\texttt{fetch\_slide\_large\_2D-S-dense} & 0.232$\pm$0.029 & 0.232$\pm$0.029 & 0.299$\pm$0.006 & 0.254$\pm$0.022 & \textbf{0.111$\pm$0.010} & 0.19$\pm$0.05 & 0.19$\pm$0.05 & 0.21$\pm$0.10 & 0.31$\pm$0.06 & \textbf{0.51$\pm$0.07} \\
\texttt{fetch\_slide\_large\_2D-S-sparse} & 0.215$\pm$0.014 & 0.215$\pm$0.014 & 0.266$\pm$0.006 & 0.230$\pm$0.021 & \textbf{0.086$\pm$0.015} & 0.28$\pm$0.04 & 0.28$\pm$0.04 & 0.31$\pm$0.02 & 0.43$\pm$0.02 & \textbf{0.74$\pm$0.04} \\
\texttt{fetch\_slide\_large\_2D-U-dense} & 0.225$\pm$0.051 & 0.226$\pm$0.051 & 0.214$\pm$0.029 & 0.191$\pm$0.045 & \textbf{0.076$\pm$0.009} & 0.14$\pm$0.03 & 0.14$\pm$0.03 & 0.30$\pm$0.07 & 0.35$\pm$0.10 & \textbf{0.76$\pm$0.04} \\
\texttt{fetch\_slide\_large\_2D-U-sparse} & 0.291$\pm$0.049 & 0.294$\pm$0.048 & 0.169$\pm$0.043 & 0.150$\pm$0.012 & \textbf{0.120$\pm$0.005} & 0.30$\pm$0.04 & 0.29$\pm$0.04 & 0.36$\pm$0.09 & 0.53$\pm$0.04 & \textbf{0.70$\pm$0.06} \\
\midrule
\texttt{fetch\_slide\_large\_2D-all} & 0.250$\pm$0.012 & 0.251$\pm$0.012 & 0.238$\pm$0.008 & 0.205$\pm$0.020 & \textbf{0.098$\pm$0.007} & 0.24$\pm$0.02 & 0.24$\pm$0.02 & 0.32$\pm$0.04 & 0.45$\pm$0.04 & \textbf{0.70$\pm$0.02} \\
\midrule
\texttt{halfcheetah-backflip} & \textbf{2.002$\pm$0.149} & \textbf{2.002$\pm$0.149} & 3.089$\pm$0.588 & 4.281$\pm$0.371 & \textbf{2.625$\pm$0.780} & \textbf{0.44$\pm$0.08} & \textbf{0.44$\pm$0.08} & 0.28$\pm$0.13 & 0.12$\pm$0.12 & \textbf{0.57$\pm$0.17} \\
\texttt{halfcheetah-backflip-running} & 2.853$\pm$0.104 & 2.853$\pm$0.104 & 2.879$\pm$0.427 & 3.044$\pm$0.752 & \textbf{2.171$\pm$0.454} & 0.06$\pm$0.05 & 0.06$\pm$0.05 & 0.44$\pm$0.10 & \textbf{0.46$\pm$0.18} & \textbf{0.58$\pm$0.11} \\
\texttt{halfcheetah-frontflip} & \textbf{1.286$\pm$0.059} & \textbf{1.286$\pm$0.059} & 1.544$\pm$0.127 & 1.695$\pm$0.147 & \textbf{1.295$\pm$0.094} & 0.73$\pm$0.22 & 0.73$\pm$0.22 & 0.77$\pm$0.09 & 0.79$\pm$0.12 & \textbf{1.00$\pm$0.00} \\
\texttt{halfcheetah-frontflip-running} & \textbf{2.137$\pm$0.204} & \textbf{2.137$\pm$0.204} & 2.086$\pm$0.133 & 2.083$\pm$0.104 & \textbf{1.955$\pm$0.057} & 0.27$\pm$0.08 & 0.27$\pm$0.08 & 0.70$\pm$0.08 & \textbf{0.81$\pm$0.07} & \textbf{0.85$\pm$0.03} \\
\texttt{halfcheetah-hop-backward} & 0.910$\pm$0.316 & 0.910$\pm$0.316 & 0.806$\pm$0.110 & 0.950$\pm$0.075 & \textbf{0.589$\pm$0.107} & 0.65$\pm$0.19 & 0.65$\pm$0.19 & \textbf{0.96$\pm$0.03} & 0.90$\pm$0.02 & \textbf{0.96$\pm$0.03} \\
\texttt{halfcheetah-hop-forward} & \textbf{1.418$\pm$0.332} & \textbf{1.418$\pm$0.332} & 1.580$\pm$0.069 & 1.392$\pm$0.206 & \textbf{1.101$\pm$0.152} & 0.43$\pm$0.09 & 0.43$\pm$0.09 & \textbf{0.51$\pm$0.07} & \textbf{0.62$\pm$0.14} & \textbf{0.58$\pm$0.12} \\
\texttt{halfcheetah-run-backward} & 0.667$\pm$0.079 & 0.667$\pm$0.079 & 0.897$\pm$0.092 & 0.679$\pm$0.035 & \textbf{0.489$\pm$0.167} & 0.81$\pm$0.09 & 0.81$\pm$0.09 & \textbf{0.96$\pm$0.04} & \textbf{1.00$\pm$0.00} & \textbf{0.99$\pm$0.01} \\
\texttt{halfcheetah-run-forward} & 0.712$\pm$0.064 & 0.712$\pm$0.064 & 0.857$\pm$0.044 & 0.822$\pm$0.206 & \textbf{0.376$\pm$0.019} & 0.76$\pm$0.07 & 0.76$\pm$0.07 & \textbf{1.00$\pm$0.01} & \textbf{0.94$\pm$0.08} & \textbf{1.00$\pm$0.00} \\
\midrule
\texttt{halfcheetah-all} & 1.498$\pm$0.105 & 1.498$\pm$0.105 & 1.717$\pm$0.101 & 1.868$\pm$0.079 & \textbf{1.325$\pm$0.123} & 0.52$\pm$0.03 & 0.52$\pm$0.03 & 0.70$\pm$0.05 & 0.71$\pm$0.02 & \textbf{0.82$\pm$0.02} \\
\midrule
\texttt{pointmaze\_medium-circle-dense} & 1.128$\pm$0.250 & 1.128$\pm$0.250 & 0.243$\pm$0.038 & 0.221$\pm$0.021 & \textbf{0.156$\pm$0.010} & 0.18$\pm$0.06 & 0.18$\pm$0.06 & \textbf{1.00$\pm$0.00} & \textbf{1.00$\pm$0.00} & \textbf{1.00$\pm$0.00} \\
\texttt{pointmaze\_medium-circle-sparse} & 1.483$\pm$0.410 & 1.483$\pm$0.410 & \textbf{0.385$\pm$0.015} & \textbf{0.404$\pm$0.025} & 0.466$\pm$0.024 & 0.22$\pm$0.00 & 0.22$\pm$0.00 & \textbf{1.00$\pm$0.00} & \textbf{1.00$\pm$0.00} & 0.81$\pm$0.11 \\
\texttt{pointmaze\_medium-path-dense} & 0.900$\pm$0.317 & 0.900$\pm$0.317 & 0.275$\pm$0.063 & 0.235$\pm$0.023 & \textbf{0.199$\pm$0.013} & 0.56$\pm$0.11 & 0.56$\pm$0.11 & \textbf{1.00$\pm$0.00} & \textbf{1.00$\pm$0.00} & \textbf{1.00$\pm$0.00} \\
\texttt{pointmaze\_medium-path-sparse} & 1.086$\pm$0.505 & 1.086$\pm$0.505 & 0.555$\pm$0.080 & 0.511$\pm$0.035 & \textbf{0.459$\pm$0.015} & 0.48$\pm$0.10 & 0.48$\pm$0.10 & \textbf{1.00$\pm$0.00} & \textbf{1.00$\pm$0.00} & \textbf{0.97$\pm$0.03} \\
\midrule
\texttt{pointmaze\_medium-all} & 1.149$\pm$0.163 & 1.149$\pm$0.163 & 0.365$\pm$0.021 & 0.343$\pm$0.023 & \textbf{0.320$\pm$0.009} & 0.36$\pm$0.02 & 0.36$\pm$0.02 & \textbf{1.00$\pm$0.00} & \textbf{1.00$\pm$0.00} & 0.94$\pm$0.04 \\
\midrule
\bottomrule
\end{tabular}
}
\end{table}

\begin{table}[h]
    \centering
    \caption{Performance of our method and its ablations in all environments and tasks. Each metric is the mean over 20 trials, we then report the mean and standard deviation of those metrics across 5 seeds. We perform a Welch $t$-test with $p=0.05$ do distinguish the best values and mark them bold. Values are rounded to 3 and 2 digits respectively.} \label{tab:ablations}
    \adjustbox{max width=\linewidth}{\begin{tabular}{l|cccc|cccc}
\toprule
 Task & \multicolumn{4}{c|}{$W_{\min}$ $\downarrow$} & \multicolumn{4}{c}{GoalFraction $\uparrow$} \\
 & ZILOT+$h$ & ZILOT+Cls & ZILOT+Unbalanced & ZILOT (ours) & ZILOT+$h$ & ZILOT+Cls & ZILOT+Unbalanced & ZILOT (ours) \\
\midrule
\texttt{fetch\_pick\_and\_place-L-dense} & 0.214$\pm$0.033 & 0.091$\pm$0.011 & \textbf{0.052$\pm$0.018} & \textbf{0.049$\pm$0.019} & 0.26$\pm$0.10 & 0.68$\pm$0.04 & \textbf{0.84$\pm$0.07} & \textbf{0.88$\pm$0.07} \\
\texttt{fetch\_pick\_and\_place-L-sparse} & 0.188$\pm$0.014 & 0.158$\pm$0.004 & \textbf{0.095$\pm$0.016} & \textbf{0.092$\pm$0.015} & 0.40$\pm$0.01 & 0.35$\pm$0.02 & \textbf{0.65$\pm$0.08} & \textbf{0.65$\pm$0.05} \\
\texttt{fetch\_pick\_and\_place-S-dense} & 0.198$\pm$0.042 & 0.089$\pm$0.019 & \textbf{0.045$\pm$0.006} & \textbf{0.049$\pm$0.014} & 0.36$\pm$0.15 & 0.71$\pm$0.07 & \textbf{0.86$\pm$0.03} & \textbf{0.85$\pm$0.08} \\
\texttt{fetch\_pick\_and\_place-S-sparse} & 0.174$\pm$0.029 & 0.115$\pm$0.009 & \textbf{0.056$\pm$0.008} & 0.067$\pm$0.006 & 0.42$\pm$0.08 & 0.57$\pm$0.02 & \textbf{0.76$\pm$0.08} & \textbf{0.70$\pm$0.06} \\
\texttt{fetch\_pick\_and\_place-U-dense} & 0.237$\pm$0.043 & 0.071$\pm$0.006 & \textbf{0.060$\pm$0.008} & \textbf{0.068$\pm$0.005} & 0.17$\pm$0.10 & \textbf{0.74$\pm$0.04} & \textbf{0.75$\pm$0.04} & 0.70$\pm$0.02 \\
\texttt{fetch\_pick\_and\_place-U-sparse} & 0.229$\pm$0.034 & 0.167$\pm$0.004 & \textbf{0.101$\pm$0.008} & \textbf{0.098$\pm$0.003} & 0.34$\pm$0.04 & 0.33$\pm$0.05 & \textbf{0.54$\pm$0.05} & \textbf{0.55$\pm$0.05} \\
\midrule
\texttt{fetch\_pick\_and\_place-all} & 0.207$\pm$0.026 & 0.115$\pm$0.007 & \textbf{0.068$\pm$0.008} & \textbf{0.070$\pm$0.009} & 0.32$\pm$0.06 & 0.56$\pm$0.02 & \textbf{0.73$\pm$0.05} & \textbf{0.72$\pm$0.04} \\
\midrule
\texttt{fetch\_push-L-dense} & 0.211$\pm$0.020 & 0.071$\pm$0.006 & \textbf{0.040$\pm$0.004} & \textbf{0.041$\pm$0.015} & 0.27$\pm$0.06 & 0.73$\pm$0.02 & \textbf{0.91$\pm$0.03} & \textbf{0.91$\pm$0.06} \\
\texttt{fetch\_push-L-sparse} & 0.200$\pm$0.022 & 0.150$\pm$0.005 & 0.101$\pm$0.014 & \textbf{0.082$\pm$0.004} & 0.39$\pm$0.06 & 0.36$\pm$0.03 & \textbf{0.65$\pm$0.07} & \textbf{0.69$\pm$0.06} \\
\texttt{fetch\_push-S-dense} & 0.203$\pm$0.046 & 0.077$\pm$0.008 & \textbf{0.049$\pm$0.010} & \textbf{0.049$\pm$0.010} & 0.32$\pm$0.14 & 0.72$\pm$0.05 & \textbf{0.86$\pm$0.05} & \textbf{0.87$\pm$0.08} \\
\texttt{fetch\_push-S-sparse} & 0.197$\pm$0.055 & 0.097$\pm$0.006 & \textbf{0.060$\pm$0.009} & \textbf{0.064$\pm$0.006} & 0.40$\pm$0.17 & 0.56$\pm$0.02 & \textbf{0.78$\pm$0.06} & \textbf{0.72$\pm$0.06} \\
\texttt{fetch\_push-U-dense} & 0.228$\pm$0.045 & 0.068$\pm$0.007 & \textbf{0.058$\pm$0.009} & \textbf{0.065$\pm$0.004} & 0.20$\pm$0.10 & \textbf{0.78$\pm$0.04} & \textbf{0.81$\pm$0.03} & 0.77$\pm$0.02 \\
\texttt{fetch\_push-U-sparse} & 0.224$\pm$0.047 & 0.136$\pm$0.017 & \textbf{0.100$\pm$0.007} & 0.109$\pm$0.007 & 0.36$\pm$0.07 & 0.39$\pm$0.05 & \textbf{0.61$\pm$0.05} & 0.53$\pm$0.03 \\
\midrule
\texttt{fetch\_push-all} & 0.211$\pm$0.033 & 0.100$\pm$0.006 & \textbf{0.068$\pm$0.005} & \textbf{0.068$\pm$0.005} & 0.32$\pm$0.08 & 0.59$\pm$0.02 & \textbf{0.77$\pm$0.03} & \textbf{0.75$\pm$0.03} \\
\midrule
\texttt{fetch\_slide\_large\_2D-L-dense} & 0.255$\pm$0.022 & 0.098$\pm$0.027 & \textbf{0.060$\pm$0.009} & 0.074$\pm$0.011 & 0.26$\pm$0.08 & 0.69$\pm$0.08 & \textbf{0.81$\pm$0.07} & \textbf{0.76$\pm$0.03} \\
\texttt{fetch\_slide\_large\_2D-L-sparse} & 0.236$\pm$0.020 & 0.181$\pm$0.039 & \textbf{0.112$\pm$0.016} & \textbf{0.120$\pm$0.011} & 0.41$\pm$0.04 & 0.45$\pm$0.08 & \textbf{0.83$\pm$0.08} & 0.73$\pm$0.04 \\
\texttt{fetch\_slide\_large\_2D-S-dense} & 0.256$\pm$0.035 & 0.105$\pm$0.011 & \textbf{0.091$\pm$0.009} & 0.111$\pm$0.010 & 0.23$\pm$0.10 & \textbf{0.63$\pm$0.03} & \textbf{0.59$\pm$0.10} & 0.51$\pm$0.07 \\
\texttt{fetch\_slide\_large\_2D-S-sparse} & 0.272$\pm$0.045 & 0.132$\pm$0.033 & \textbf{0.084$\pm$0.010} & \textbf{0.086$\pm$0.015} & 0.28$\pm$0.07 & 0.52$\pm$0.08 & \textbf{0.79$\pm$0.04} & 0.74$\pm$0.04 \\
\texttt{fetch\_slide\_large\_2D-U-dense} & 0.315$\pm$0.051 & 0.087$\pm$0.009 & \textbf{0.074$\pm$0.011} & \textbf{0.076$\pm$0.009} & 0.12$\pm$0.08 & \textbf{0.75$\pm$0.07} & \textbf{0.75$\pm$0.04} & \textbf{0.76$\pm$0.04} \\
\texttt{fetch\_slide\_large\_2D-U-sparse} & 0.288$\pm$0.058 & 0.147$\pm$0.009 & \textbf{0.117$\pm$0.008} & \textbf{0.120$\pm$0.005} & 0.30$\pm$0.04 & 0.41$\pm$0.04 & \textbf{0.68$\pm$0.07} & \textbf{0.70$\pm$0.06} \\
\midrule
\texttt{fetch\_slide\_large\_2D-all} & 0.270$\pm$0.025 & 0.125$\pm$0.011 & \textbf{0.090$\pm$0.005} & 0.098$\pm$0.007 & 0.27$\pm$0.04 & 0.57$\pm$0.04 & \textbf{0.74$\pm$0.02} & 0.70$\pm$0.02 \\
\midrule
\texttt{halfcheetah-backflip} & \textbf{1.947$\pm$0.312} & 3.170$\pm$0.730 & 2.710$\pm$0.742 & \textbf{2.625$\pm$0.780} & \textbf{0.50$\pm$0.18} & \textbf{0.43$\pm$0.14} & \textbf{0.55$\pm$0.20} & \textbf{0.57$\pm$0.17} \\
\texttt{halfcheetah-backflip-running} & \textbf{2.537$\pm$0.810} & \textbf{2.479$\pm$0.284} & \textbf{2.297$\pm$0.525} & \textbf{2.171$\pm$0.454} & \textbf{0.47$\pm$0.27} & \textbf{0.50$\pm$0.11} & \textbf{0.58$\pm$0.16} & \textbf{0.58$\pm$0.11} \\
\texttt{halfcheetah-frontflip} & \textbf{1.172$\pm$0.091} & 1.796$\pm$0.173 & \textbf{1.330$\pm$0.168} & 1.295$\pm$0.094 & 0.96$\pm$0.03 & 0.52$\pm$0.03 & \textbf{0.98$\pm$0.03} & \textbf{1.00$\pm$0.00} \\
\texttt{halfcheetah-frontflip-running} & 2.526$\pm$0.110 & \textbf{2.091$\pm$0.210} & \textbf{1.969$\pm$0.075} & \textbf{1.955$\pm$0.057} & 0.13$\pm$0.07 & 0.60$\pm$0.06 & \textbf{0.88$\pm$0.09} & \textbf{0.85$\pm$0.03} \\
\texttt{halfcheetah-hop-backward} & \textbf{0.739$\pm$0.736} & 0.889$\pm$0.103 & \textbf{0.548$\pm$0.056} & \textbf{0.589$\pm$0.107} & \textbf{0.84$\pm$0.33} & 0.82$\pm$0.07 & \textbf{0.96$\pm$0.04} & \textbf{0.96$\pm$0.03} \\
\texttt{halfcheetah-hop-forward} & \textbf{0.682$\pm$0.120} & 1.070$\pm$0.086 & 1.007$\pm$0.094 & 1.101$\pm$0.152 & \textbf{0.78$\pm$0.12} & 0.63$\pm$0.08 & \textbf{0.67$\pm$0.07} & 0.58$\pm$0.12 \\
\texttt{halfcheetah-run-backward} & \textbf{0.555$\pm$0.415} & 0.838$\pm$0.139 & \textbf{0.473$\pm$0.162} & \textbf{0.489$\pm$0.167} & \textbf{0.92$\pm$0.11} & 0.68$\pm$0.03 & \textbf{0.99$\pm$0.01} & \textbf{0.99$\pm$0.01} \\
\texttt{halfcheetah-run-forward} & \textbf{0.372$\pm$0.156} & 0.742$\pm$0.044 & \textbf{0.381$\pm$0.026} & \textbf{0.376$\pm$0.019} & \textbf{0.93$\pm$0.09} & 0.72$\pm$0.05 & \textbf{1.00$\pm$0.01} & \textbf{1.00$\pm$0.00} \\
\midrule
\texttt{halfcheetah-all} & \textbf{1.316$\pm$0.181} & 1.634$\pm$0.089 & \textbf{1.339$\pm$0.090} & \textbf{1.325$\pm$0.123} & 0.69$\pm$0.06 & 0.61$\pm$0.02 & \textbf{0.83$\pm$0.02} & \textbf{0.82$\pm$0.02} \\
\midrule
\texttt{pointmaze\_medium-circle-dense} & 0.252$\pm$0.032 & 0.651$\pm$0.377 & \textbf{0.168$\pm$0.015} & \textbf{0.156$\pm$0.010} & 0.91$\pm$0.04 & 0.62$\pm$0.25 & \textbf{1.00$\pm$0.00} & \textbf{1.00$\pm$0.00} \\
\texttt{pointmaze\_medium-circle-sparse} & \textbf{0.465$\pm$0.056} & 1.074$\pm$0.115 & \textbf{0.465$\pm$0.028} & \textbf{0.466$\pm$0.024} & \textbf{0.87$\pm$0.03} & 0.41$\pm$0.10 & \textbf{0.83$\pm$0.10} & \textbf{0.81$\pm$0.11} \\
\texttt{pointmaze\_medium-path-dense} & 0.495$\pm$0.130 & 1.835$\pm$1.064 & \textbf{0.192$\pm$0.008} & \textbf{0.199$\pm$0.013} & 0.95$\pm$0.03 & 0.45$\pm$0.29 & \textbf{1.00$\pm$0.00} & \textbf{1.00$\pm$0.00} \\
\texttt{pointmaze\_medium-path-sparse} & 0.716$\pm$0.119 & 1.416$\pm$0.828 & \textbf{0.444$\pm$0.010} & \textbf{0.459$\pm$0.015} & 0.89$\pm$0.10 & 0.61$\pm$0.24 & \textbf{0.99$\pm$0.01} & \textbf{0.97$\pm$0.03} \\
\midrule
\texttt{pointmaze\_medium-all} & 0.482$\pm$0.055 & 1.244$\pm$0.463 & \textbf{0.317$\pm$0.008} & \textbf{0.320$\pm$0.009} & 0.91$\pm$0.02 & 0.52$\pm$0.15 & \textbf{0.95$\pm$0.03} & \textbf{0.94$\pm$0.04} \\
\midrule
\bottomrule
\end{tabular}
}
\end{table}

\subsection{Finite Horizon Ablations}
As discussed in \secref{sec:method}, we are forced to optimize the objective over a finite horizon $H$ due to the imperfections in the learned dynamics model and computational constraints.
The hyperparameter $H$ should thus be as large as possible, as long as the model remains accurate.
We visualize this trade-off in \figref{fig:horizon-ablation} for environment \texttt{fetch\_slide\_large\_2D}.
It is clearly visible that if the horizon is smaller than 16, the value we chose for our experiments, then performance rapidly deteriorates towards the one of the myopic planners.
However, when increasing the horizon beyond 16, performance does not improve, suggesting that the model is not accurate enough to plan beyond this horizon.

\begin{figure}[h]
    \vspace{-1mm}
    \centering
    \includegraphics[width=.8\linewidth]{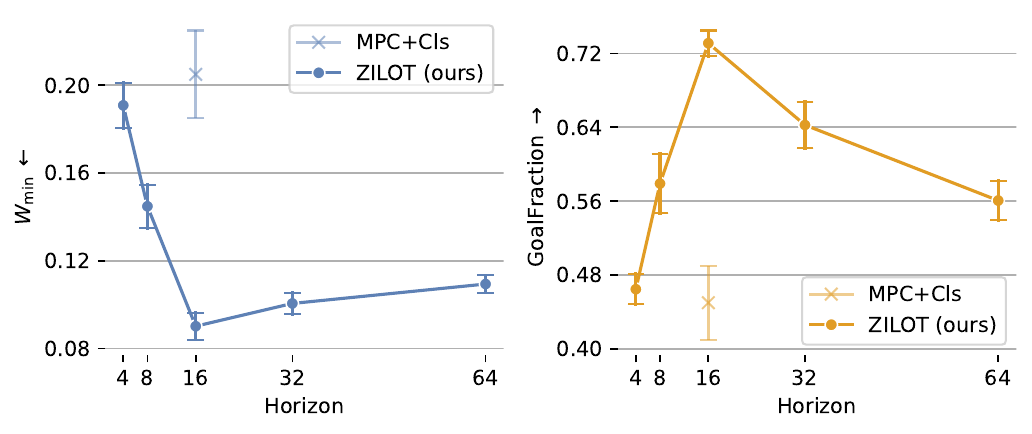}
    \vspace{-5mm}
    \caption{Mean performance across five seeds in \texttt{fetch\_slide\_large\_2D} for different planning horizons.}
    \label{fig:horizon-ablation}
    \vspace{-2mm}
\end{figure}

\subsection{Single Goal Performance}
When the expert trajectory consists of only a single goal, myopic planning is of course sufficient to imitate the expert.
To verify this we evaluate the performance of all planners in the standard single goal task of the environments.
\Figref{fig:single-goal-sr} shows the success rate of all planners in this task verifying that non-myopic planning neither hinders nor helps in this case.

\begin{figure}[h]
    \vspace{-1mm}
    \centering
    \includegraphics[width=\linewidth]{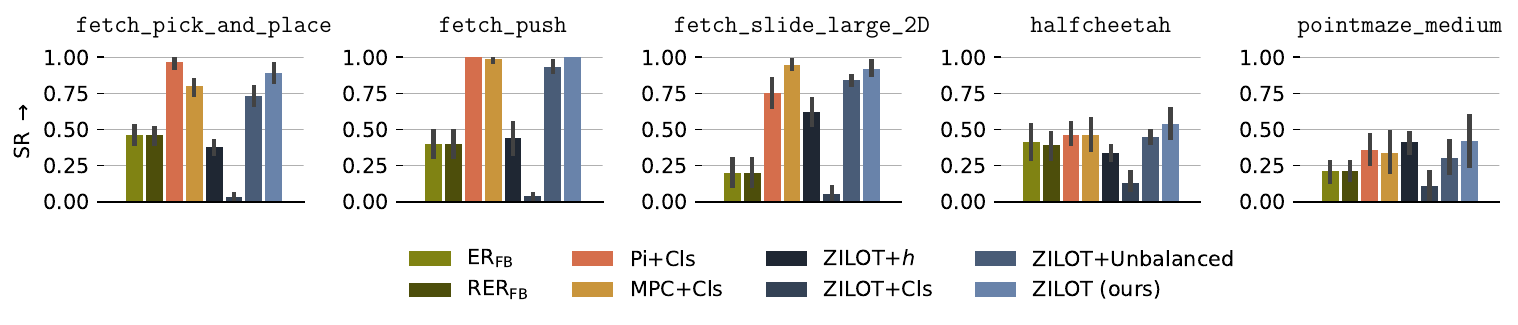}
    \vspace{-10mm}
    \caption{Single Goal Success Rate in the standard single goal tasks of the environments. We report the mean performance across 20 trials and standard deviationacross 5 seeds.}
    \vspace{-2mm}
    \label{fig:single-goal-sr}
\end{figure}

\subsection{Environment Similarity}
We evaluate ZILOT and the myopic baselines Pi+Cls and MPC+Cls on the \texttt{walker} environment \citep{dmc} in \cref{tab:walker}.
Because this environment is very similar to the \texttt{halfcheetah} environment we use in our main evaluation, we can reuse the same goal-space, tasks, and data collection method.
These similarities are also visible in the performance of the three methods.
\begin{table}[H]
    \caption{Evaluation on the \texttt{walker} environment with the \texttt{halfcheetah} results repeated for comparison.} \label{tab:walker}
\adjustbox{max width=\linewidth}{\begin{tabular}{l|ccc|ccc}
\toprule
 Task & \multicolumn{3}{c|}{$W_{\min}$ $\downarrow$} & \multicolumn{3}{c}{GoalFraction $\uparrow$} \\
 & Pi+Cls & MPC+Cls & ZILOT (ours) & Pi+Cls & MPC+Cls & ZILOT (ours) \\
\midrule
\texttt{walker-backflip} & 2.804$\pm$0.056 & 1.737$\pm$0.146 & \textbf{1.273$\pm$0.205} & 0.34$\pm$0.07 & \textbf{0.89$\pm$0.03} & \textbf{0.92$\pm$0.06} \\
\texttt{walker-backflip-running} & 3.039$\pm$0.292 & 2.444$\pm$0.189 & \textbf{1.709$\pm$0.093} & 0.49$\pm$0.07 & 0.70$\pm$0.08 & \textbf{0.81$\pm$0.09} \\
\texttt{walker-frontflip} & 2.688$\pm$0.400 & 1.830$\pm$0.185 & \textbf{1.551$\pm$0.086} & 0.57$\pm$0.16 & \textbf{0.94$\pm$0.04} & \textbf{0.95$\pm$0.07} \\
\texttt{walker-frontflip-running} & 2.597$\pm$0.265 & \textbf{1.937$\pm$0.172} & \textbf{1.921$\pm$0.149} & 0.55$\pm$0.03 & \textbf{0.63$\pm$0.16} & \textbf{0.76$\pm$0.11} \\
\texttt{walker-hop-backward} & 1.447$\pm$0.076 & \textbf{0.872$\pm$0.032} & \textbf{0.836$\pm$0.100} & 0.64$\pm$0.11 & \textbf{0.78$\pm$0.06} & \textbf{0.84$\pm$0.07} \\
\texttt{walker-hop-forward} & 0.932$\pm$0.098 & 0.663$\pm$0.071 & \textbf{0.467$\pm$0.044} & 0.95$\pm$0.05 & \textbf{0.99$\pm$0.01} & \textbf{1.00$\pm$0.01} \\
\texttt{walker-run-backward} & 1.290$\pm$0.148 & \textbf{1.050$\pm$0.086} & \textbf{0.957$\pm$0.111} & \textbf{0.81$\pm$0.08} & \textbf{0.83$\pm$0.14} & \textbf{0.84$\pm$0.09} \\
\texttt{walker-run-forward} & 1.180$\pm$0.105 & 0.954$\pm$0.079 & \textbf{0.672$\pm$0.058} & 0.86$\pm$0.07 & \textbf{0.97$\pm$0.03} & \textbf{0.99$\pm$0.01} \\
\midrule
\texttt{walker-all} & 1.997$\pm$0.047 & 1.436$\pm$0.049 & \textbf{1.174$\pm$0.061} & 0.65$\pm$0.03 & 0.84$\pm$0.02 & \textbf{0.89$\pm$0.04} \\
\midrule
\texttt{halfcheetah-backflip} & \textbf{3.089$\pm$0.588} & 4.281$\pm$0.371 & \textbf{2.625$\pm$0.780} & 0.28$\pm$0.13 & 0.12$\pm$0.12 & \textbf{0.57$\pm$0.17} \\
\texttt{halfcheetah-backflip-running} & 2.879$\pm$0.427 & 3.044$\pm$0.752 & \textbf{2.171$\pm$0.454} & 0.44$\pm$0.10 & \textbf{0.46$\pm$0.18} & \textbf{0.58$\pm$0.11} \\
\texttt{halfcheetah-frontflip} & 1.544$\pm$0.127 & 1.695$\pm$0.147 & \textbf{1.295$\pm$0.094} & 0.77$\pm$0.09 & 0.79$\pm$0.12 & \textbf{1.00$\pm$0.00} \\
\texttt{halfcheetah-frontflip-running} & 2.086$\pm$0.133 & 2.083$\pm$0.104 & \textbf{1.955$\pm$0.057} & 0.70$\pm$0.08 & \textbf{0.81$\pm$0.07} & \textbf{0.85$\pm$0.03} \\
\texttt{halfcheetah-hop-backward} & 0.806$\pm$0.110 & 0.950$\pm$0.075 & \textbf{0.589$\pm$0.107} & \textbf{0.96$\pm$0.03} & 0.90$\pm$0.02 & \textbf{0.96$\pm$0.03} \\
\texttt{halfcheetah-hop-forward} & 1.580$\pm$0.069 & 1.392$\pm$0.206 & \textbf{1.101$\pm$0.152} & \textbf{0.51$\pm$0.07} & \textbf{0.62$\pm$0.14} & \textbf{0.58$\pm$0.12} \\
\texttt{halfcheetah-run-backward} & 0.897$\pm$0.092 & 0.679$\pm$0.035 & \textbf{0.489$\pm$0.167} & \textbf{0.96$\pm$0.04} & \textbf{1.00$\pm$0.00} & \textbf{0.99$\pm$0.01} \\
\texttt{halfcheetah-run-forward} & 0.857$\pm$0.044 & 0.822$\pm$0.206 & \textbf{0.376$\pm$0.019} & \textbf{1.00$\pm$0.01} & \textbf{0.94$\pm$0.08} & \textbf{1.00$\pm$0.00} \\
\midrule
\texttt{halfcheetah-all} & 1.717$\pm$0.101 & 1.868$\pm$0.079 & \textbf{1.325$\pm$0.123} & 0.70$\pm$0.05 & 0.71$\pm$0.02 & \textbf{0.82$\pm$0.02} \\
\midrule
\bottomrule
\end{tabular}
}
\vspace{-3mm}
\end{table}

\section{Forward-Backward Representations and Imitation Learning}\label{sec:fb}
In a foundational paper in zero-shot, model-free IL, \cite{fbil2024} propose several methods based on the forward-backward (FB) framework ~\citep{touatiLearningOneRepresentation2021}. 
FB trains two functions $F$ and $B$, which recover a low-rank approximation of the successor measure, as well as a parameterized policy $(\pi_z)_{z\in\mathbb{R}^d}$.
These functions can be trained offline, without supervision, so that for each reward $r$, an optimal policy $\pi_{z_r}$ can be recovered.
This property gives rise to a range of reward-based and occupancy-matching based methods for zero-shot IL.
In the following we will go over each method, and discuss how it differs from ZILOT in terms of objective. We will highlight how several methods do not directly apply to our setting, which involves expert demonstrations  that are actionless, rough, and partial.
We refer the reader to \secref{sec:fb-details} for implementation details of baselines based on FB.

\subsection{FB Imitation Learning Approaches}

\paragraph{Behavioral Cloning} The first approach in \citet{fbil2024} is is based on gradient descent on the latent $z$ to find the policy $\pi_z$ that maximizes the likelihood of expert actions. Since this approach strictly requires expert actions it does not apply in our case.

\paragraph{Reward-Based Imitation Learning}
\cite{fbil2024} derive two reward-based zero-shot IL methods maximizing the reward $r(\cdot) = \rho^E(\cdot) / \rho^{\gD_\beta}(\cdot)$ ($\text{ER}_\text{FB}$)~\citep{maVersatileOfflineImitation2022,kim2022demodice} and its regularized counterpart $r(\cdot) = \rho^E(\cdot) / (\rho^E(\cdot) + \rho^{\gD_\beta}(\cdot))$ ($\text{RER}_\text{FB}$)~\citep{reddy2020squil,zolana2020offlinelearning}.
While ZILOT's objective is based on a Wasserstein distance, these rewards are derived from \textit{regularized} $f$-divergence objectives.
These objectives are fortunately tractable, and can be minimized by solving an RL problem with additive rewards.
In practice, this corresponds to assigning a scalar reward to each state visited by the expert, without considering the order of the states in the expert trajectory.
However, as stated in Section 4.2 of \citet{fbil2024}, this regularization comes at a cost, particularly if the state does not contain dynamical information, or in ergodic MDPs.
In this case, a policy can maximize the reward by remaining in the most likely expert state, and the objective might be optimized by degenerate solution.
On the other hand, such solution would be discarded by ZILOT, which uses an unregularized objective.

Nonetheless, these two instantiations are fully compatible with partial and rough demonstrations. Thus, we provide an empirical comparison in Section \ref{sec:experiments}.

\paragraph{Distribution Matching}
A further approach in \citet{fbil2024} finds the policy $\pi_z$ whose occupancy matches the expert occupancy \wrt different distances on the space of measures. 
ZILOT also performs occupancy matching, but with respect to Wasserstein distances. However, ZILOT is designed to handle state abstraction, \ie partial states.
To the best of our understanding, distribution- and feature-matching flavors of FB-IL require the demonstration to contain full states, unless further FB representations are trained to approximate successor measures over abstract states.
While the standard implementation of distribution-matching FB-IL cannot imitate rough demonstrations, we believe that an extension in this direction may be interesting for future work.

\paragraph{Goal-Based Imitation}
\citet{fbil2024} also instantiate a hierarchical, goal-based imitation method, in which the FB framework is only used for goal-reaching.
This idea is closely related with one of our baselines (Pi+Cls).
However, their framework assumes that trajectories to imitate are not rough and, instead of using a classifier, the goal can be chosen at a fixed offset of in time for each time-step.
In any case, their approach remains myopic as per Proposition~\ref{prop}.
Empirically, \citet{fbil2024} observe that this instantiation of FB-IL does not significantly outperform an equivalent method relying on TD3+HER instead.
As the latter method is very similar to our Pi+Cls baseline, we do not investigate this approach further in this work.

\section{Implementation Details} \label{sec:impl-details}
\subsection{ZILOT}
The proposed  method is motivated and explained in \secref{sec:method}.
We now present additional details.

\paragraph{Sinkhorn} First, we rescale the matrix $\mC$ by $T_{\max}$ and clamp it to the range $[0,1]$ before running Sinkhorns algorithm.
The precise operation performed is 
\begin{align}
  \mC \gets \min\left(1, \max(0, \mC / T_{\max})\right).
\end{align}
This is done so that the same entropy regularization $\epsilon$ can be used across all environments, and to ensure there are no outliers that hinder the convergence of the Sinkhorn algorithm.
For the algorithm itself, we use a custom implementation for batched OT computation, heavily inspired by \cite{flamary2021pot} and \cite{cuturi2022ottjax}.
We run our Sinkhorn algorithm for $r=500$ iterations with a regularization factor of $\epsilon=0.02$.

\paragraph{Truncation} When the agent gets close to the end of the expert trajectory, then we might have that $t_K < k + H$, \ie the horizon is larger than needed.
We thus truncate the planning horizon to the estimated remaining number of steps (and at least $1$), \ie we set 
\begin{align}
  H_\text{actual} \gets \max\left(1, \min(t_K - k, H)\right).
\end{align}

\paragraph{Unbalanced OT}
As mentioned in the main text in \secref{sec:ablations}, we can use unbalanced OT \citep{lieroOptimalEntropyTransportProblems2018,sejourneSinkhornDivergencesUnbalanced2023} to address that fact that the uniform marginal for the goal occupancy approximation may not be feasible. 
Unbalanced OT replaces this hard constraint of $\mT^\top\cdot\1_N = \1_M$ into the term $\xi_b \text{KL}(\mT^\top\cdot\1_N, \1_M)$ in the objective function.
For our experiments we have chosen $\xi_b = 1$.

\subsection{TD-MPC2 Modifications}
As TD-MPC2 \citep{hansenTDMPC2ScalableRobust2024} is already a multi-task algorithm that is conditioned on a learned task embedding $t$ from a task id $i$, we only have to switch out this conditioning to a goal latent $z_g$ to arrive at a goal-conditioned algorithm as detailed in \tabref{tab:gctdmpc2}.
We remove the conditioning on the encoders and the dynamics model $f$ completely as the goal conditioning of GC-RL only changes the reward but not the underlying Markov Decision Process $\gM$ (assuming truncation after goal reaching, see \secref{sec:gc-rl}).
For training we adopt all TD-MPC2 hyperparameters directly (see \tabref{tab:tdmpc2-hparams}).
As mentioned in the main text, we also train a small MLP to predict $W$ that regresses on $V$.
\begin{table}[H]
  \centering
  \caption{Our modifications to TD-MPC2 to making it goal- instead of task-conditioned.}\label{tab:gctdmpc2}
  \begin{tabular}{lcc}
    \toprule
    & TD-MPC2~\citep{hansenTDMPC2ScalableRobust2024} & ``GC''-TD-MPC2 (our changes) \\
    \midrule
    Task/Goal Embedding & $t = E(i)$ & $z_g = h_g(g)$ \\
    Encoder & $z = h(s, t)$ & $z = h(s)$ \\
    Dynamics & $z' = f(z, a, t)$ & $z' = f(z, a)$ \\
    Reward Prediction & $r = R(z, a, t)$ & $r = R(z, a, z_g)$ \\
    $Q$-function & $q = Q(z, a, t)$ & $q = Q(z, a, z_g)$ \\
    Policy & $a \sim \pi(z, t)$ & $a \sim \pi(z, z_g)$ \\
    \bottomrule
  \end{tabular}
\end{table}

We have found the computation of pair-wise distances $d$ to be the major computational bottleneck in our method, as TD-MPC2 computes them as $d = -V^\pi(s, g) = -Q(z, \pi(z, z_g), z_g)$ where $z=h(s), z_g=h_g(g)$.
To speed-up computation, we train a separate network that estimates the value function directly.
It employs a two-stream architecture \citep{schaulUniversalValueFunction2015,eysenbachContrastiveLearningGoalConditioned2023} of the form $V^\pi(z, z_g) = \phi(z)^\top \psi(z_g)$ where $\phi$ and $\psi$ are small MLPs for fast inference of pair-wise distances.

Our GC-TD-MPC2 is trained like the original TD-MPC2 with two losses additionally employing HER~\citep{andrychowiczHindsightExperienceReplay2018} to sample goals $g$ which we discuss in detail in \cref{sec:goal-sampling}.
The first loss combines a multi-step loss for $d$ and $h$ with a single-step TD-step loss for $R$ and $Q$
\begin{align}\label{eq:gc-td-mpc2-1}
    \mathcal{L} = \E_{\substack{(s, a, s')_{0:H}\sim\mathcal{D}\\ g_{0:H}\sim\text{HER}_\gamma(s_{0:H})}} \left[ \sum_{t=0}^{H} \lambda^t \left(  \|z'_t - \text{sg}(h(s'_t)) \|_2^2 + \text{CE}(R(z_t, a_t, z_{g_t}), r_t) + \text{CE}(Q(z_t, a_t, z_{g_t}), q_t) \right)\right]
\end{align}
where $\text{sg}$ is the ``stop-gradient''-operator, $z_t$, $z_{g_t}$, and $z_t'$ are defined in \cref{tab:gctdmpc2}, rewards are $r_t = \mathbb{I}_{s_t=g_t} - 1$, and (undiscounted) TD-targets are $q_t = \max(r_t + \mathbb{I}_{s_t\ne g_t} \cdot \overline{Q}(z_t', \pi(z'_t, z_{g_t}), z_{g_t}), -T_{\max})$.
The second loss is a SAC-style loss for $\pi$ \citep{sac}
\begin{align}
    \mathcal{L}_\pi = \E_{\substack{(s, a, s')_{0:H}\sim\mathcal{D}\\ g_{0:H}\sim\text{HER}_\gamma(s_{0:H})}} \left[ \sum_{t=0}^{H} \lambda^t \left( \alpha Q(z_t, \pi(z_t, z_{g_t}), z_{g_t}) + \beta \mathcal{H}(\pi(\cdot|z_t, z_{g_t}) \right) \right], z_t = d(z_t, a_t), z_0 = h(s_0).
\end{align}
Additional to GC-TD-MPC2 we then also train our goal-conditioned value function $V^\pi(z, z_g) = \phi(z)^\top \psi(z_g)$ using the same TD-targets as in \cref{eq:gc-td-mpc2-1}
\begin{align}
    \mathcal{L}_{\phi,\psi} = \E_{\substack{s_{0:H}\sim\mathcal{D}\\ g_{0:H}\sim\text{HER}_\gamma(s_{0:H})}} \left[ \sum_{t=0}^{H} \lambda^t \left(\phi(z_t)^\top\psi(z_{g_t}) -  q_t \right)^2\right].
\end{align}

\subsection{Runtime}
ZILOT runs at 2 to 4Hz on an Nvidia RTX 4090 GPU, depending on the size of $H$ and the size of the OT problem.
Given that the MPC+Cls method runs at around 25 to 72Hz with the same networks and on the same hardware, it is clear that most computation is spent on preparing the cost-matrix $C$ and running the Sinkhorn solver.
Several further steps could be taken to speed-up the Sinkhorn algorithm itself, including $\eta$-schedules and/or Anderson acceleration \citep{cuturi2022ottjax} as well as warm-starting it with potentials, \eg from previous (optimizer) steps or from a trained network \citep{amosMetaOptimalTransport2023}.

\subsection{Goal Sampling}\label{sec:goal-sampling}
As mentioned in the main text, we follow prior work \citep{andrychowiczHindsightExperienceReplay2018,bagatellaGoalconditionedOfflinePlanning2023,tianModelBasedVisualPlanning2020} and sample goals from the future part of trajectories in $\gD_\beta$ in order to synthesize rewards without supervision.
The exact procedure is as follows:
\begin{itemize}[nosep]
  \item With probability $p_\text{future} = 0.6$ we sample a goal from the future part of the trajectory with time offset $t_\Delta \sim \text{Geom}(1 - \gamma)$.
  \item With probability $p_\text{next} = 0.2$ we sample the next goal in the trajectory.
  \item With probability $p_\text{rand} = 0.2$ we sample a random goal from the dataset.
\end{itemize}

\subsection{Training}
We train our version of TD-MPC2 offline with the datasets detailed in \tabref{tab:envs-dsets} for 600k steps.
Training took about 8 to 9 hours on a single Nvidia A100 GPU.
Note that as TD-MPC2 samples batches of 3 transitions per element, we effectively sample $3 \cdot 256 = 768$ transitions per batch.
The resulting models are then used for all planners and experiments.

\begin{table}[ht]
  \centering
  \caption{Environment description. We detail the datasets used for training.} \label{tab:envs-dsets}
  \adjustbox{max width=\linewidth}{
    \begin{tabular}{llc}
      \toprule
      Environment & Dataset & \#Transitions \\
      \midrule
      \texttt{fetch\_push} & WGCSL \cite{yang2022rethinking} (expert+random) & 400k + 400k \\
      \texttt{fetch\_pick\_and\_place} & WGCSL \cite{yang2022rethinking} (expert+random) & 400k + 400k \\
      \texttt{fetch\_slide\_large\_2D} & custom (curious exploration \citep{pathak19disagreement}) & 500k \\
      \texttt{halfcheetah} & custom (curious exploration \citep{pathak19disagreement}) & 500k \\
      \texttt{pointmaze\_medium} & D4RL \citep{fuD4RLDatasetsDeep2021} (expert) & 1M \\
      \bottomrule
  \end{tabular} }
\end{table}

\subsection{Environments}
We provide environment details in \tabref{tab:envs-details}.
Note that while we consider an undiscounted setting, we specify $\gamma$ for the goal sampling procedure above.

\begin{table}[ht]
  \centering
  \caption{
    Environment details. We detail the goal abstraction $\phi$, metric $h$, threshold $\epsilon$, horizon $H$, maximum episode length $T_{\max}$, and discount factor $\gamma$ used for each environment.
  } \label{tab:envs-details}
  \adjustbox{max width=\linewidth}{
    \begin{tabular}{lcccccc}
      \toprule
      Environment & Goal Abstraction $\phi$ & Metric $h$ & Threshold $\epsilon$ & Horizon $H$ & $T_{\max}$ & $\gamma$ \\
      \midrule
      \texttt{fetch\_push} & $(x, y, z)_\text{cube}$ & $\|\cdot\|_2$ & 0.05 & 16 & 50 & 0.975 \\
      \texttt{fetch\_pick\_and\_place} & $(x, y, z)_\text{cube}$ & $\|\cdot\|_2$ & 0.05 & 16 & 50 & 0.975\\
      \texttt{fetch\_slide\_large\_2D} & $(x, y, z)_\text{cube}$ & $\|\cdot\|_2$ & 0.05 & 16 & 50 & 0.975\\
      \texttt{halfcheetah} & $(x, \theta_y)$ & $\|\cdot\|_2$ & 0.50 & 32 & 200 & 0.990\\
      \texttt{pointmaze\_medium} & $(x, y)$ & $\|\cdot\|_2$ & 0.45 & 64 & 600 & 0.995\\
      \bottomrule
  \end{tabular} }
\end{table}

The environments \texttt{fetch\_push} and \texttt{fetch\_pick\_and\_place} and \texttt{pointmaze\_medium} are used as is.
As \texttt{halfcheetah} is not goal-conditioned by default, we define our own goal range to be $(x, \theta_y) \in [-5, 5] \times [-4\pi, 4\pi]$\footnote{Note that the \texttt{halfcheetah} environment does not reduce $\theta$ with any kind of modular operation, \ie states with $\theta = 0$ and $\theta = 2\pi$ are distinct.}.
\texttt{fetch\_slide\_large\_2D} is a variation of the \texttt{fetch\_slide} environment where the table size exceeds the arm's range and the arm is restricted to two-dimensional movement touching the table.

\subsection{Tasks}
The tasks for the \texttt{fetch} and \texttt{pointmaze} environments are specified in the environments normal goal-space. Their shapes can be seen in the figures in appendix~\ref{sec:add-qual-res}.
To make the tasks for \texttt{halfcheetah} more clear, we visualize some executions of our method in the figures \ref{fig:halfcheetah-backflip-running}, \ref{fig:halfcheetah-backflip}, \ref{fig:halfcheetah-frontflip-running}, \ref{fig:halfcheetah-frontflip}, \ref{fig:halfcheetah-hop-backward}, and \ref{fig:halfcheetah-hop-forward}.

\begin{figure}[H]
  \includegraphics[width=\linewidth]{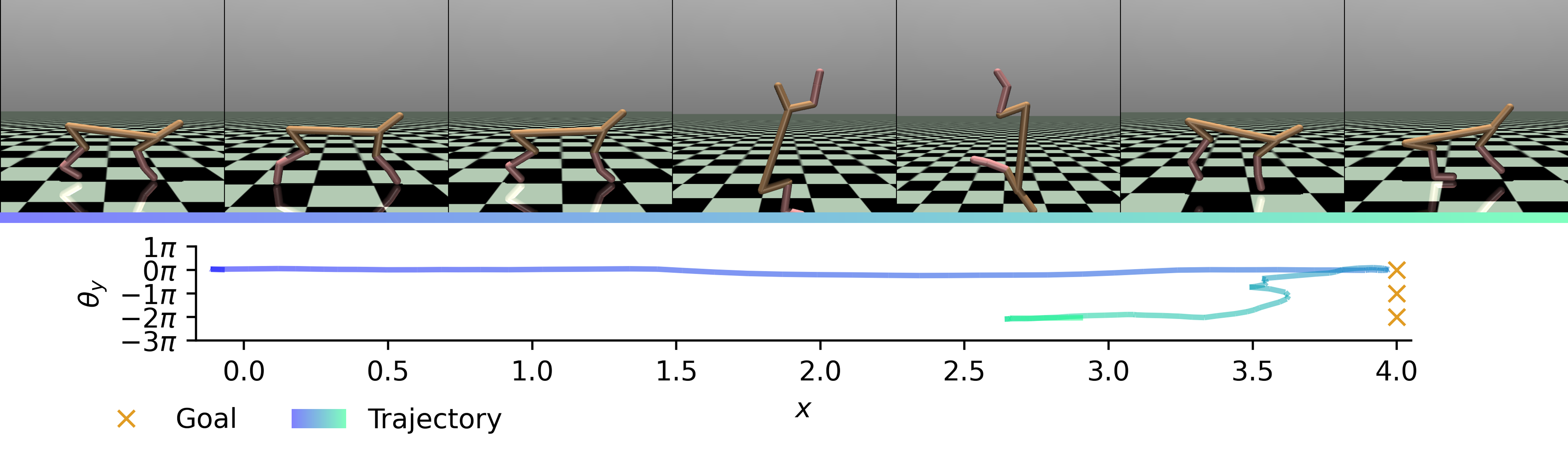}
  \vspace{-20pt}
  \caption{Example trajectory of ZILOT (ours) in \texttt{halfcheetah-backflip-running}.}
  \label{fig:halfcheetah-backflip-running}
  \vspace{-6pt}
\end{figure}
\begin{figure}[H]
  \includegraphics[width=\linewidth]{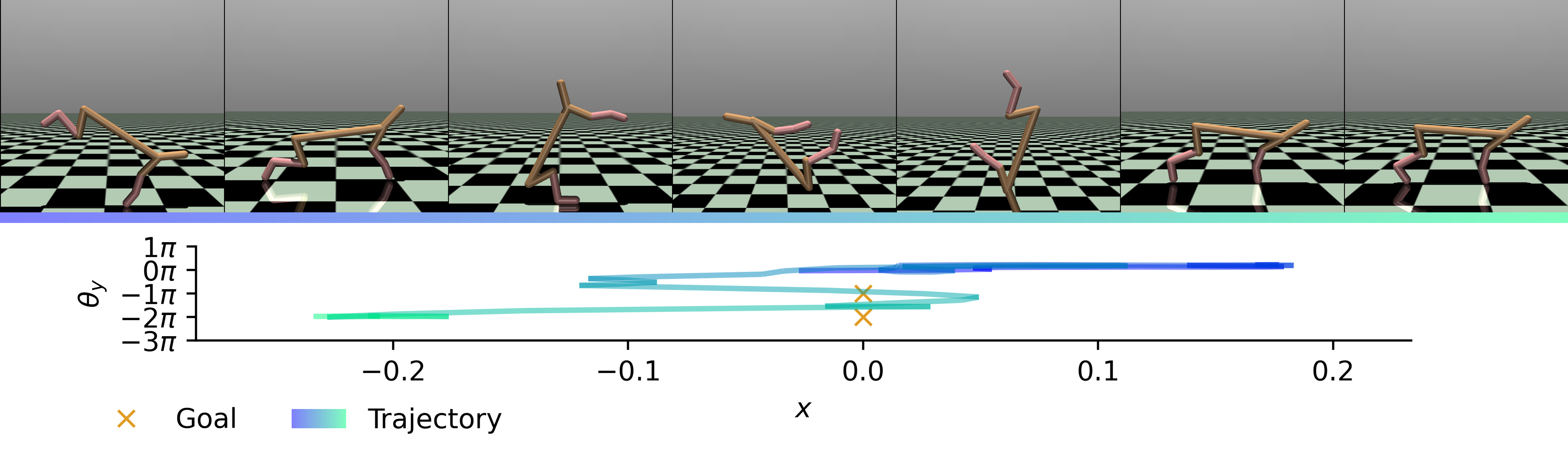}
  \vspace{-20pt}
  \caption{Example trajectory of ZILOT (ours) in \texttt{halfcheetah-backflip}.}
  \label{fig:halfcheetah-backflip}
  \vspace{-6pt}
\end{figure}
\begin{figure}[H]
  \includegraphics[width=\linewidth]{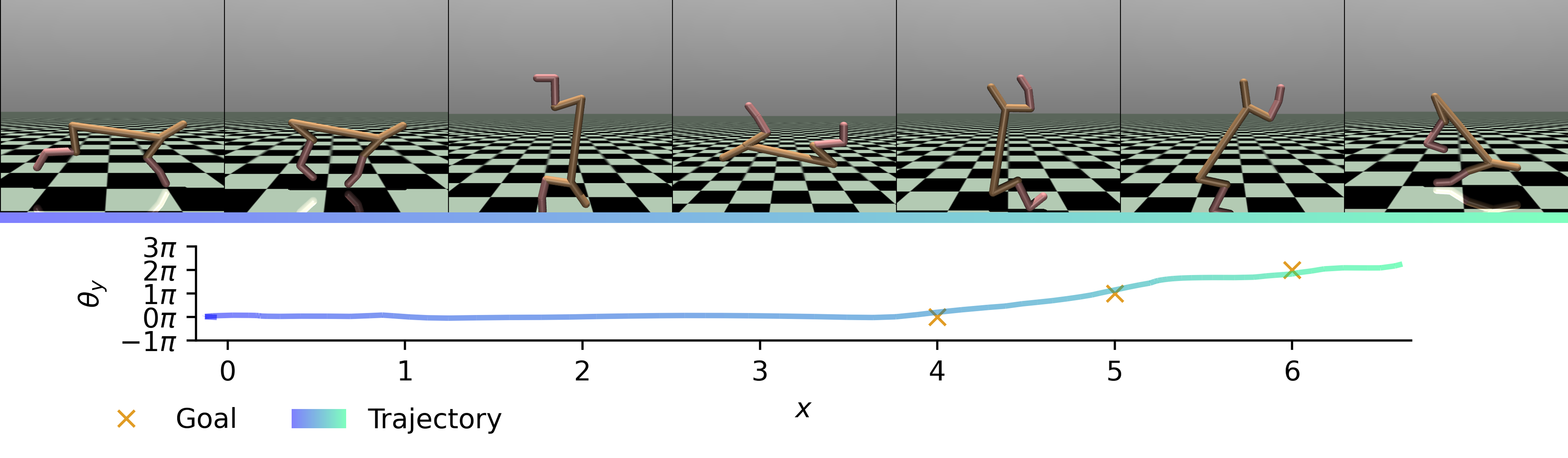}
  \vspace{-20pt}
  \caption{Example trajectory of ZILOT (ours) in \texttt{halfcheetah-frontflip-running}.}
  \label{fig:halfcheetah-frontflip-running}
  \vspace{-6pt}
\end{figure}
\begin{figure}[H]
  \includegraphics[width=\linewidth]{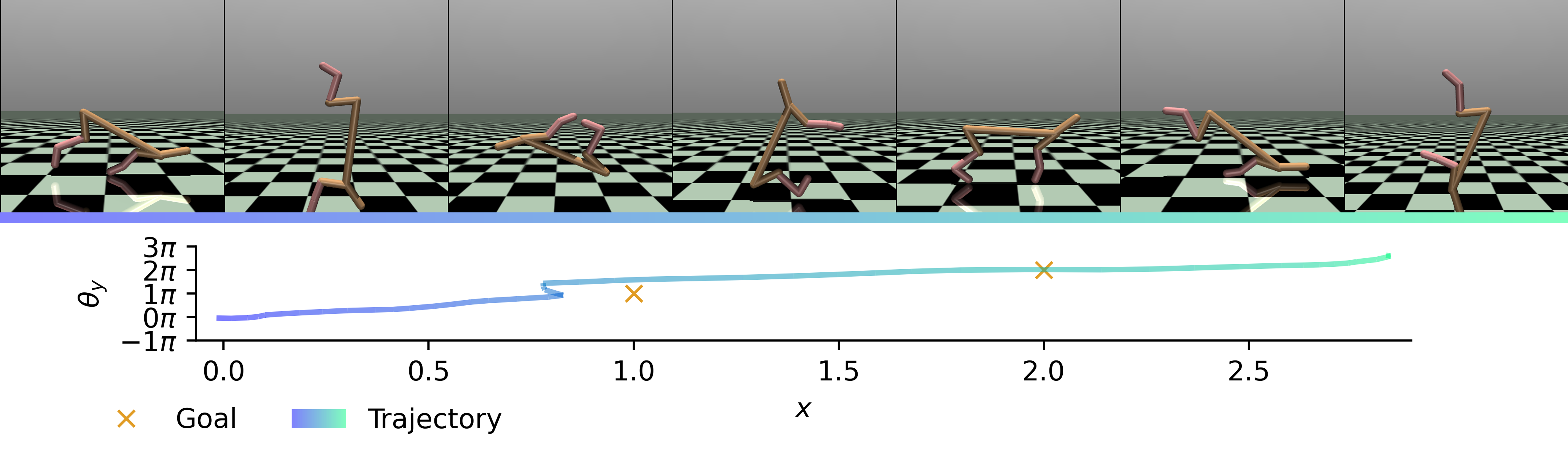}
  \vspace{-20pt}
  \caption{Example trajectory of ZILOT (ours) in \texttt{halfcheetah-frontflip}.}
  \label{fig:halfcheetah-frontflip}
  \vspace{-6pt}
\end{figure}
\begin{figure}[H]
  \includegraphics[width=\linewidth]{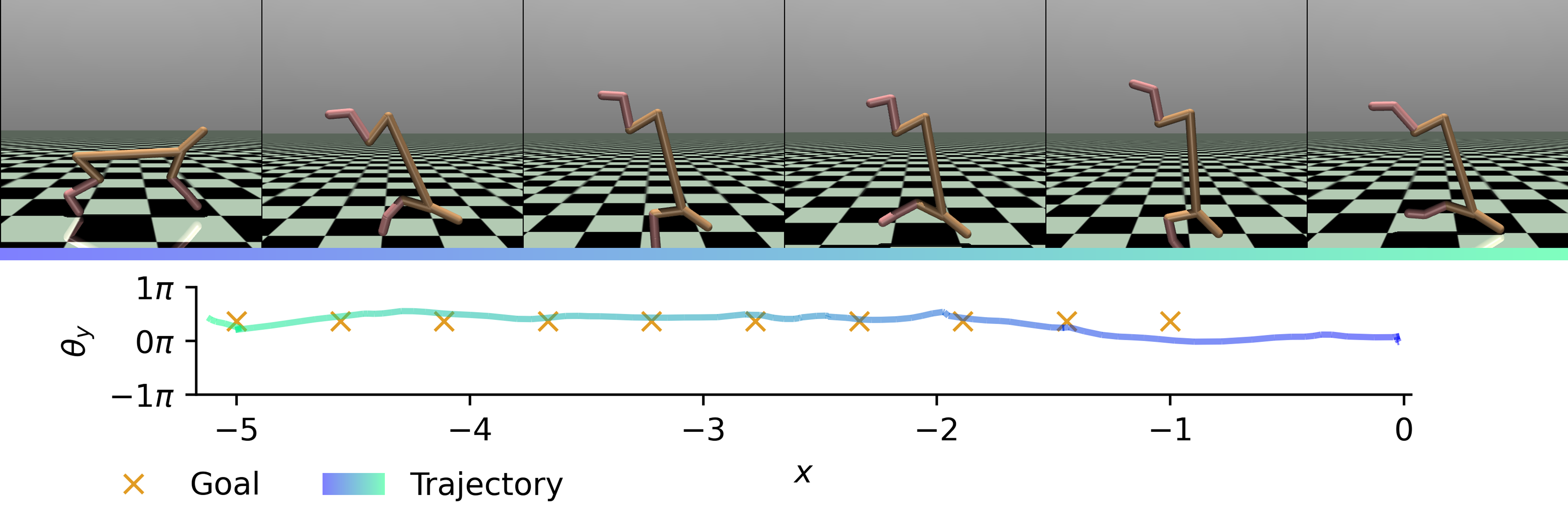}
  \vspace{-20pt}
  \caption{Example trajectory of ZILOT (ours) in \texttt{halfcheetah-hop-backward}.}
  \label{fig:halfcheetah-hop-backward}
  \vspace{-6pt}
\end{figure}
\begin{figure}
  \includegraphics[width=\linewidth]{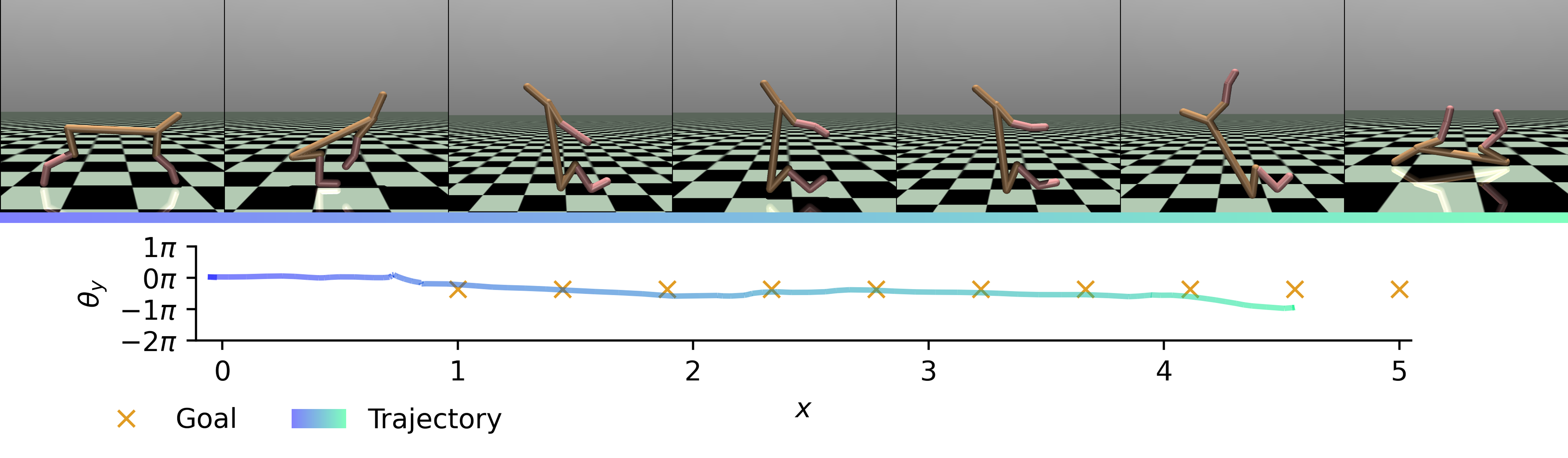}
  \vspace{-20pt}
  \caption{Example trajectory of ZILOT (ours) in \texttt{halfcheetah-hop-forward}.}
  \label{fig:halfcheetah-hop-forward}
\end{figure}

\subsection{Task Difficulty}
This section investigates the ability of ZILOT to imitate trajectories that do not appear in the offline dataset it is trained on. As ZILOT uses a learned dynamics model and an off-policy value function, it should in theory be able to stitch together any number of trajectories in the dataset. To get some qualitative intuition we overlay the following: first, a kernel density estimate of the data distribution in the offline datasets, second, an expert trajectory to imitate, and finally the five trajectories that are closest to the expert w.r.t.\ the Wasserstein distance under the goal-metric $h$.
We present a few tasks for each environment in Figures~\ref{fig:dset-ch}, \ref{fig:dset-fp}, \ref{fig:dset-fpp}, \ref{fig:dset-fsld}, and~\ref{fig:dset-maze}.

Comparing the density estimates and the expert trajectories, we can see that essentially all expert trajectories are within distribution. Although, especially in \texttt{halfcheetah}, there are some tasks, such as \texttt{hop-forward} and \texttt{backflip-running} with very little coverage which might explain the bad performance of all planners in these tasks (see \tabref{tab:baselines}).
Comparing the selected trajectories with the expert trajectory, it is also evident that the expert demonstrations are not directly present in the datasets.
Thus, ZILOT is capable of imitating unseen \textit{sequences} of states, as long as each individual state is within the support of the training data.
In other words, ZILOT is capable of off-policy learning, or trajectory stitching.

\begin{figure}[H]
    \vspace{-4mm}
    \begin{subfigure}{.33\linewidth}
        \includegraphics[width=\linewidth]{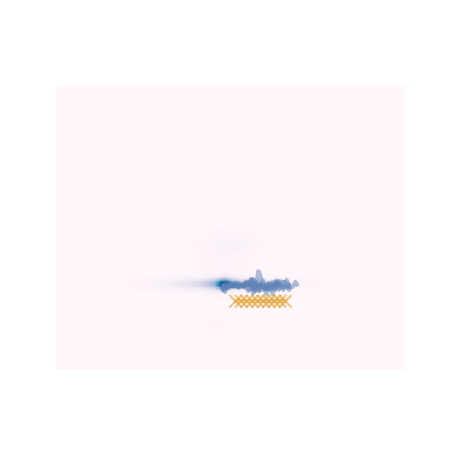}
        \caption{\texttt{hop-forward}}
    \end{subfigure}
    \begin{subfigure}{.33\linewidth}
        \includegraphics[width=\linewidth]{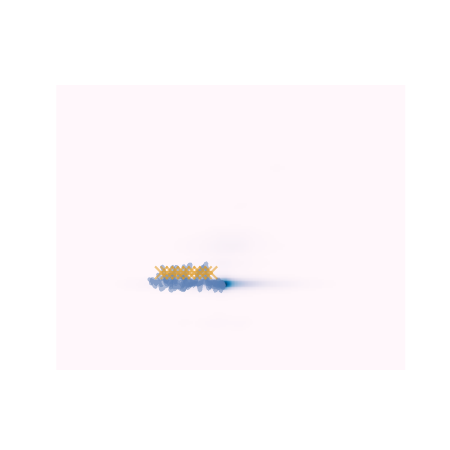}
        \caption{\texttt{hop-backward}}
    \end{subfigure}
    \begin{subfigure}{.33\linewidth}
        \includegraphics[width=\linewidth]{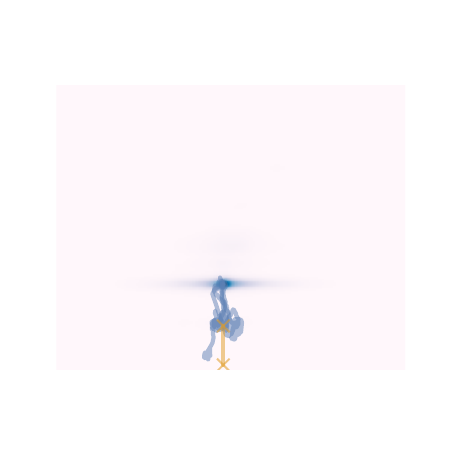}
        \caption{\texttt{backflip}}
    \end{subfigure} \hfill
    \begin{subfigure}{.33\linewidth}
        \includegraphics[width=\linewidth]{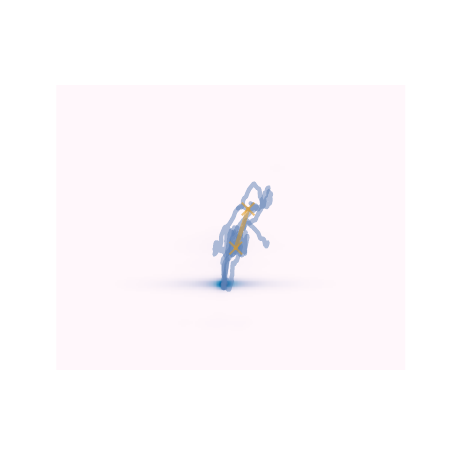}
        \caption{\texttt{frontflip}}
    \end{subfigure}
    \begin{subfigure}{.33\linewidth}
        \includegraphics[width=\linewidth]{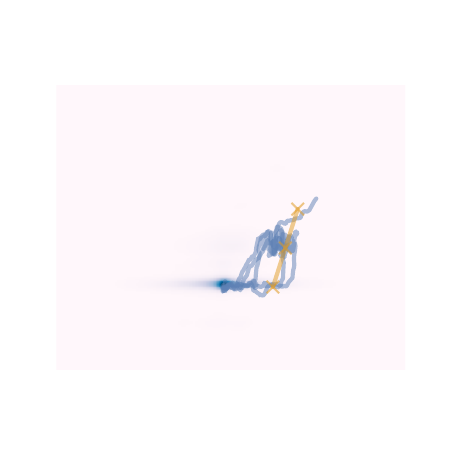}
        \caption{\texttt{frontflip-running}}
    \end{subfigure}
    \begin{subfigure}{.33\linewidth}
        \includegraphics[width=\linewidth]{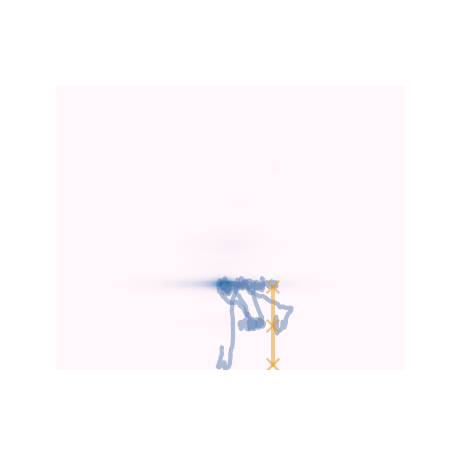}
        \caption{\texttt{backflip-running}}
    \end{subfigure}
    \caption{The 5 trajectories (blue) from the dataset that are closest to the expert trajectory in different \texttt{halfcheetah} tasks (orange) overlayed over a kernel density estimate of the goal occupancy in the full training dataset.} \label{fig:dset-ch}
    \vspace{-4mm}
\end{figure}
\begin{figure}[H]
    \vspace{-3mm}
    \begin{subfigure}{.33\linewidth}
        \includegraphics[width=\linewidth]{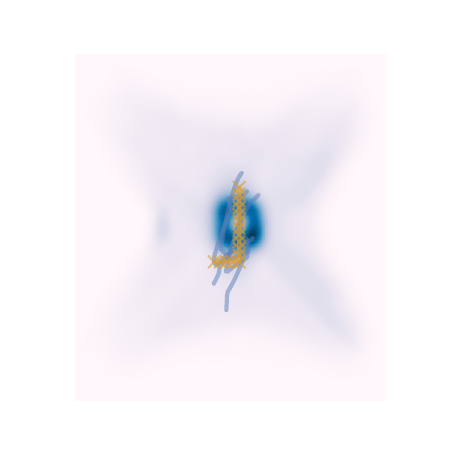}
        \caption{\texttt{L-dense}} 
    \end{subfigure}
    \begin{subfigure}{.33\linewidth}
        \includegraphics[width=\linewidth]{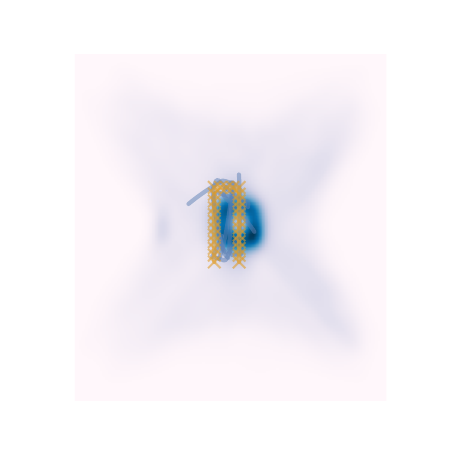}
        \caption{\texttt{U-dense}}
    \end{subfigure}
    \begin{subfigure}{.33\linewidth}
        \includegraphics[width=\linewidth]{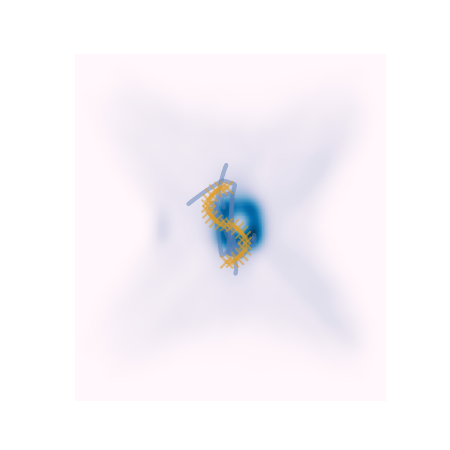}
        \caption{\texttt{S-dense}}
    \end{subfigure}
    \caption{The 5 trajectories (blue) from the dataset that are closest to the expert trajectory in different \texttt{fetch\_slide\_large\_2D} tasks (orange) overlayed over a kernel density estimate of the goal occupancy in the full training dataset.} \label{fig:dset-fsld}
    \vspace{-4mm}
\end{figure}
\begin{figure}[H]
    \vspace{-4mm}
    \begin{subfigure}{.33\linewidth}
        \includegraphics[width=\linewidth]{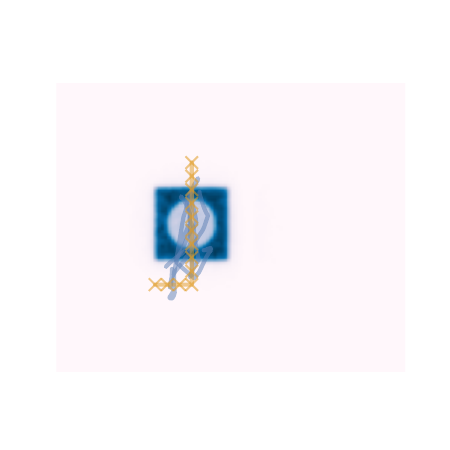}
        \caption{\texttt{L-dense}} 
    \end{subfigure}
    \begin{subfigure}{.33\linewidth}
        \includegraphics[width=\linewidth]{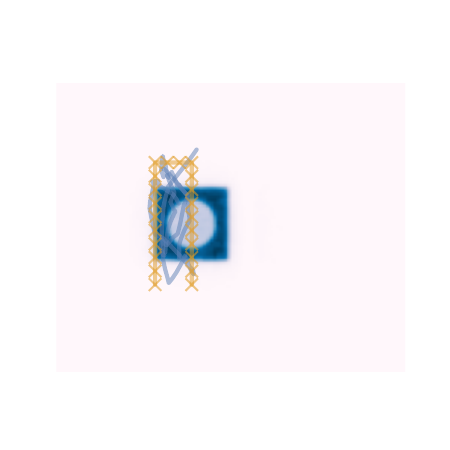}
        \caption{\texttt{U-dense}}
    \end{subfigure}
    \begin{subfigure}{.33\linewidth}
        \includegraphics[width=\linewidth]{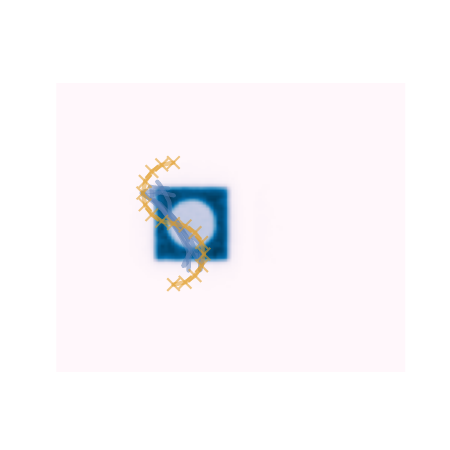}
        \caption{\texttt{S-dense}}
    \end{subfigure}
    \caption{The 5 trajectories (blue) from the dataset that are closest to the expert trajectory in different \texttt{fetch\_push} tasks (orange) overlayed over a kernel density estimate of the goal occupancy in the full training dataset.} \label{fig:dset-fp}
    \vspace{-4mm}
\end{figure}
\begin{figure}[H]
    \vspace{-4mm}
    \begin{subfigure}{.33\linewidth}
        \includegraphics[width=\linewidth]{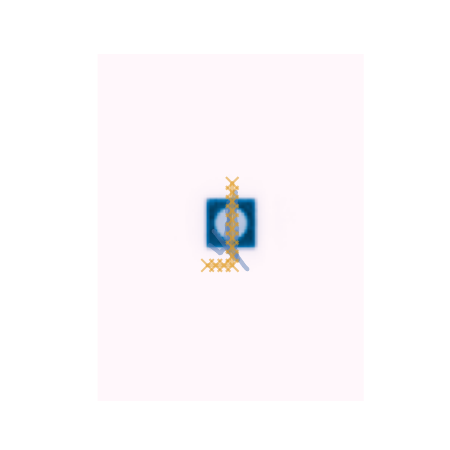}
        \caption{\texttt{L-dense}} 
    \end{subfigure}
    \begin{subfigure}{.33\linewidth}
        \includegraphics[width=\linewidth]{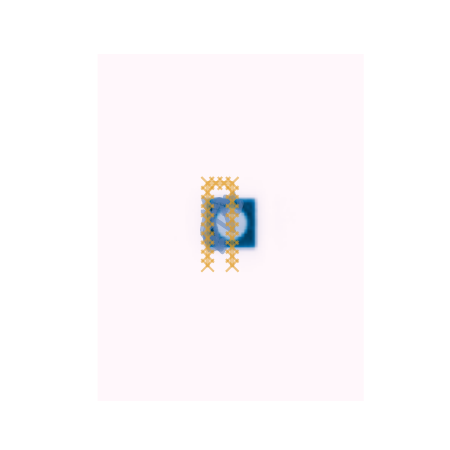}
        \caption{\texttt{U-dense}}
    \end{subfigure}
    \begin{subfigure}{.33\linewidth}
        \includegraphics[width=\linewidth]{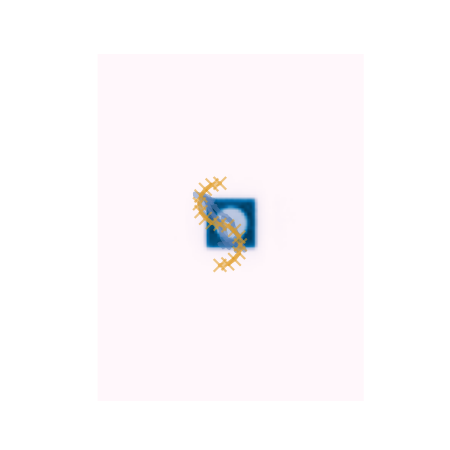}
        \caption{\texttt{S-dense}}
    \end{subfigure}
    \caption{The 5 trajectories (blue) from the dataset that are closest to the expert trajectory in different \texttt{fetch\_pick\_and\_place} tasks (orange) overlayed over a kernel density estimate of the goal occupancy in the full training dataset.} \label{fig:dset-fpp}
    \vspace{-4mm}
\end{figure}
\begin{figure}[H]
    \vspace{-3mm}
    \begin{subfigure}{.49\linewidth}
        \includegraphics[width=\linewidth]{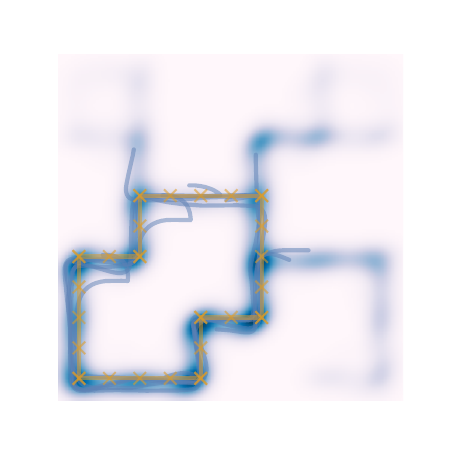}
        \caption{\texttt{circle-dense}} 
    \end{subfigure}\hfill
    \begin{subfigure}{.49\linewidth}
        \includegraphics[width=\linewidth]{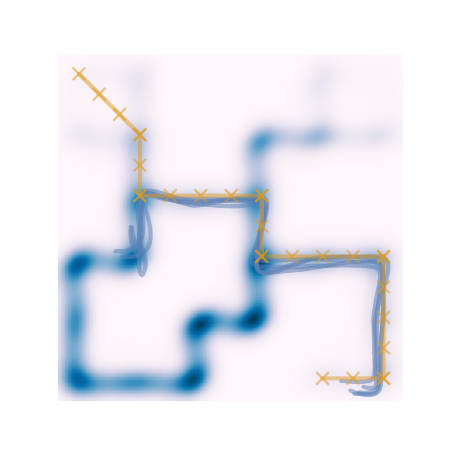}
        \caption{\texttt{path-dense}}
    \end{subfigure}
    \caption{The 5 trajectories (blue) from the dataset that are closest to the expert trajectory in different \texttt{pointmaze\_medium} tasks (orange) overlayed over a kernel density estimate of the goal occupancy in the full training dataset.} \label{fig:dset-maze}
    \vspace{-8mm}
\end{figure}

\subsection{Hyperparameters}
\begin{table}[H]
  \centering
  \caption{TD-MPC2 Hyperparameters. We have adopted these unchanged from \citet{hansenTDMPC2ScalableRobust2024}} \label{tab:tdmpc2-hparams}
  \vspace{12pt}
  \begin{subtable}{.49\linewidth}
    \centering
    \begin{tabular}{lc}
      \toprule
      Name & Value \\
      \midrule
      \texttt{lr} & 3e-4 \\
      \texttt{batch\_size} & 256 \\
      \texttt{n\_steps} (``\texttt{horizon}'') & 3 \\  %
      \texttt{rho} & 0.5 \\
      \texttt{grad\_clip\_norm} & 20 \\
      \texttt{enc\_lr\_scale} & 0.3 \\
      \texttt{value\_coef} & 0.1 \\
      \texttt{reward\_coef} & 0.1 \\
      \texttt{consistency\_coef} & 20 \\
      \texttt{tau} & 0.01 \\
      \texttt{log\_std\_min} & -10 \\
      \texttt{log\_std\_max} & 2 \\
      \texttt{entropy\_coef} & 1e-4 \\
      \bottomrule
    \end{tabular}
  \end{subtable}
  \hfill
  \begin{subtable}{.49\linewidth}
    \centering
    \begin{tabular}{lc}
      \toprule
      Name & Value \\
      \midrule
      \texttt{num\_bins} & 101 \\
      \texttt{vmin} & -10 \\
      \texttt{vmax} & 10 \\
      \texttt{num\_enc\_layers} & 2 \\
      \texttt{enc\_dim} & 256 \\
      \texttt{num\_channels} & 32 \\
      \texttt{mlp\_dim} & 512 \\
      \texttt{latent\_dim} & 512 \\
      \texttt{bin\_dim} & 12 \\
      \texttt{num\_q} & 5 \\
      \texttt{dropout} & 0.01 \\
      \texttt{simnorm\_dim} & 8 \\
      \bottomrule
    \end{tabular}
  \end{subtable}
\end{table}

\begin{table}[H]
    \centering
    \caption{Hyperparameters used for iCEM \citep{pinneriSampleefficientCrossEntropyMethod2020}. We use the implementation from \citet{Pineda2021MBRL}.} \label{tab:icem-hparams}
    \begin{subtable}{.49\linewidth}
      \centering
      \caption{ICEM hyperparameters for all MPC planners.}
      \begin{tabular}{lc}
      \toprule
      Name & Value \\
      \midrule    
      \texttt{num\_iterations} & 4 \\ 
      \texttt{population\_size} & 512 \\
      \texttt{elite\_ratio} & 0.01 \\
      \texttt{population\_decay\_factor} & 1.0 \\
      \texttt{colored\_noise\_exponent} & 2.0 \\
      \texttt{keep\_elite\_frac} & 1.0 \\
      \texttt{alpha} & 0.1 \\
      \bottomrule
    \end{tabular}
    \end{subtable}
    \hfill
    \begin{subtable}{.49\linewidth}
      \centering
      \caption{ICEM hyperparameters for curious exploration.}
      \begin{tabular}{lc}
        \toprule
        Name & Value \\
        \midrule    
        \texttt{num\_iterations} & 3 \\ 
        \texttt{population\_size} & 512 \\
        \texttt{elite\_ratio} & 0.02 \\
        \texttt{population\_decay\_factor} & 0.5 \\
        \texttt{colored\_noise\_exponent} & 2.0 \\
        \texttt{keep\_elite\_frac} & 1.0 \\
        \texttt{alpha} & 0.1 \\
        \texttt{horizon} & 20 \\
        \bottomrule
      \end{tabular}
    \end{subtable}
\end{table}

\subsection{FB Implementation Details}\label{sec:fb-details}
Since there is no implementation available for FB-IL directly, we have adopted the code for FB~\citep{touatiLearningOneRepresentation2021} according to the architectural details in appendix D.3 and the hyperparameters in appendix D.4 of FB-IL~\citep{fbil2024}.
The main architectural changes consisted of changing the state input of the $B$ networks to only a goal input, as suggested in \cite{touatiLearningOneRepresentation2021} as well as adding a last layer in the $B$ networks for L2 projection, batch normalization, or nothing, depending on the environment.

We follow the specifications of \citet{fbil2024} whenever possible.
As \texttt{halfcheetah} and \texttt{maze} are also used in their evaluations we have adopted their hyperparameters for these environments.
For our \texttt{fetch} environments, we used the hyperparameters most common in the environments except for the discount $\gamma$ which we adjusted to 0.95 to account for the shorter episode length.
Finally, we have found that the FB framework seems to be ill-adjusted to be trained on an order of magnitude less data then in the original experiments \citep{touatiLearningOneRepresentation2021,fbil2024}.
For some environments, performance started to deteriorate rather quickly, so we to report the best performance encountered during training when evaluating every 50k steps (see~\ref{sec:fb-hparams}).
We provide the full set of hyperparameters in \tabref{tab:fb-hparams}.

\begin{table}[H]
    \centering
    \caption{Hyperparameters used for FB-IL training. Closely follows table 1 in appendix D.4 of \cite{fbil2024} for \texttt{halfcheetah} and \texttt{maze}.}\label{tab:fb-hparams}
    \begin{tabular}{lccc}
    \toprule
    Environment & \texttt{fetch} & \texttt{halfcheetah} & \texttt{maze} \\
    \midrule
    Representation dimension & 50 & 50 & 100 \\
    Batch size & 2048 & 2048 &  1024 \\
    Discount factor $\gamma$ & 0.95 & 0.98 & 0.99 \\
    Optimizer & Adam & Adam & Adam \\
    learning rate of $F$ & $10^{-4}$ & $10^{-4}$ & $10^{-4}$ \\
    learning rate of $B$ & $10^{-4}$ & $10^{-4}$ & $10^{-6}$ \\
    learning rate of $\pi$ & $10^{-4}$ & $10^{-4}$ & $10^{-6}$ \\
    Normalization of $B$ & \texttt{L2} & None & \texttt{Batchnorm} \\
    Momentum for target networks & 0.99 & 0.99 & 0.99 \\
    Stddev for policy smoothing  & 0.2 & 0.2 & 0.2 \\
    Truncation level for policy smoothing & 0.3 & 0.3 & 0.3 \\
    Regularization weight for orthonormality & 1 & 1 & 1 \\
    Numer of training steps & $2\cdot10^6$ & $2\cdot10^6$ & $2\cdot10^6$ \\
    \bottomrule
    \end{tabular}
\end{table}

\clearpage
\section{Hyperparameter Searches}
\subsection{Classifier Threshold}
As mentioned in the main text, we perform an extensive hyperparameter search for the threshold value of the goal classifier (Cls) for the myopic methods Pi+Cls and MPC+Cls as well as for the ablation of our method ZILOT+Cls.
In figures~\ref{fig:hparam} and~\ref{fig:hparam-zilot-cls} we show the performance of the three respective planners in all five environments and denote the threshold values that yield the best performance per environment.
Interestingly, in some of the \texttt{fetch} environments not all tasks attain maximum performance with the same threshold value showing that this hyperparameter is rather hard to tune.
\begin{figure}[h]
	\centering
	\begin{subfigure}{.49\linewidth}
		\centering
		\includegraphics[width=.925\linewidth]{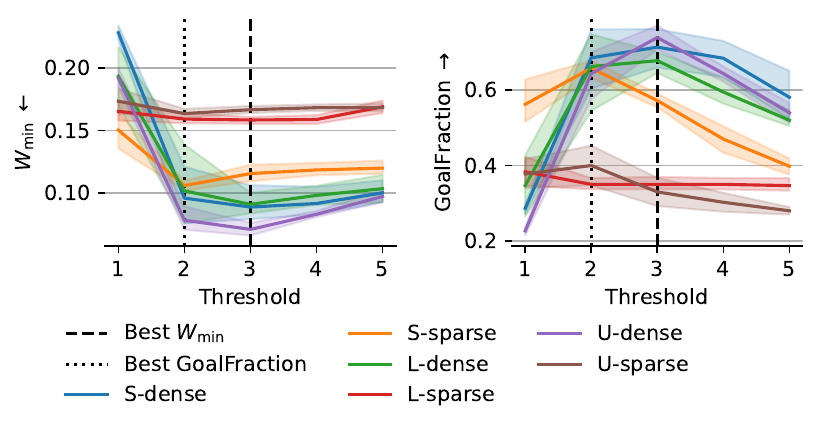}
        \vspace{-9pt}
		\caption{\texttt{fetch\_pick\_and\_place}}
	\end{subfigure}\hfill
    \begin{subfigure}{.49\linewidth}
		\centering
		\includegraphics[width=.925\linewidth]{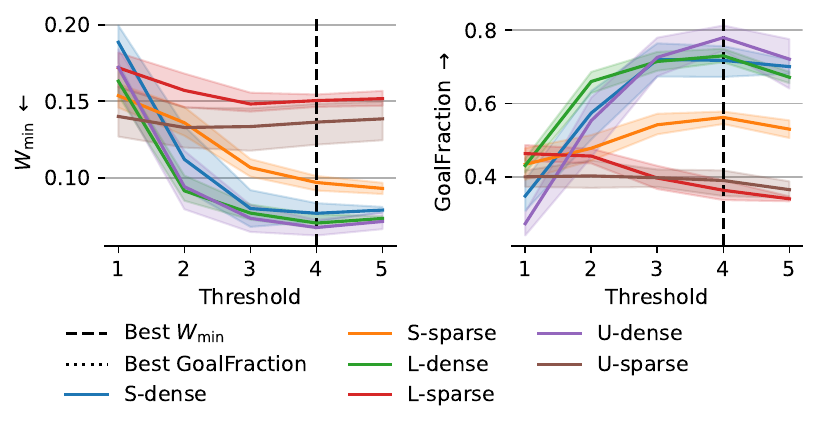}
        \vspace{-9pt}
		\caption{\texttt{fetch\_push}}
	\end{subfigure}\hfill
	\begin{subfigure}{.49\linewidth}
		\centering
		\includegraphics[width=.925\linewidth]{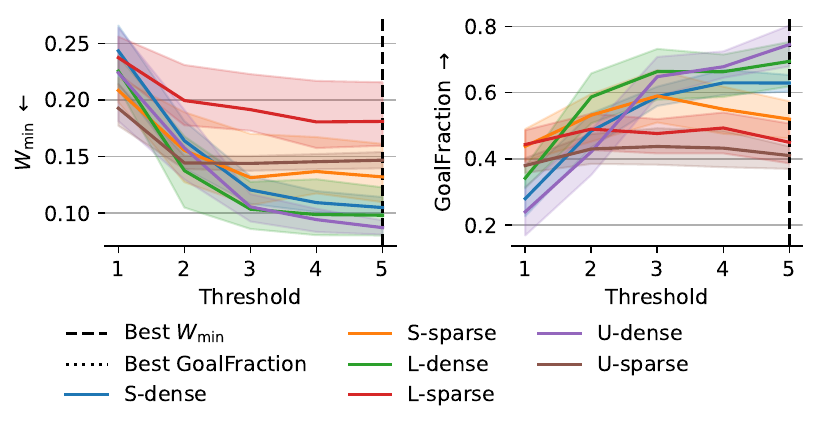}
        \vspace{-9pt}
		\caption{\texttt{fetch\_slide\_large\_2D}}
	\end{subfigure}\hfill
	\begin{subfigure}{.49\linewidth}
		\centering
		\includegraphics[width=.925\linewidth]{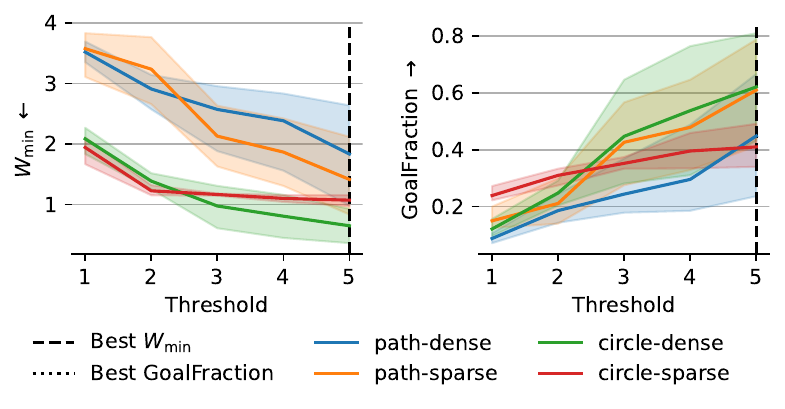}
        \vspace{-9pt}
		\caption{\texttt{pointmaze\_medium}}
	\end{subfigure}\hfill
	\begin{subfigure}{.49\linewidth}
		\centering
		\includegraphics[width=.925\linewidth]{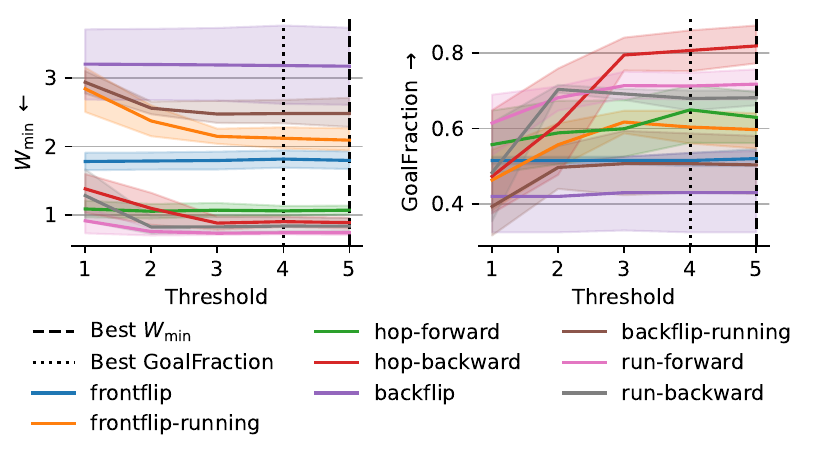}
        \vspace{-9pt}
		\caption{\texttt{halfcheetah}}
	\end{subfigure}
    \vspace{-8pt}
	\caption{ZILOT+Cls hyperparameter search for Cls threshold.}\label{fig:hparam-zilot-cls}
\end{figure}

\begin{figure}
	\centering
	\begin{subfigure}{.49\linewidth}
		\centering
		\includegraphics[width=.925\linewidth]{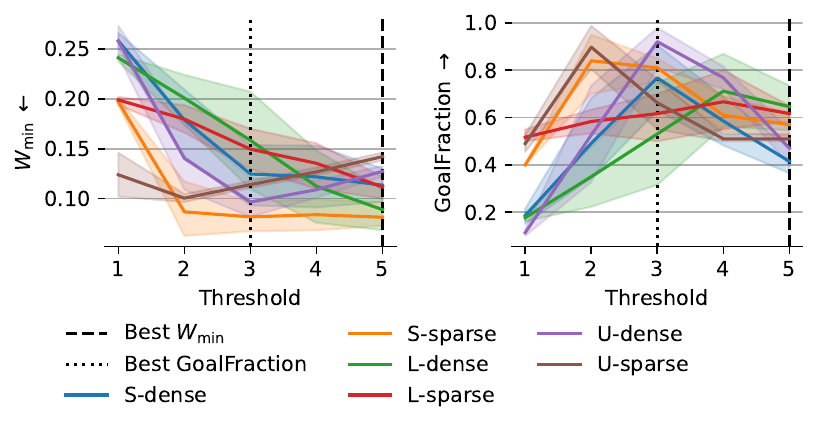}
        \vspace{-9pt}
		\caption{Pi+Cls for \texttt{fetch\_pick\_and\_place}}
	\end{subfigure}\hfill
    \begin{subfigure}{.49\linewidth}
		\centering
		\includegraphics[width=.925\linewidth]{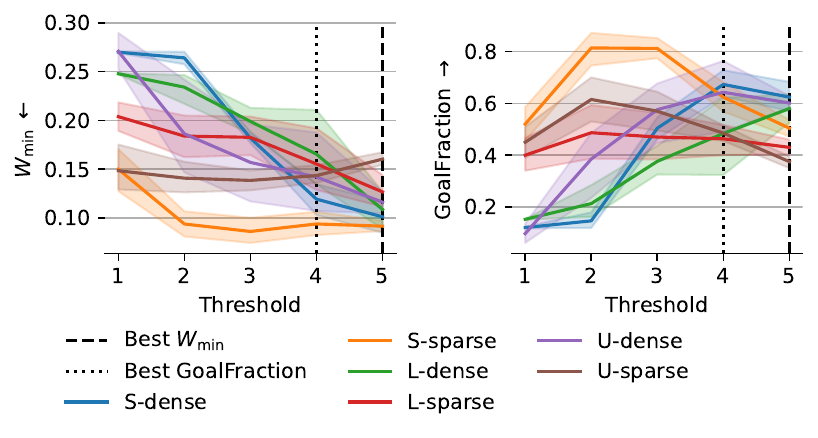}
        \vspace{-9pt}
		\caption{MPC+Cls for \texttt{fetch\_pick\_and\_place}}
	\end{subfigure}\hfill

	\begin{subfigure}{.49\linewidth}
		\centering
		\includegraphics[width=.925\linewidth]{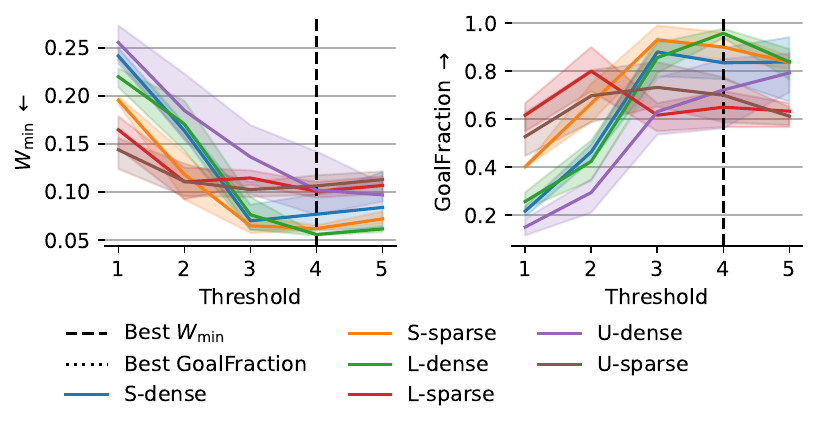}
        \vspace{-9pt}
		\caption{Pi+Cls for \texttt{fetch\_push}}
	\end{subfigure}\hfill
	\begin{subfigure}{.49\linewidth}
		\centering
		\includegraphics[width=.925\linewidth]{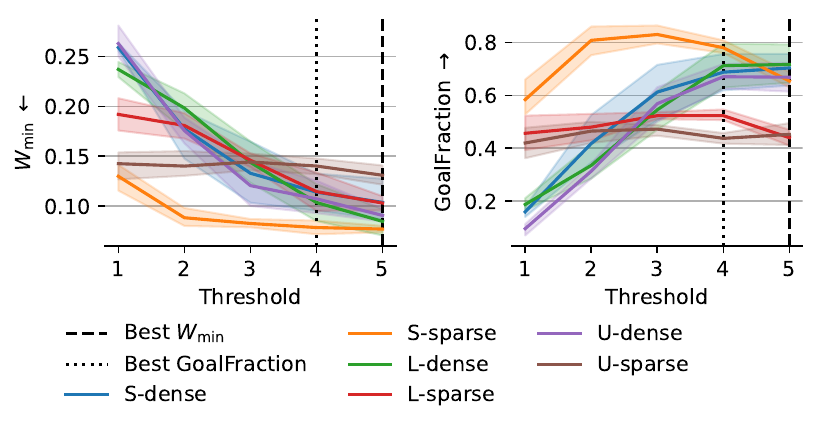}
        \vspace{-9pt}
		\caption{\texttt{fetch\_push}}
	\end{subfigure}\hfill

	\begin{subfigure}{.49\linewidth}
		\centering
		\includegraphics[width=.925\linewidth]{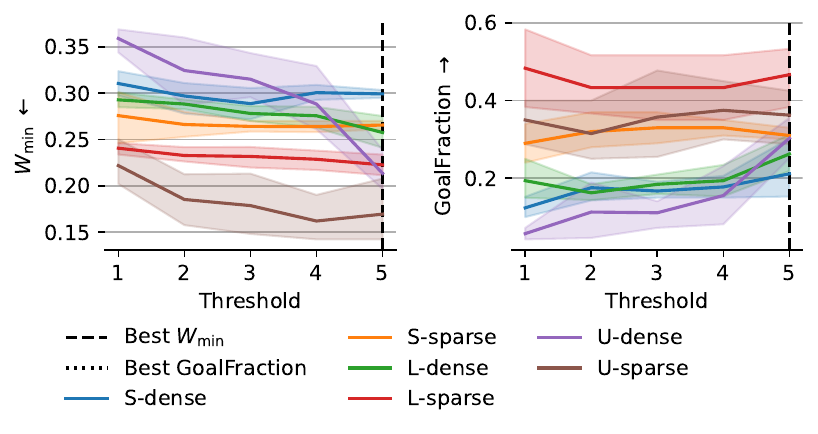}
        \vspace{-9pt}
		\caption{Pi+Cls for \texttt{fetch\_slide\_large\_2D}}
	\end{subfigure}\hfill
	\begin{subfigure}{.49\linewidth}
		\centering
		\includegraphics[width=.925\linewidth]{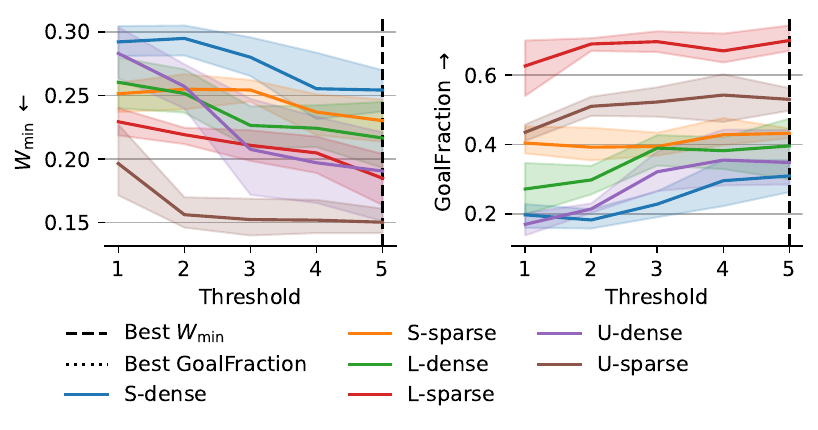}
        \vspace{-9pt}
		\caption{MPC+Cls for \texttt{fetch\_slide\_large\_2D}}
	\end{subfigure}\hfill

	\begin{subfigure}{.49\linewidth}
		\centering
		\includegraphics[width=.925\linewidth]{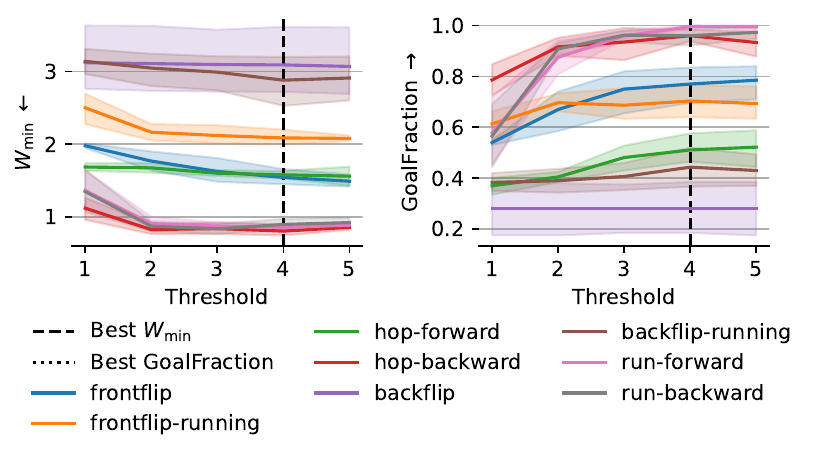}
        \vspace{-9pt}
		\caption{Pi+Cls for \texttt{halfcheetah}}
	\end{subfigure}\hfill
	\begin{subfigure}{.49\linewidth}
		\centering
		\includegraphics[width=.925\linewidth]{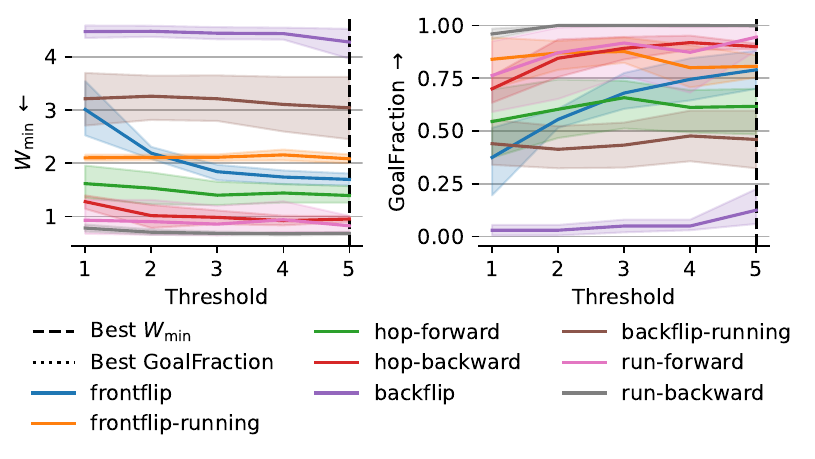}
        \vspace{-9pt}
		\caption{MPC+Cls for \texttt{halfcheetah}}
	\end{subfigure}\hfill

	\begin{subfigure}{.49\linewidth}
		\centering
		\includegraphics[width=.925\linewidth]{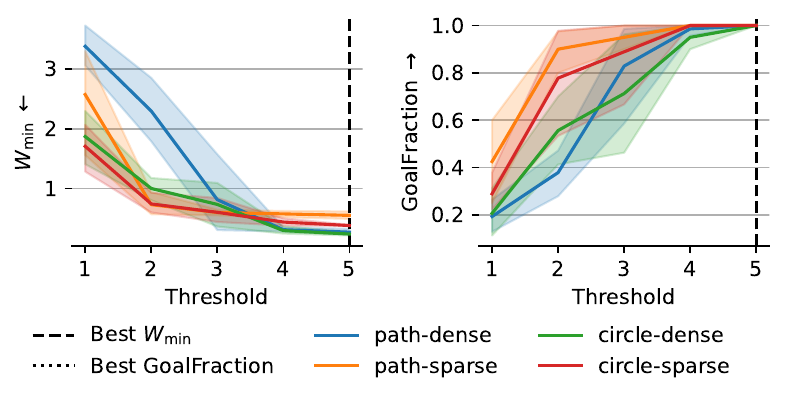}
        \vspace{-9pt}
		\caption{Pi+Cls for \texttt{pointmaze\_medium}}
	\end{subfigure}\hfill
	\begin{subfigure}{.49\linewidth}
		\centering
		\includegraphics[width=.925\linewidth]{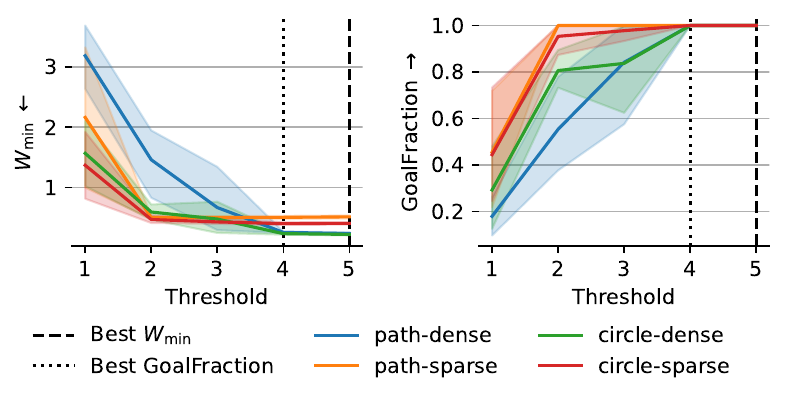}
        \vspace{-9pt}
		\caption{MPC+Cls for \texttt{pointmaze\_medium}}
	\end{subfigure}\hfill
    \vspace{-2mm}
	\caption{Pi+Cls and MPC+Cls hyperparameter searches for Cls threshold in each environment.}\label{fig:hparam}
\end{figure}

\clearpage
\subsection{FB-IL Training steps.}\label{sec:fb-hparams}
As mentioned in \secref{sec:fb-details} we report the best evaluation results of all FB methods that occur during training.
In figures~\ref{fig:er-fb-hparams} and~\ref{fig:rer-fb-hparams} we report the evaluation performance for different training lengths.
\begin{figure}[h]
	\centering
	\begin{subfigure}{.495\linewidth}
		\centering
		\includegraphics[width=\linewidth]{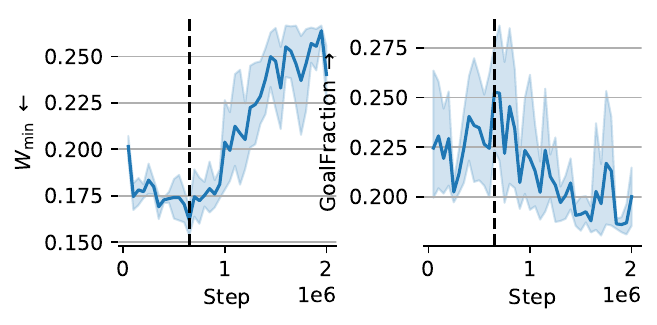}
		\caption{\texttt{fetch\_pick\_and\_place}}
	\end{subfigure}\hfill
	\begin{subfigure}{.495\linewidth}
		\centering
		\includegraphics[width=\linewidth]{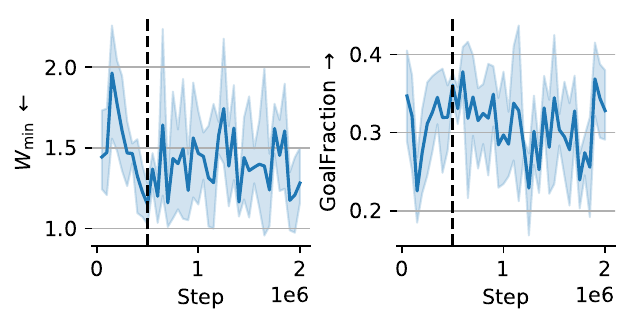}
		\caption{\texttt{pointmaze\_medium}}
	\end{subfigure}\hfill
	\begin{subfigure}{.495\linewidth}
		\centering
		\includegraphics[width=\linewidth]{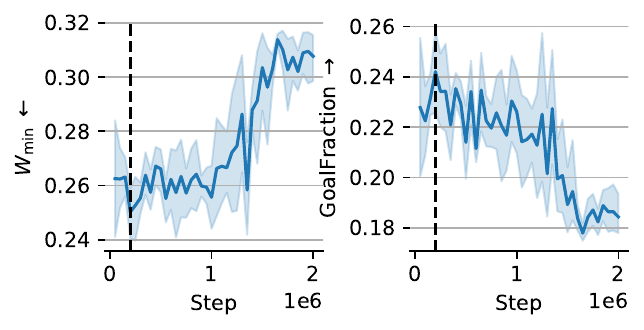}
		\caption{\texttt{fetch\_slide\_large\_2D}}
	\end{subfigure}\hfill
	\begin{subfigure}{.495\linewidth}
		\centering
		\includegraphics[width=\linewidth]{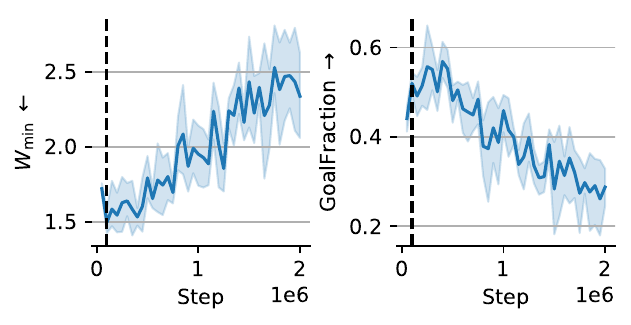}
		\caption{\texttt{halfcheetah}}
	\end{subfigure}\hfill
	\begin{subfigure}{.495\linewidth}
		\centering
		\includegraphics[width=\linewidth]{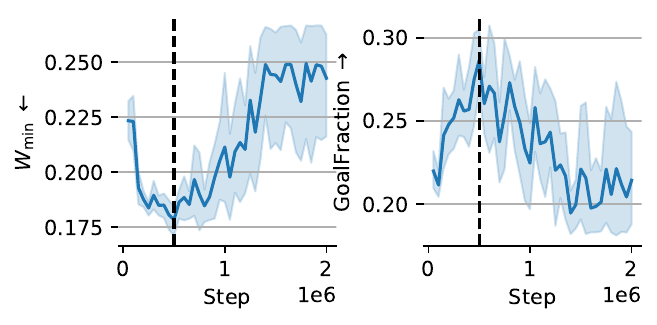}
		\caption{\texttt{fetch\_push}}
	\end{subfigure}\hfill
	\caption{${\text{ER}}_\text{FB}$ evaluation performance during training.}
    \label{fig:er-fb-hparams}
\end{figure}

\begin{figure}[h]
	\centering
	\begin{subfigure}{.495\linewidth}
		\centering
		\includegraphics[width=\linewidth]{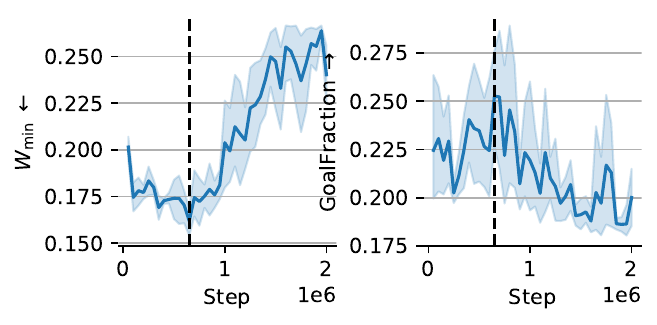}
		\caption{\texttt{fetch\_pick\_and\_place}}
	\end{subfigure}\hfill
	\begin{subfigure}{.495\linewidth}
		\centering
		\includegraphics[width=\linewidth]{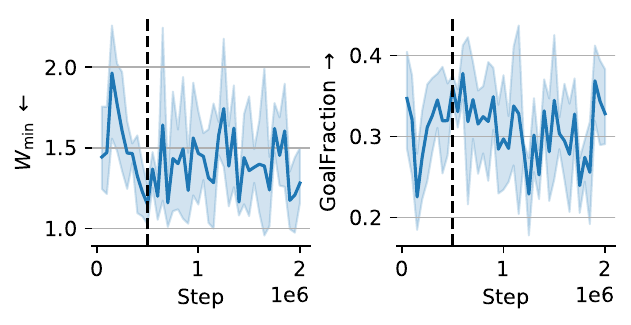}
		\caption{\texttt{pointmaze\_medium}}
	\end{subfigure}\hfill
	\begin{subfigure}{.495\linewidth}
		\centering
		\includegraphics[width=\linewidth]{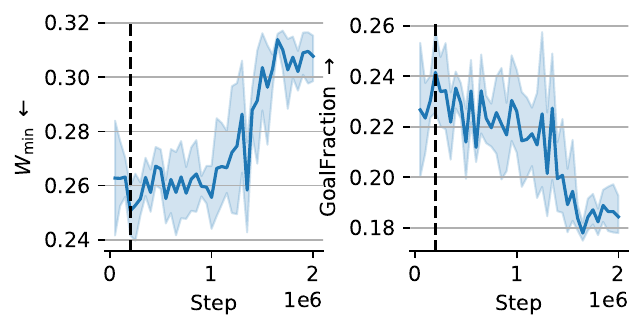}
		\caption{\texttt{fetch\_slide\_large\_2D}}
	\end{subfigure}\hfill
	\begin{subfigure}{.495\linewidth}
		\centering
		\includegraphics[width=\linewidth]{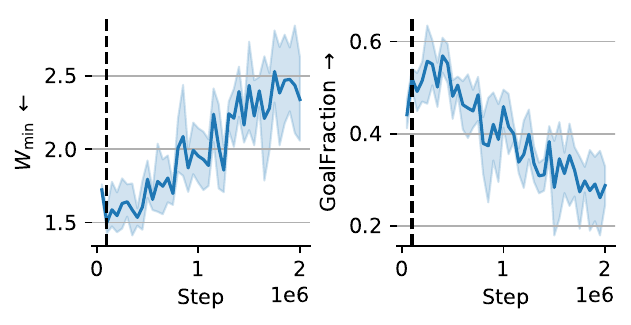}
		\caption{\texttt{halfcheetah}}
	\end{subfigure}\hfill
	\begin{subfigure}{.495\linewidth}
		\centering
		\includegraphics[width=\linewidth]{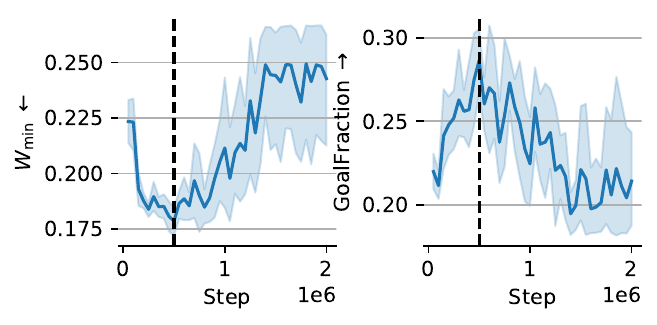}
		\caption{\texttt{fetch\_push}}
	\end{subfigure}\hfill
	\caption{${\text{RER}}_\text{FB}$ evaluation performance during training.}
    \label{fig:rer-fb-hparams}
\end{figure}

\clearpage
\section{Additional Qualitative Results} \label{sec:add-qual-res}
In the following, we present all goal-space trajectories across all planners, tasks, and seeds presented in this work.
Note that since the tasks of the \texttt{fetch} environments display some natural symmetries, we decided to split evaluations between all four symmetrical versions of them.
Further, we quickly want to stress that these trajectories are shown in goal-space.
This means that if the cube in \texttt{fetch} is not touched, as is the case in some cases for ZILOT+$h$, then the trajectory essentially becomes a single dot at the starting position.
Also note that Pi+Cls is completely deterministic, which is why its visualization appears to have less trajectories.
\begin{figure}[h]
	\centering
	\begin{subfigure}{0.42\linewidth}
		\includegraphics[width=\linewidth]{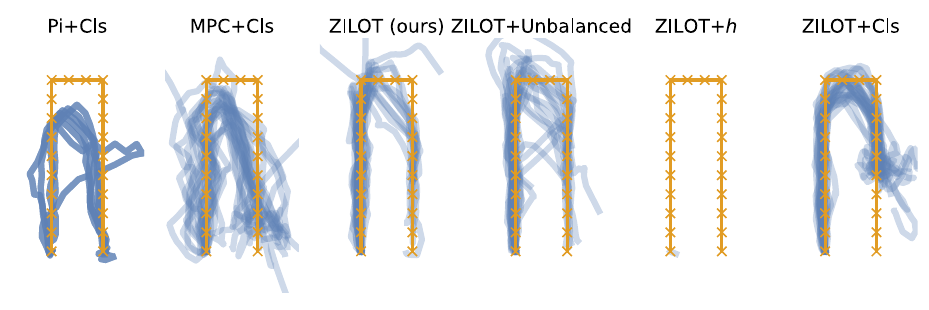}
		\includegraphics[width=\linewidth]{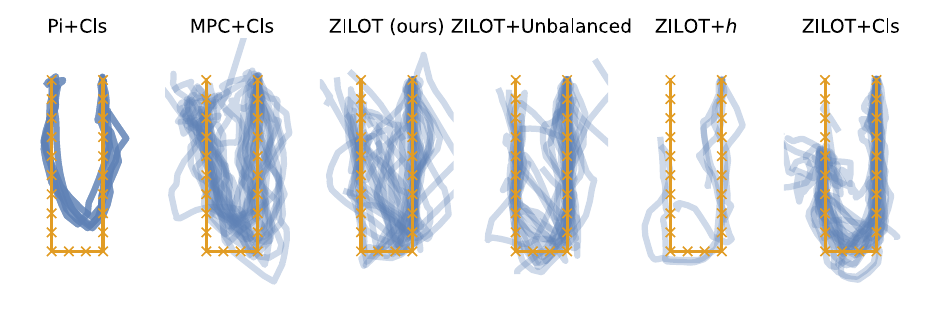}
		\includegraphics[width=\linewidth]{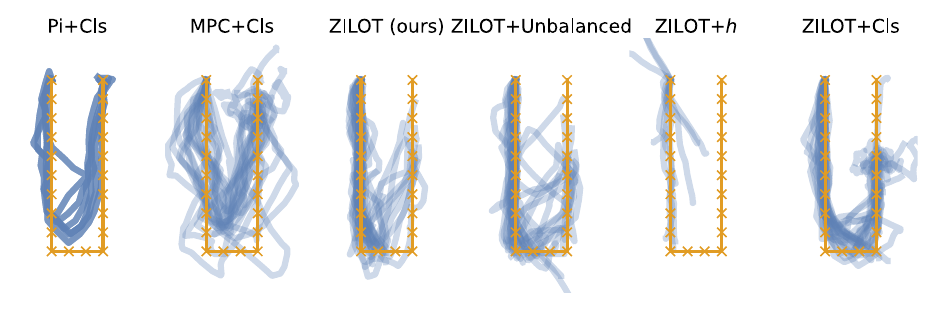}
		\includegraphics[width=\linewidth]{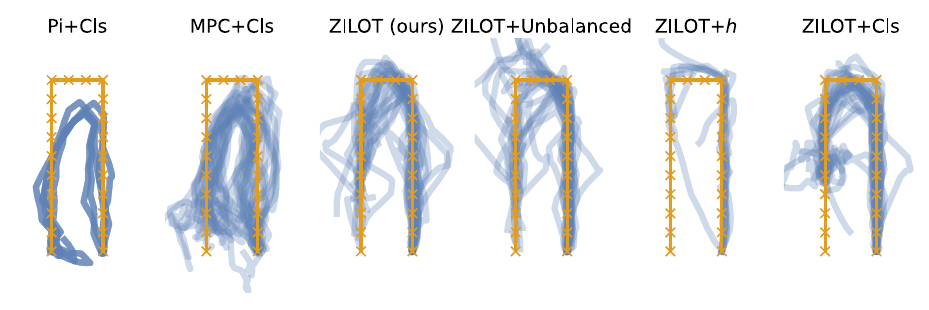}
		\vspace{-12pt}
		\caption{\texttt{U-dense}}
	\end{subfigure}\hfill
	\begin{subfigure}{0.42\linewidth}
		\includegraphics[width=\linewidth]{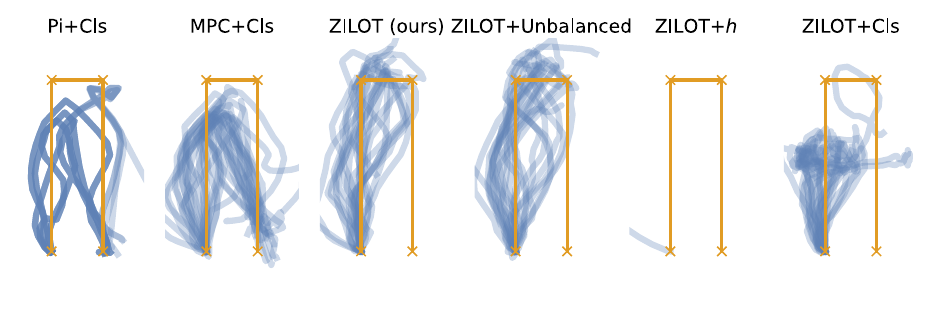}
		\includegraphics[width=\linewidth]{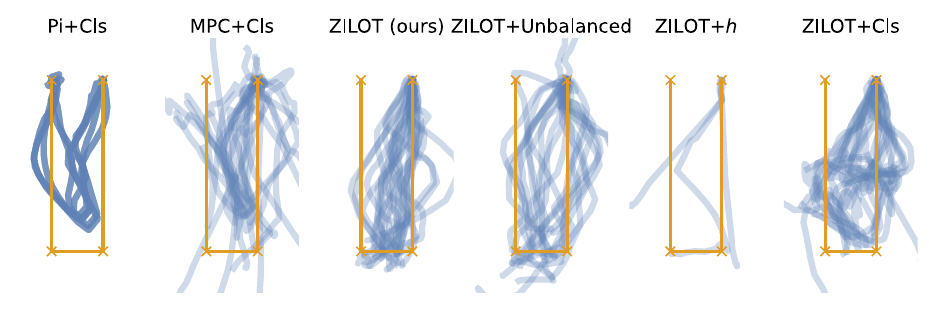}
		\includegraphics[width=\linewidth]{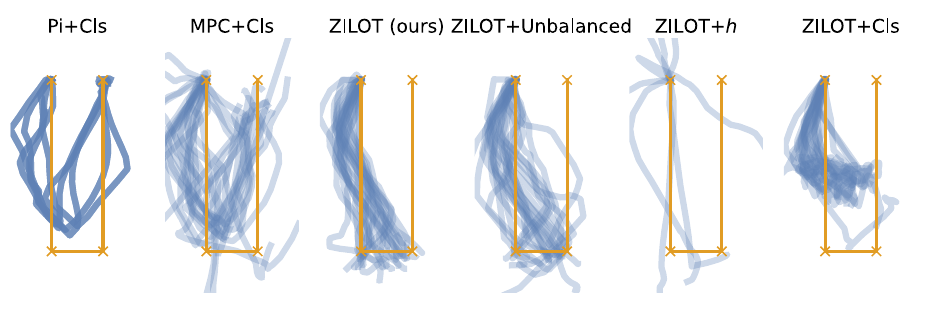}
		\includegraphics[width=\linewidth]{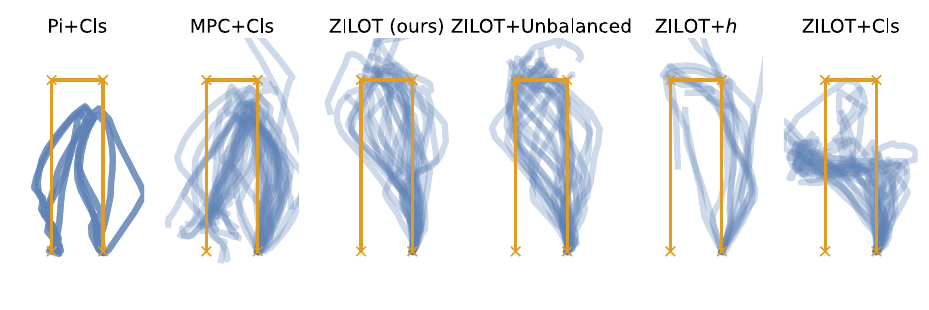}
		\vspace{-12pt}
		\caption{\texttt{U-sparse}}
	\end{subfigure}
	\vspace{-4pt}
	\caption{\texttt{fetch\_pick\_and\_place}}
\end{figure}
\begin{figure}
	\centering
	\begin{subfigure}{0.42\linewidth}
		\includegraphics[width=\linewidth]{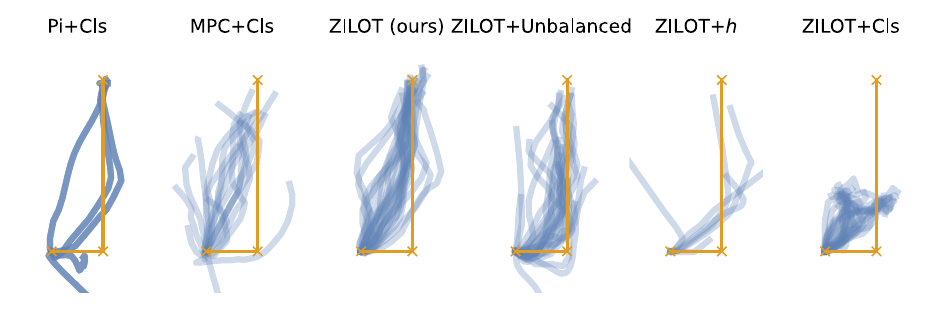}
		\includegraphics[width=\linewidth]{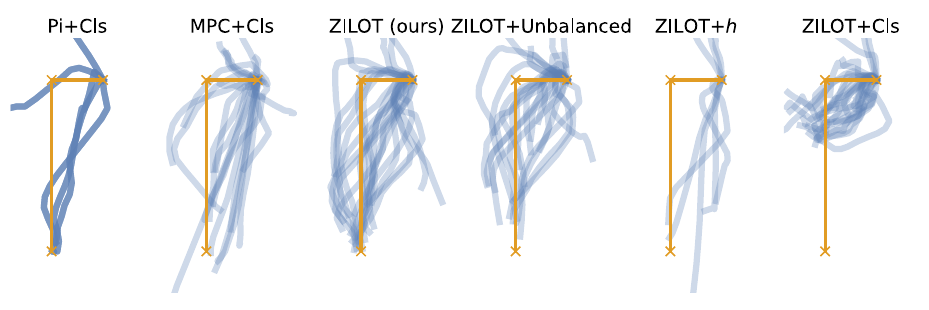}
		\includegraphics[width=\linewidth]{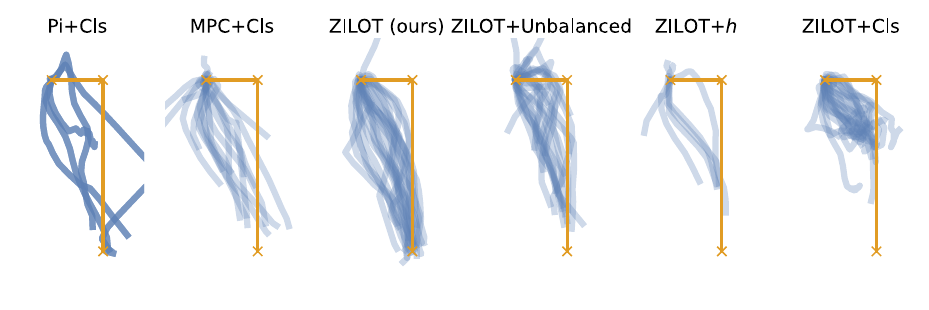}
		\includegraphics[width=\linewidth]{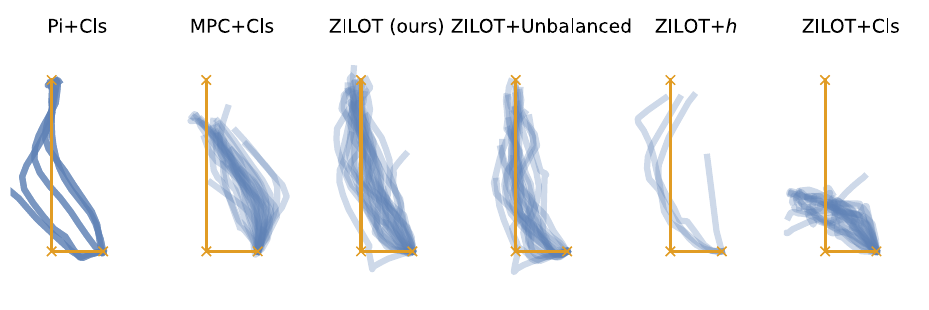}
		\vspace{-12pt}
		\caption{\texttt{L-sparse}}
	\end{subfigure}\hfill
	\begin{subfigure}{0.42\linewidth}
		\includegraphics[width=\linewidth]{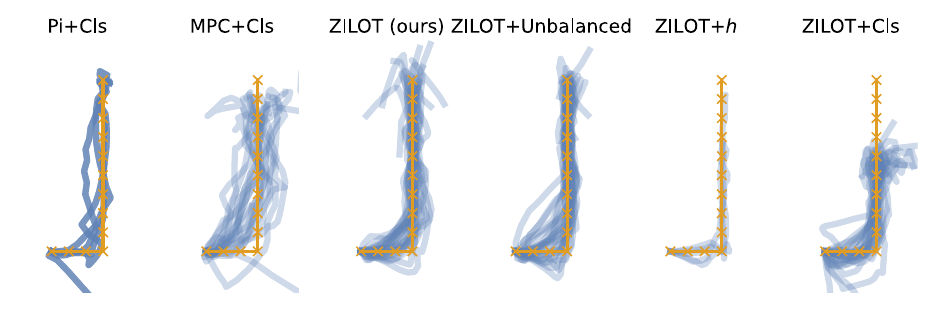}
		\includegraphics[width=\linewidth]{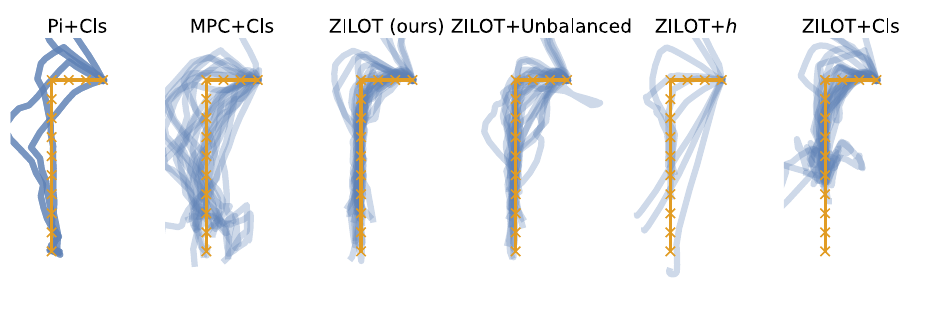}
		\includegraphics[width=\linewidth]{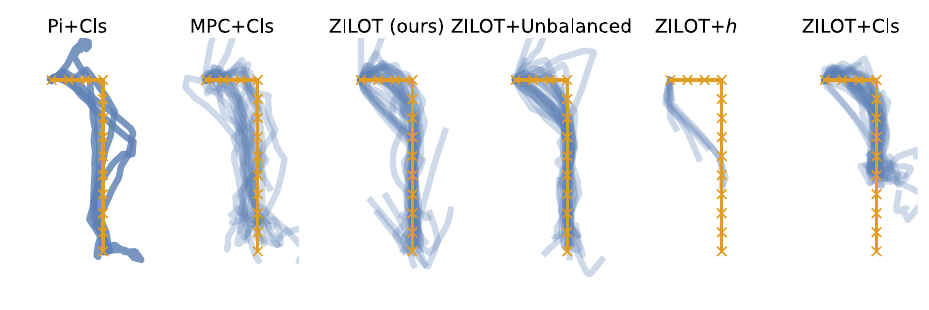}
		\includegraphics[width=\linewidth]{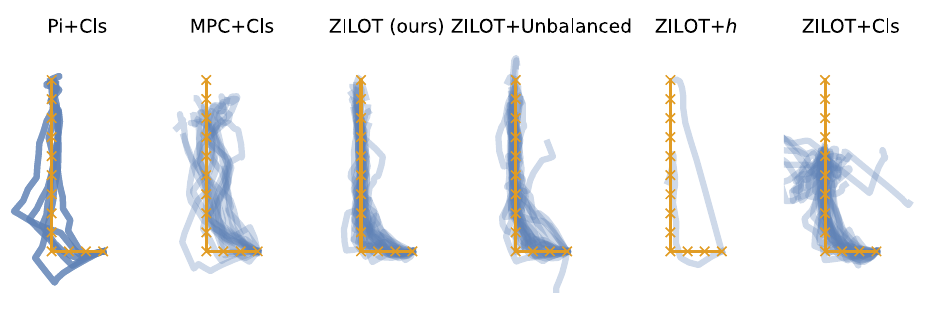}
		\vspace{-12pt}
		\caption{\texttt{L-dense}}
	\end{subfigure}
	\vspace{-4pt}
	\caption{\texttt{fetch\_pick\_and\_place}}
\end{figure}
\begin{figure}
	\centering
	\begin{subfigure}{0.42\linewidth}
		\includegraphics[width=\linewidth]{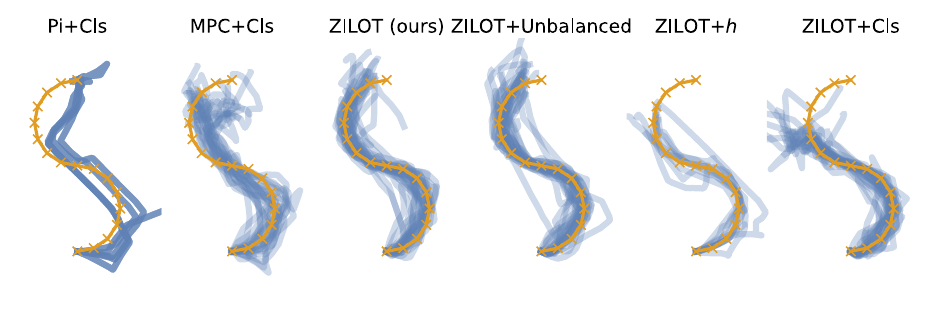}
		\includegraphics[width=\linewidth]{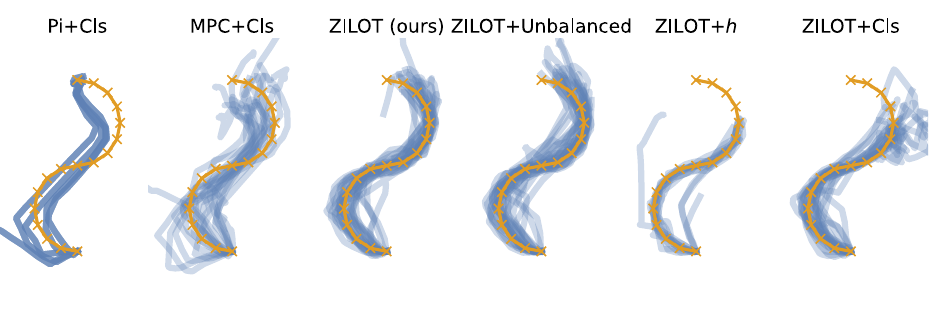}
		\includegraphics[width=\linewidth]{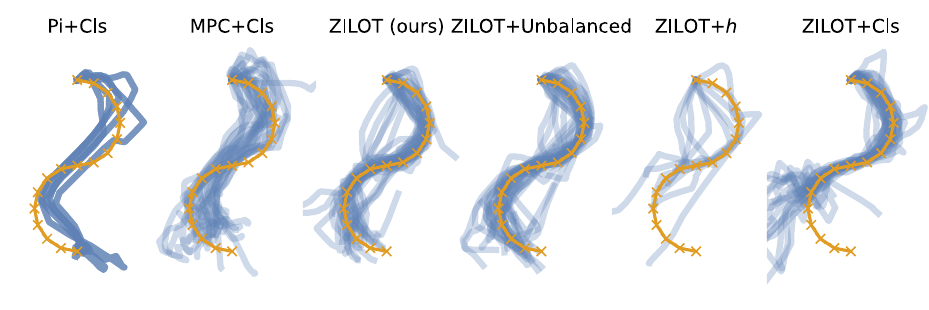}
		\includegraphics[width=\linewidth]{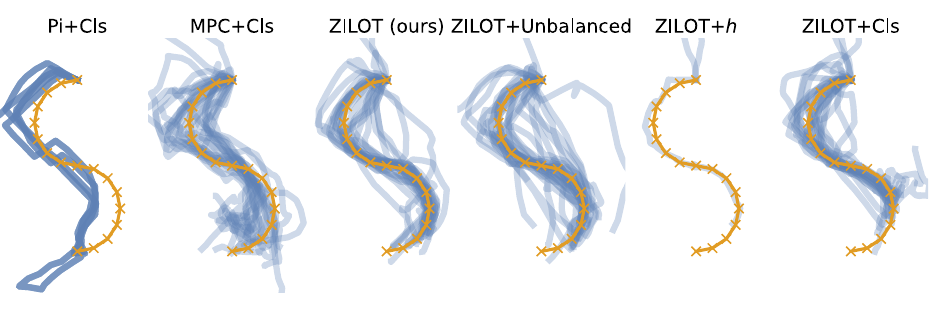}
		\vspace{-12pt}
		\caption{\texttt{S-dense}}
	\end{subfigure}\hfill
	\begin{subfigure}{0.42\linewidth}
		\includegraphics[width=\linewidth]{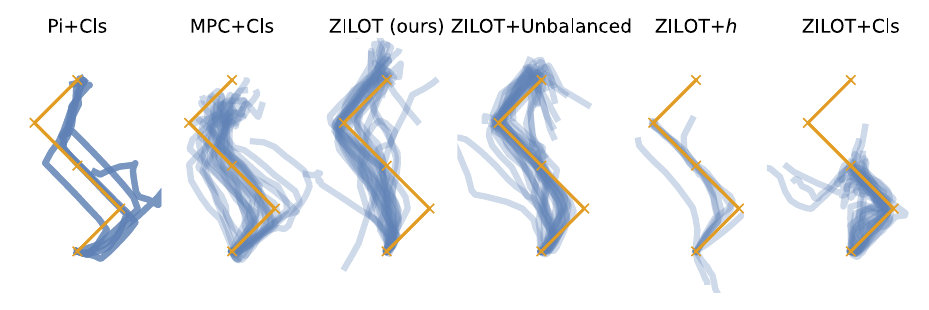}
		\includegraphics[width=\linewidth]{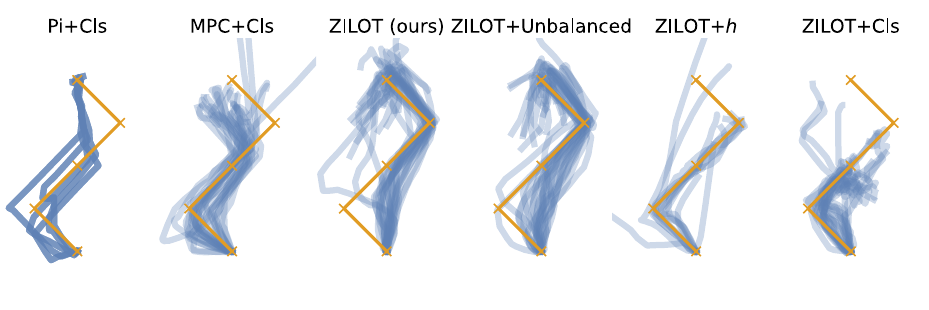}
		\includegraphics[width=\linewidth]{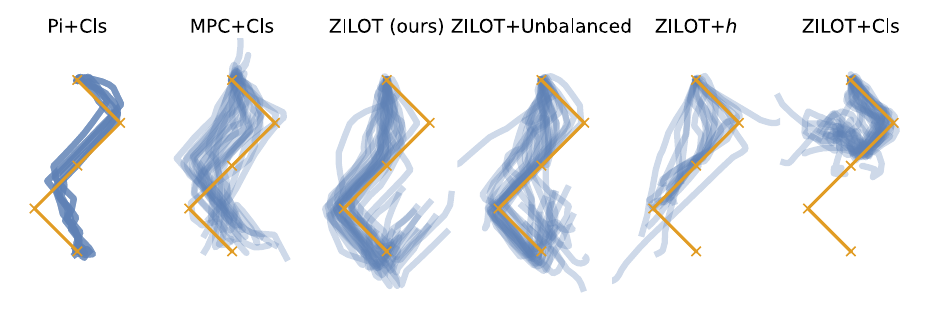}
		\includegraphics[width=\linewidth]{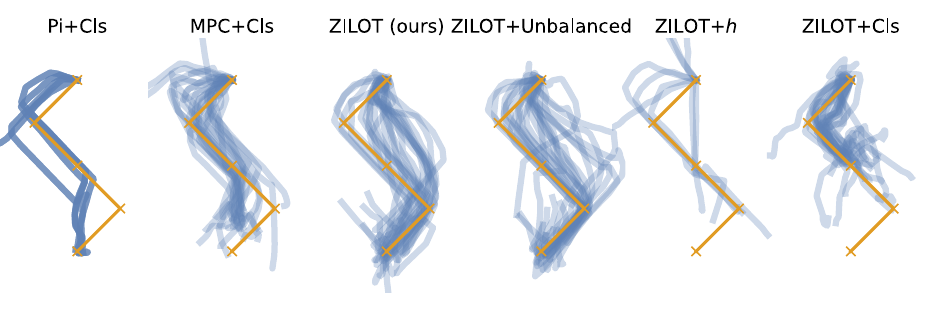}
		\vspace{-12pt}
		\caption{\texttt{S-sparse}}
	\end{subfigure}
	\vspace{-4pt}
	\caption{\texttt{fetch\_pick\_and\_place}}
\end{figure}
\begin{figure}
	\vspace{-12pt}
	\centering
	\begin{subfigure}{0.42\linewidth}
		\includegraphics[width=\linewidth]{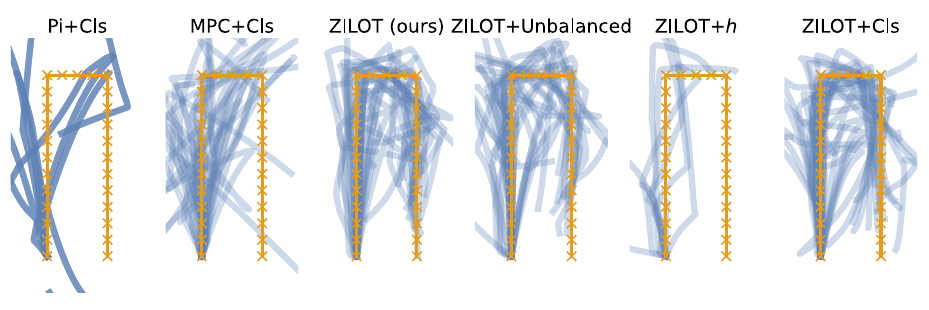}
		\includegraphics[width=\linewidth]{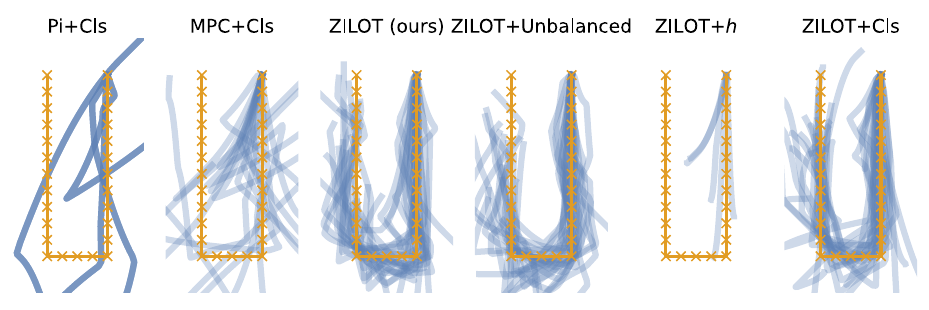}
		\includegraphics[width=\linewidth]{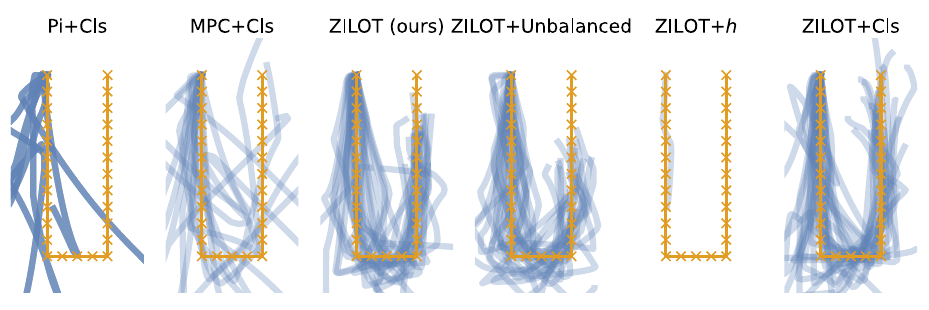}
		\includegraphics[width=\linewidth]{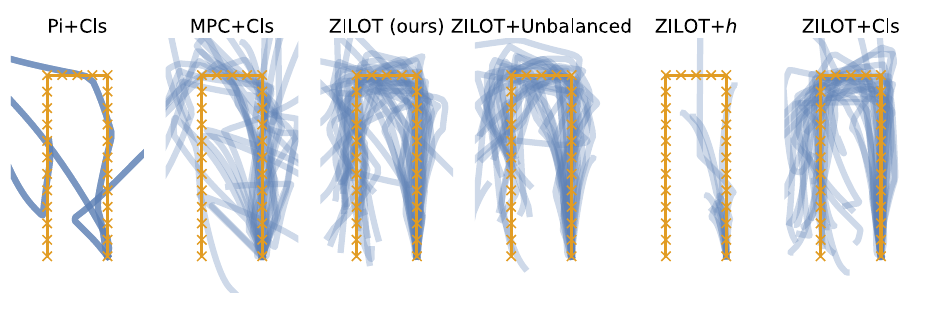}
		\vspace{-12pt}
		\caption{\texttt{U-dense}}
	\end{subfigure}\hfill
	\begin{subfigure}{0.42\linewidth}
		\includegraphics[width=\linewidth]{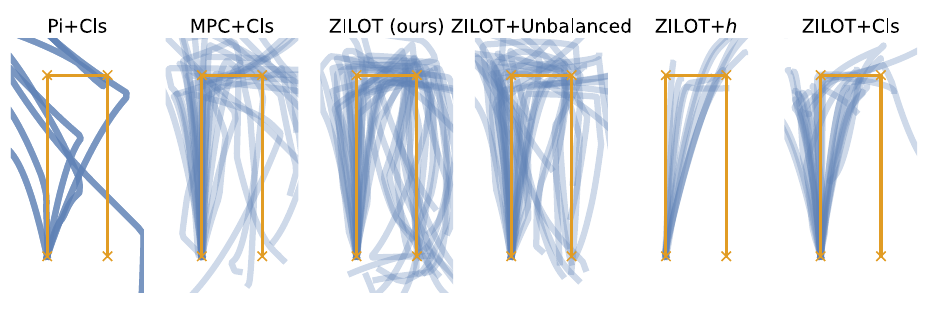}
		\includegraphics[width=\linewidth]{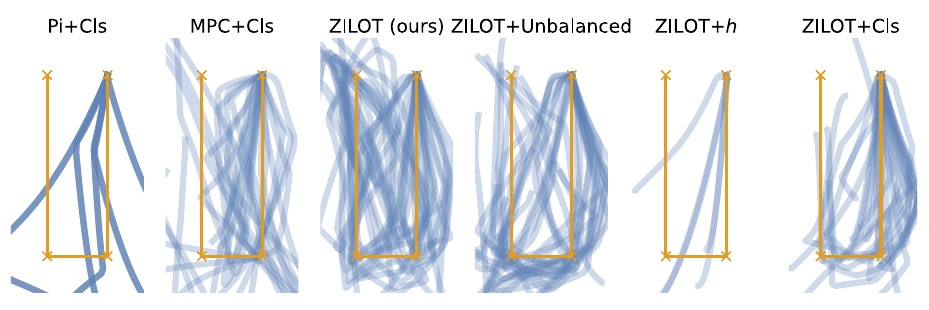}
		\includegraphics[width=\linewidth]{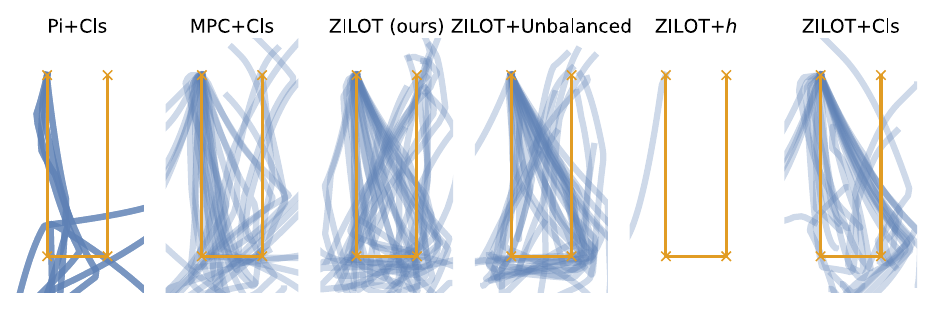}
		\includegraphics[width=\linewidth]{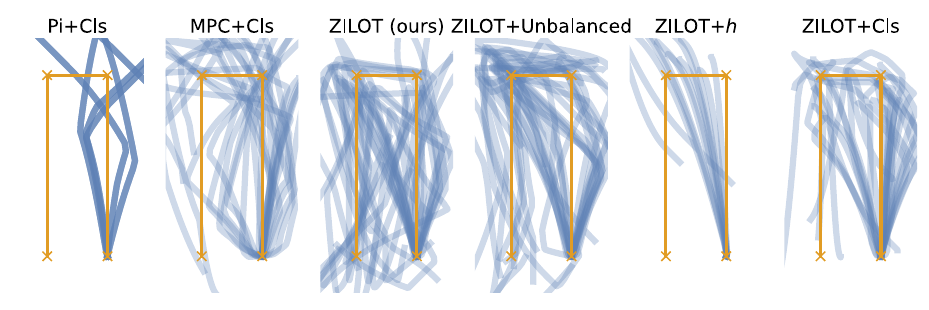}
		\vspace{-12pt}
		\caption{\texttt{U-sparse}}
	\end{subfigure}
	\vspace{-4pt}
	\caption{\texttt{fetch\_slide\_large\_2D}}
\end{figure}
\begin{figure}
	\centering
	\begin{subfigure}{0.42\linewidth}
		\includegraphics[width=\linewidth]{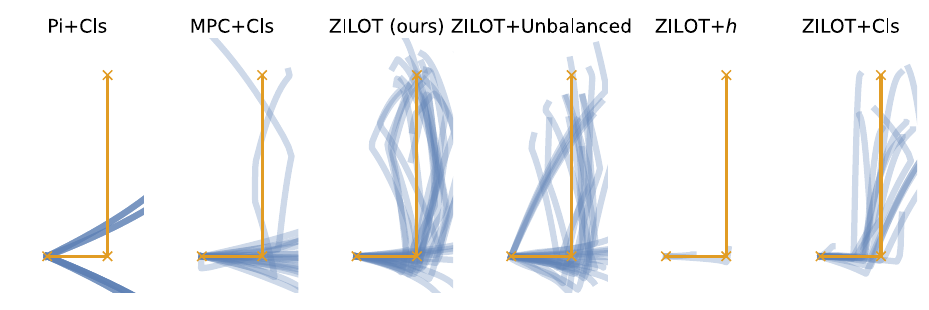}
		\includegraphics[width=\linewidth]{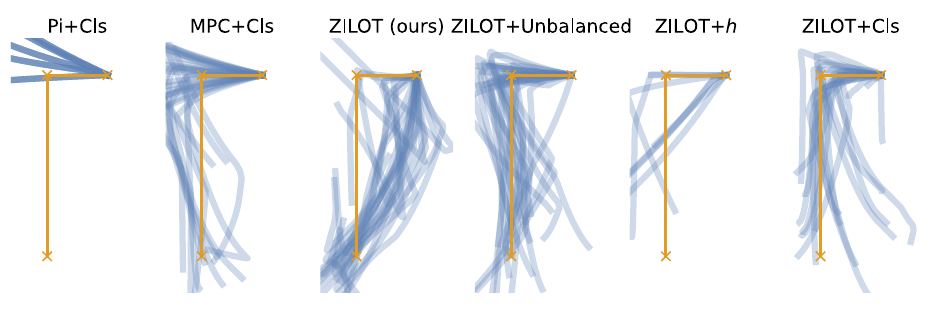}
		\includegraphics[width=\linewidth]{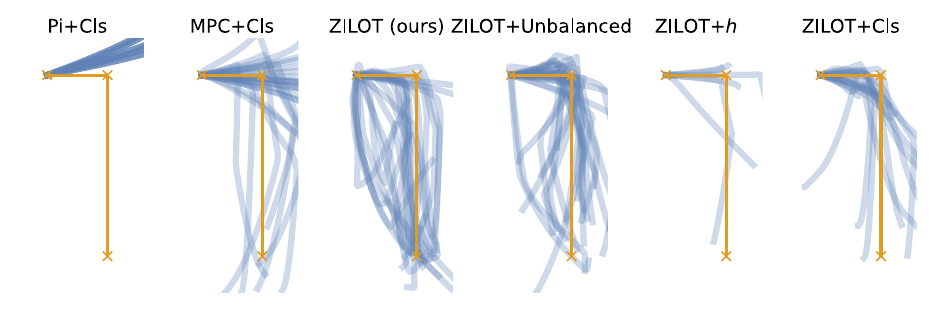}
		\includegraphics[width=\linewidth]{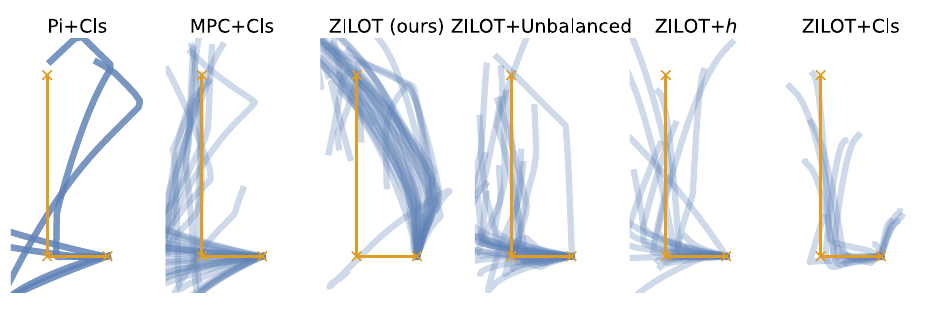}
		\vspace{-12pt}
		\caption{\texttt{L-sparse}}
	\end{subfigure}\hfill
	\begin{subfigure}{0.42\linewidth}
		\includegraphics[width=\linewidth]{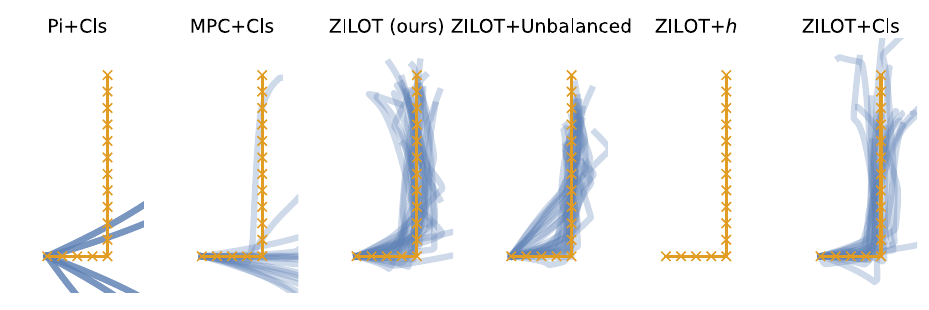}
		\includegraphics[width=\linewidth]{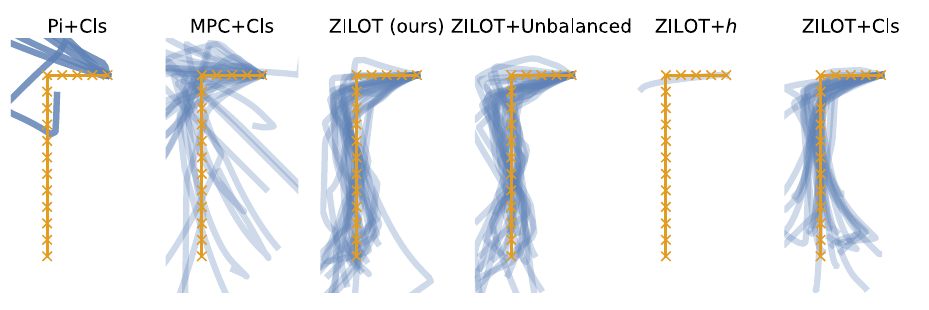}
		\includegraphics[width=\linewidth]{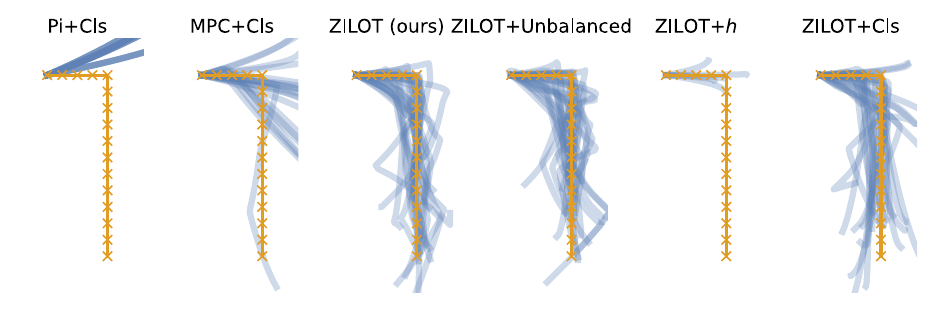}
		\includegraphics[width=\linewidth]{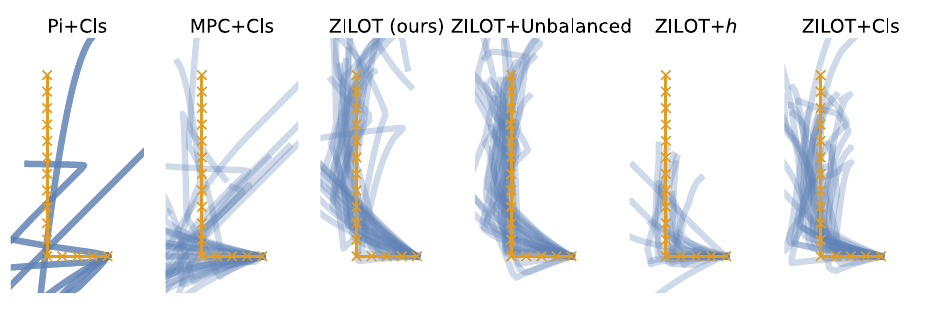}
		\vspace{-12pt}
		\caption{\texttt{L-dense}}
	\end{subfigure}
	\vspace{-4pt}
	\caption{\texttt{fetch\_slide\_large\_2D}}
\end{figure}
\begin{figure}
	\centering
	\begin{subfigure}{0.42\linewidth}
		\includegraphics[width=\linewidth]{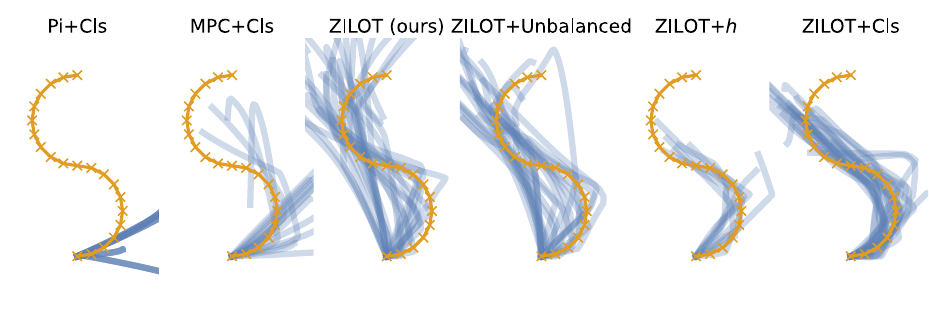}
		\includegraphics[width=\linewidth]{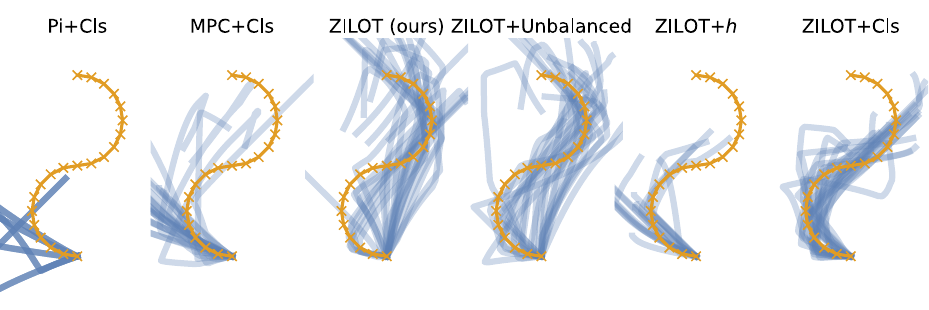}
		\includegraphics[width=\linewidth]{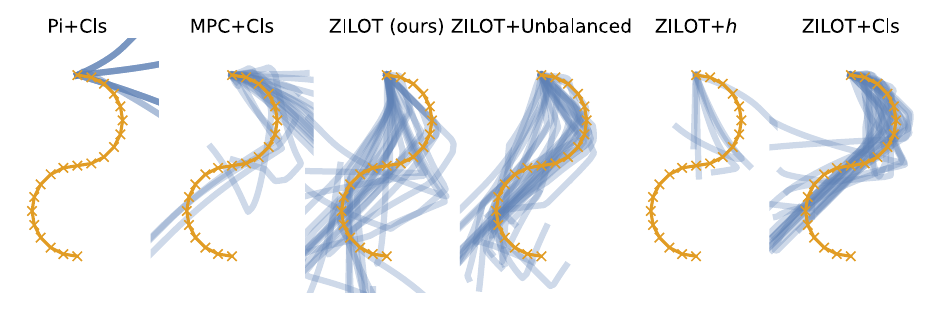}
		\includegraphics[width=\linewidth]{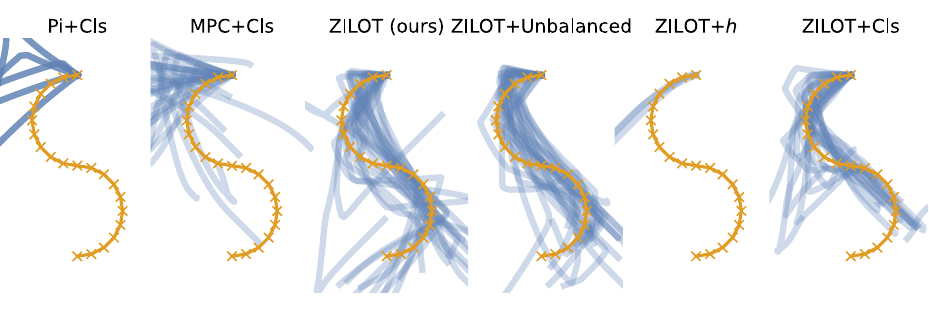}
		\vspace{-12pt}
		\caption{\texttt{S-dense}}
	\end{subfigure}\hfill
	\begin{subfigure}{0.42\linewidth}
		\includegraphics[width=\linewidth]{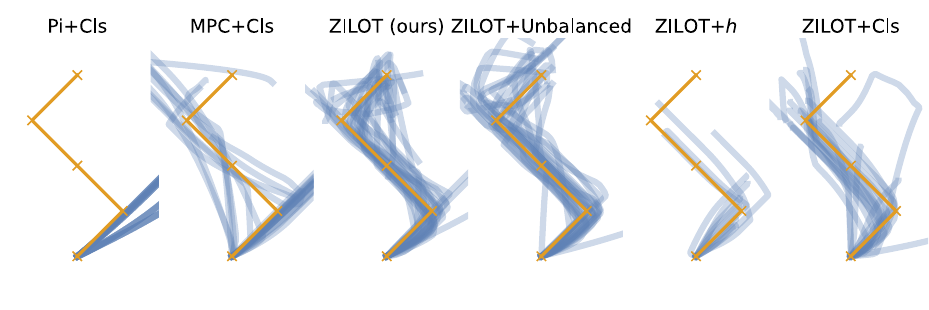}
		\includegraphics[width=\linewidth]{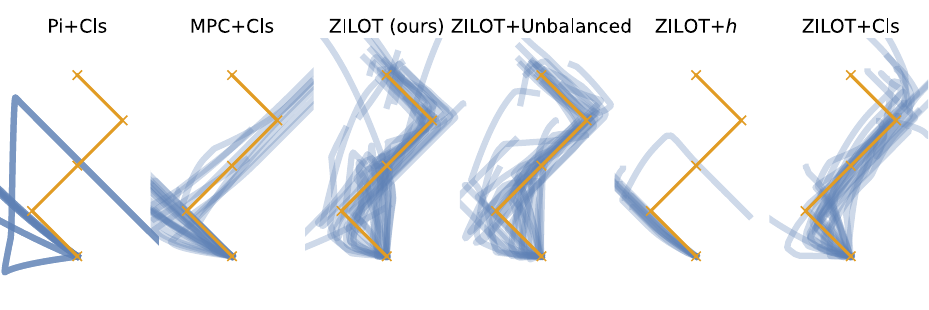}
		\includegraphics[width=\linewidth]{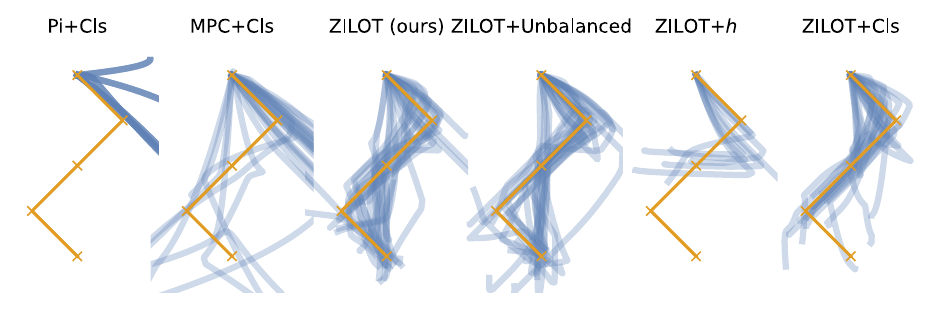}
		\includegraphics[width=\linewidth]{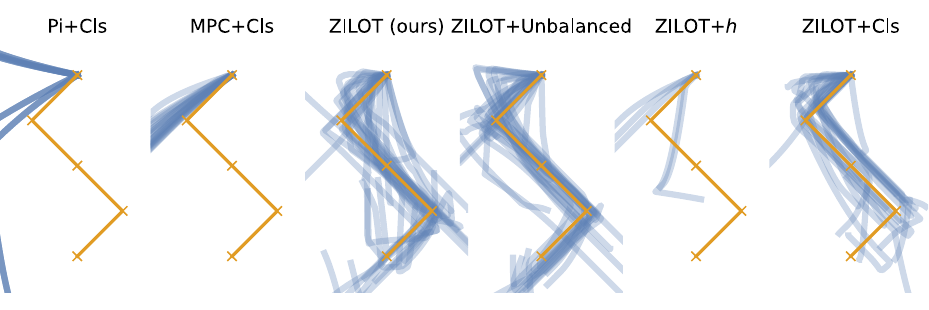}
		\vspace{-12pt}
		\caption{\texttt{S-sparse}}
	\end{subfigure}
	\vspace{-4pt}
	\caption{\texttt{fetch\_slide\_large\_2D}}
\end{figure}
\begin{figure}
	\vspace{-12pt}
	\centering
	\begin{subfigure}{0.42\linewidth}
		\includegraphics[width=\linewidth]{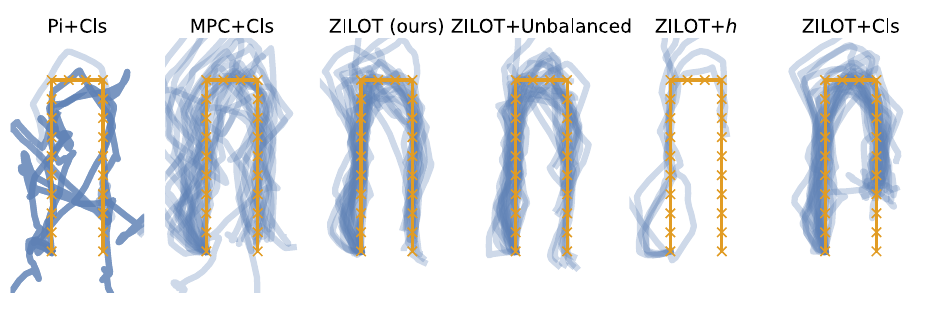}
		\includegraphics[width=\linewidth]{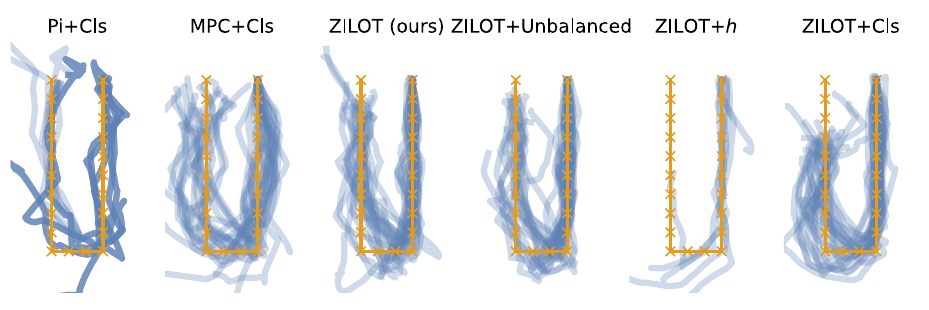}
		\includegraphics[width=\linewidth]{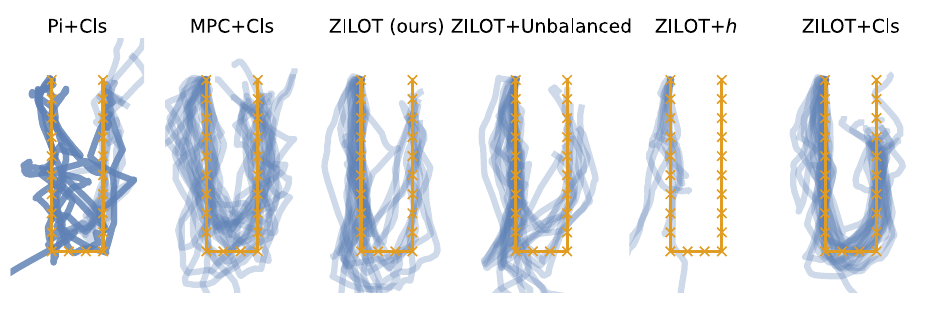}
		\includegraphics[width=\linewidth]{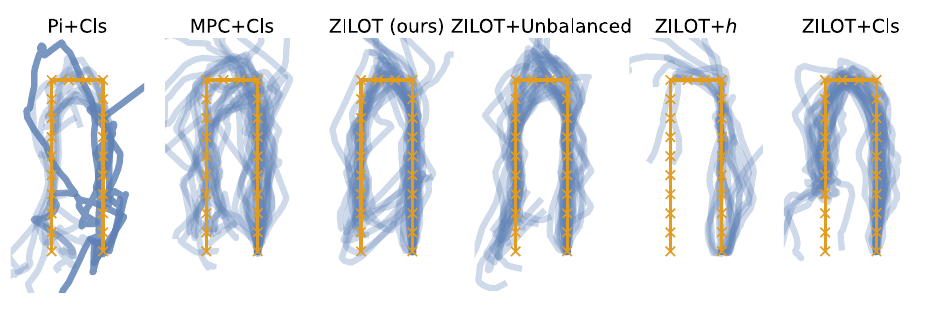}
		\vspace{-12pt}
		\caption{\texttt{U-dense}}
	\end{subfigure}\hfill
	\begin{subfigure}{0.42\linewidth}
		\includegraphics[width=\linewidth]{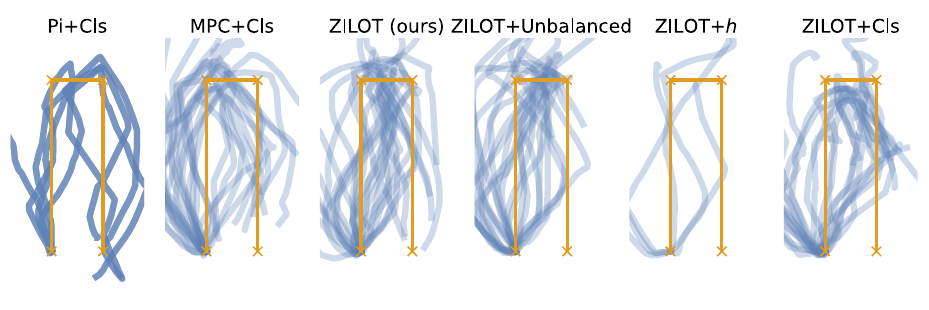}
		\includegraphics[width=\linewidth]{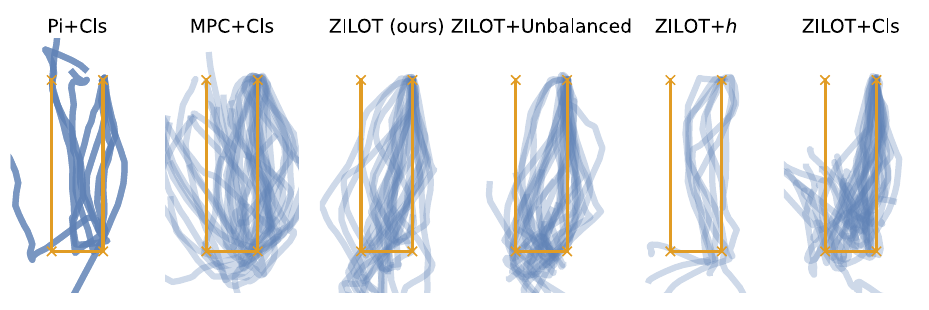}
		\includegraphics[width=\linewidth]{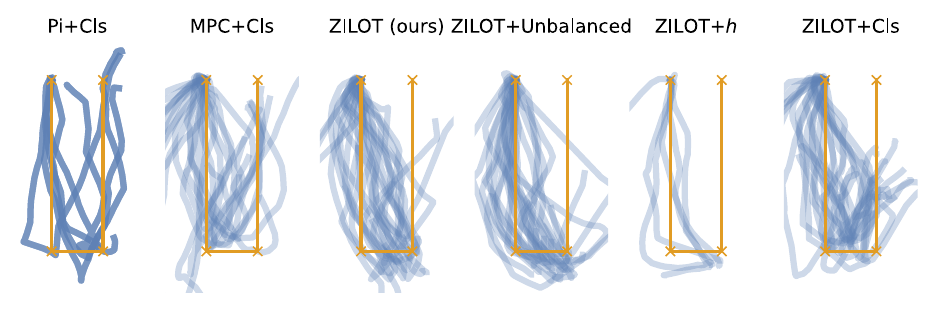}
		\includegraphics[width=\linewidth]{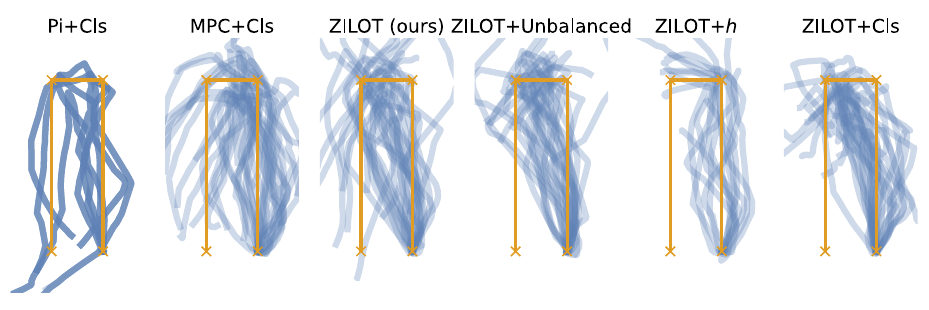}
		\vspace{-12pt}
		\caption{\texttt{U-sparse}}
	\end{subfigure}
	\vspace{-4pt}
	\caption{\texttt{fetch\_push}}
\end{figure}
\begin{figure}
	\centering
	\begin{subfigure}{0.42\linewidth}
		\includegraphics[width=\linewidth]{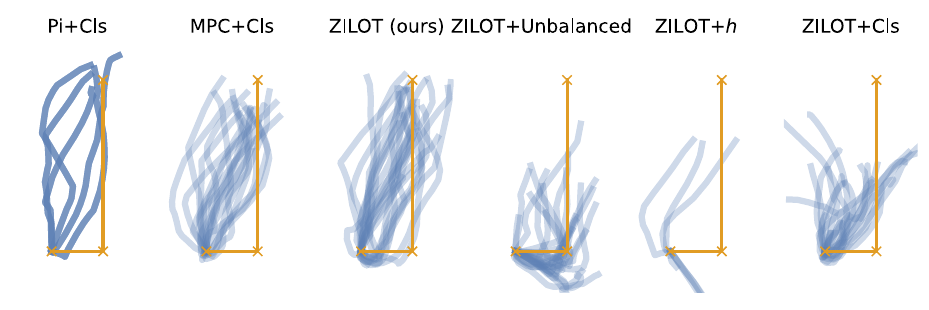}
		\includegraphics[width=\linewidth]{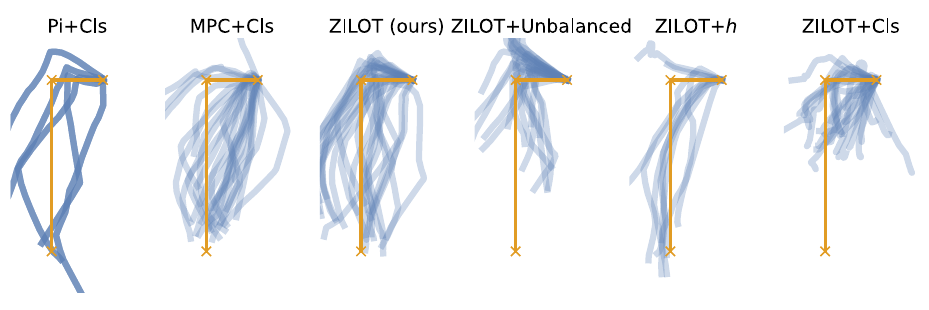}
		\includegraphics[width=\linewidth]{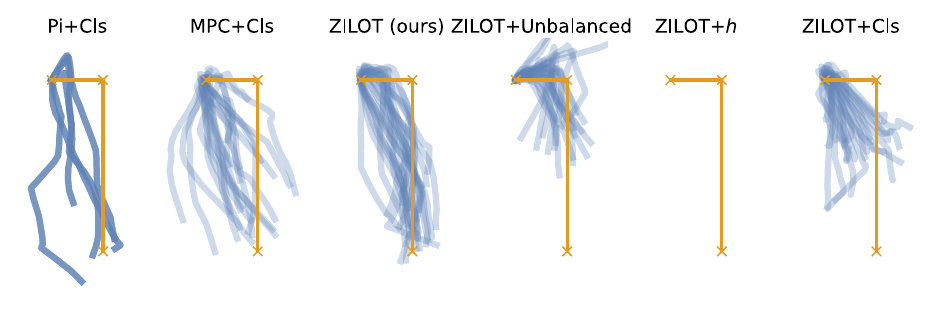}
		\includegraphics[width=\linewidth]{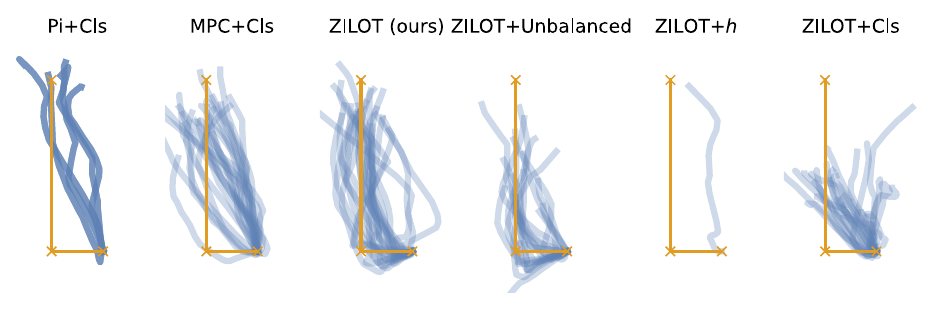}
		\vspace{-12pt}
		\caption{\texttt{L-sparse}}
	\end{subfigure}\hfill
	\begin{subfigure}{0.42\linewidth}
		\includegraphics[width=\linewidth]{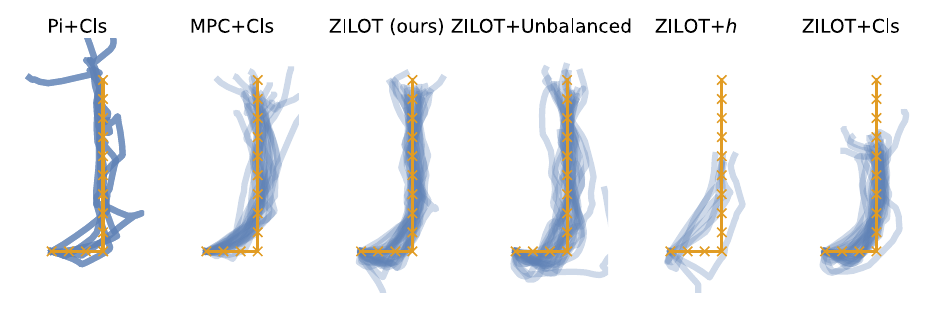}
		\includegraphics[width=\linewidth]{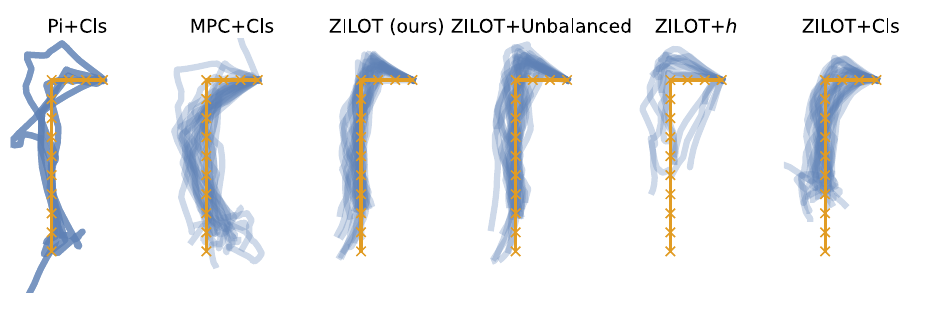}
		\includegraphics[width=\linewidth]{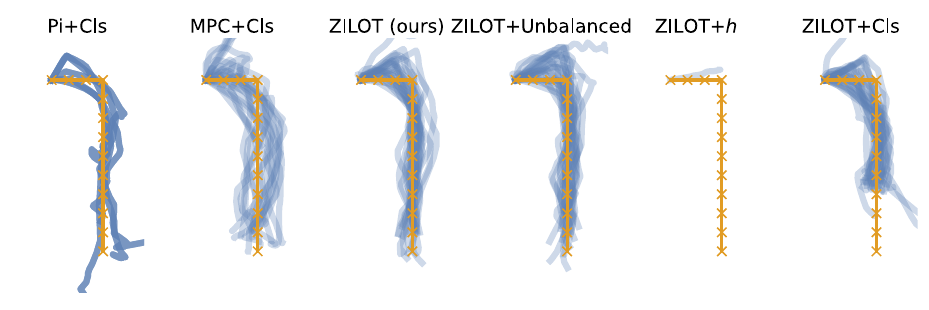}
		\includegraphics[width=\linewidth]{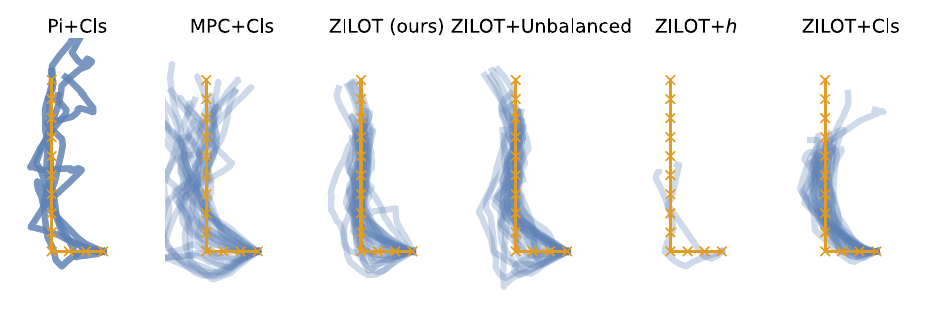}
		\vspace{-12pt}
		\caption{\texttt{L-dense}}
	\end{subfigure}
	\vspace{-4pt}
	\caption{\texttt{fetch\_push}}
\end{figure}
\begin{figure}
	\centering
	\begin{subfigure}{0.42\linewidth}
		\includegraphics[width=\linewidth]{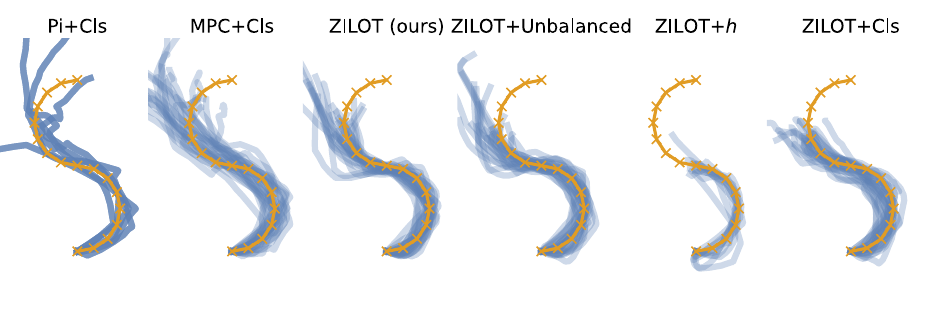}
		\includegraphics[width=\linewidth]{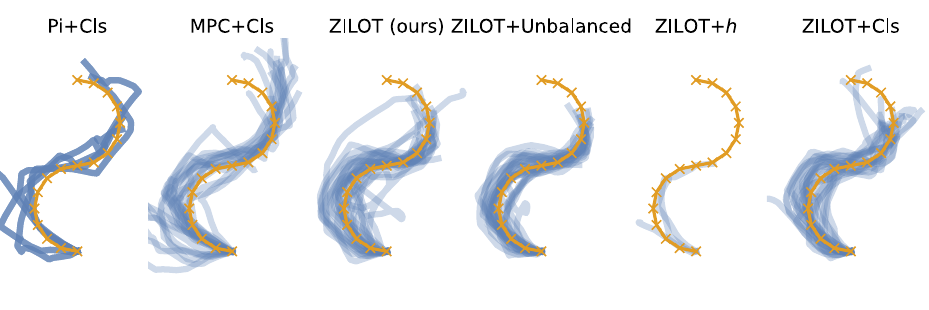}
		\includegraphics[width=\linewidth]{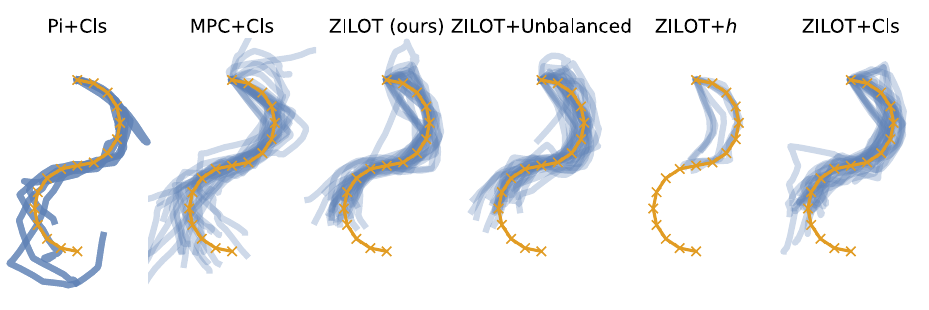}
		\includegraphics[width=\linewidth]{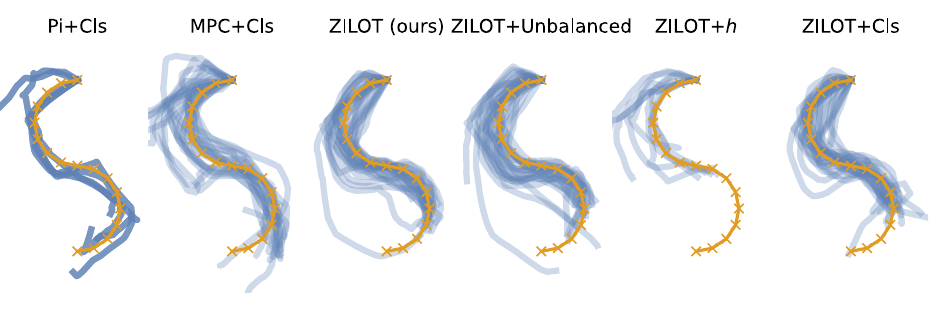}
		\vspace{-12pt}
		\caption{\texttt{S-dense}}
	\end{subfigure}\hfill
	\begin{subfigure}{0.42\linewidth}
		\includegraphics[width=\linewidth]{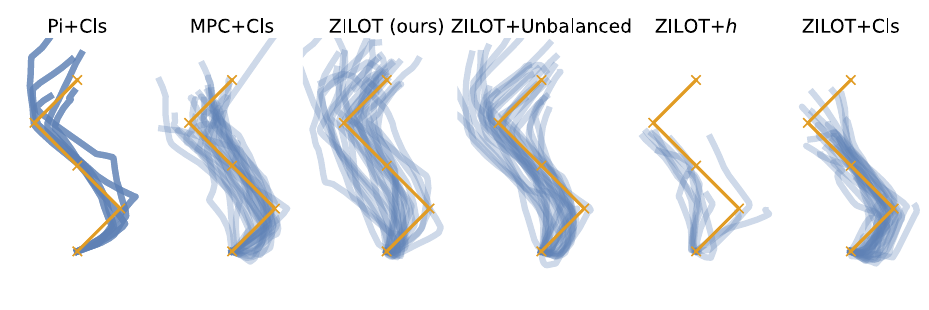}
		\includegraphics[width=\linewidth]{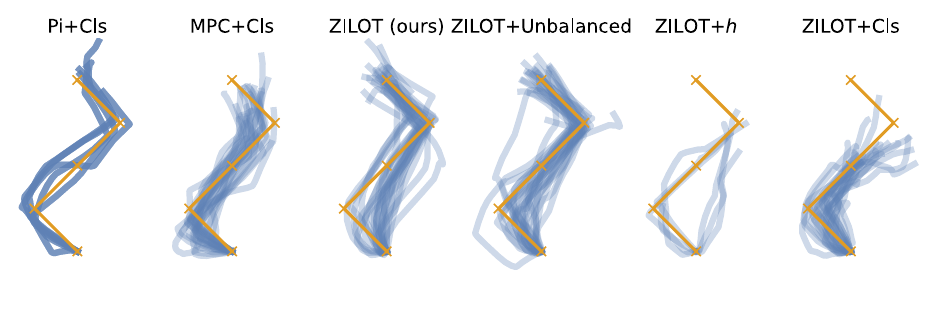}
		\includegraphics[width=\linewidth]{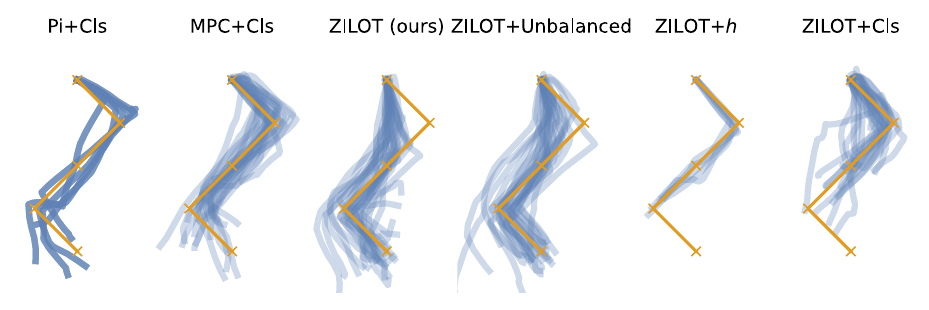}
		\includegraphics[width=\linewidth]{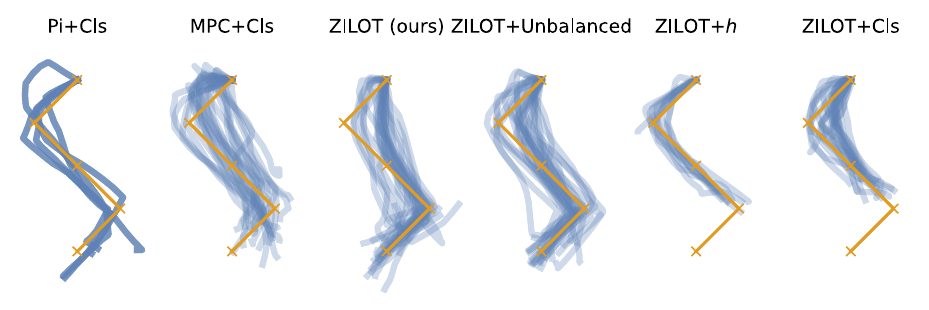}
		\vspace{-12pt}
		\caption{\texttt{S-sparse}}
	\end{subfigure}
	\vspace{-4pt}
	\caption{\texttt{fetch\_push}}
	\vspace{-4pt}
\end{figure}
\begin{figure}
	\vspace{-12pt}
	\centering
	\begin{subfigure}{\linewidth}
		\includegraphics[width=\linewidth]{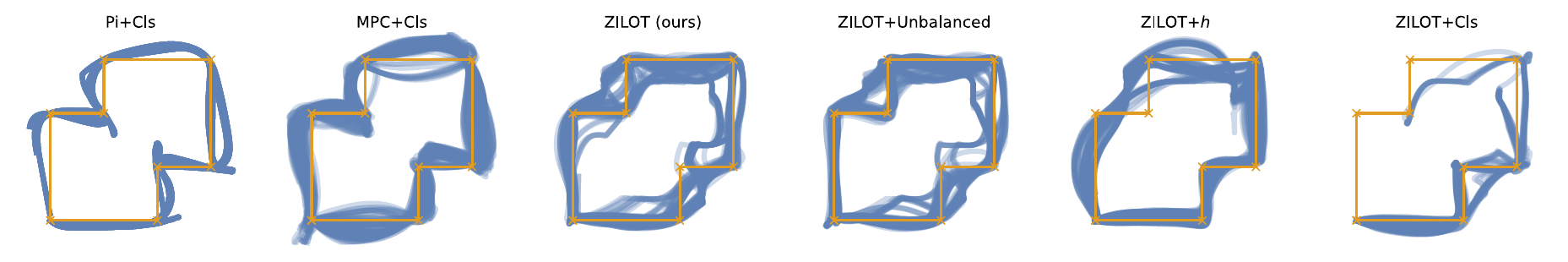}
		\vspace{-12pt}
		\caption{\texttt{circle-sparse}}
	\end{subfigure}
	\begin{subfigure}{\linewidth}
		\includegraphics[width=\linewidth]{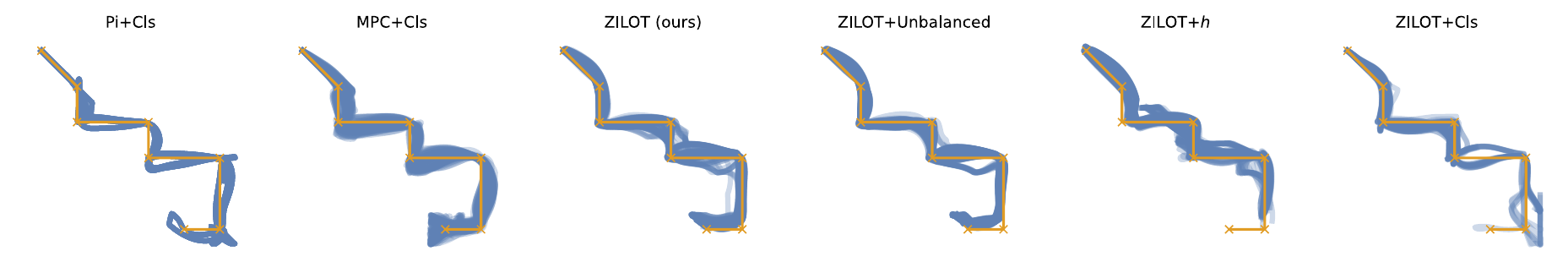}
		\vspace{-12pt}
		\caption{\texttt{path-sparse}}
	\end{subfigure}
	\begin{subfigure}{\linewidth}
		\includegraphics[width=\linewidth]{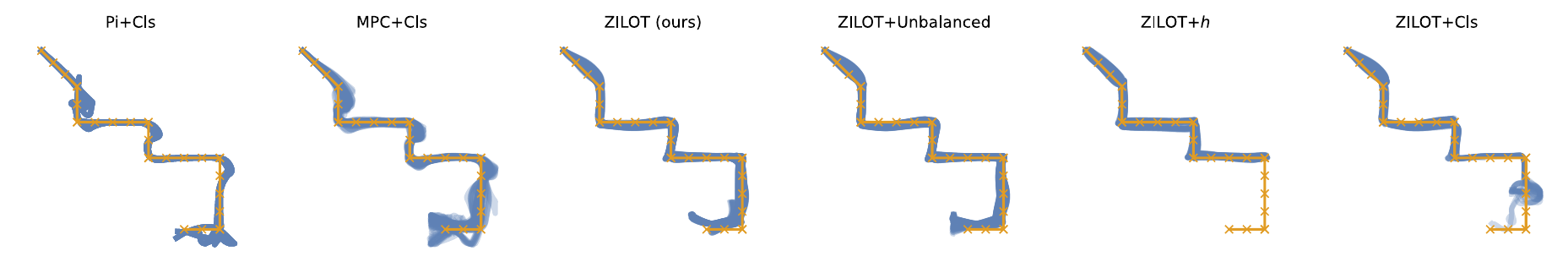}
		\vspace{-12pt}
		\caption{\texttt{path-dense}}
	\end{subfigure}
	\begin{subfigure}{\linewidth}
		\includegraphics[width=\linewidth]{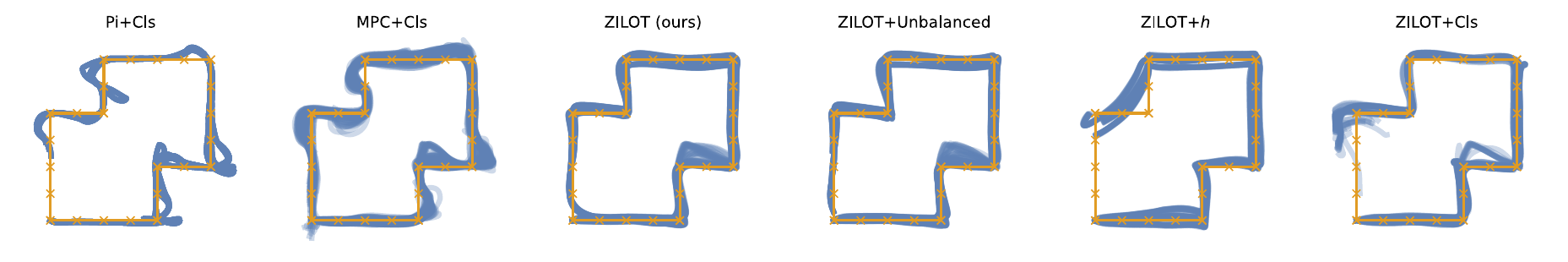}
		\vspace{-12pt}
		\caption{\texttt{circle-dense}}
	\end{subfigure}
	\vspace{-16pt}
	\caption{\texttt{pointmaze\_medium}}
\end{figure}
\begin{figure}
	\vspace{-12pt}
	\centering
	\begin{subfigure}{\linewidth}
		\includegraphics[width=\linewidth]{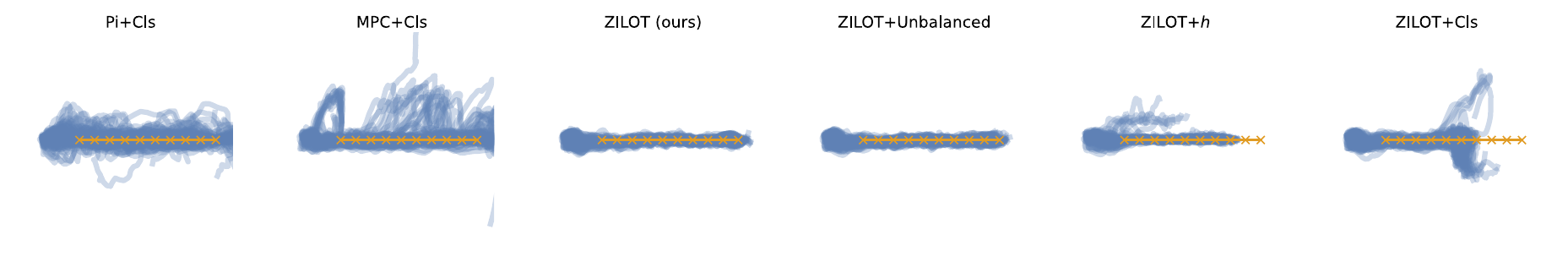}
		\vspace{-12pt}
		\caption{\texttt{run-forward}}
	\end{subfigure}
	\begin{subfigure}{\linewidth}
		\includegraphics[width=\linewidth]{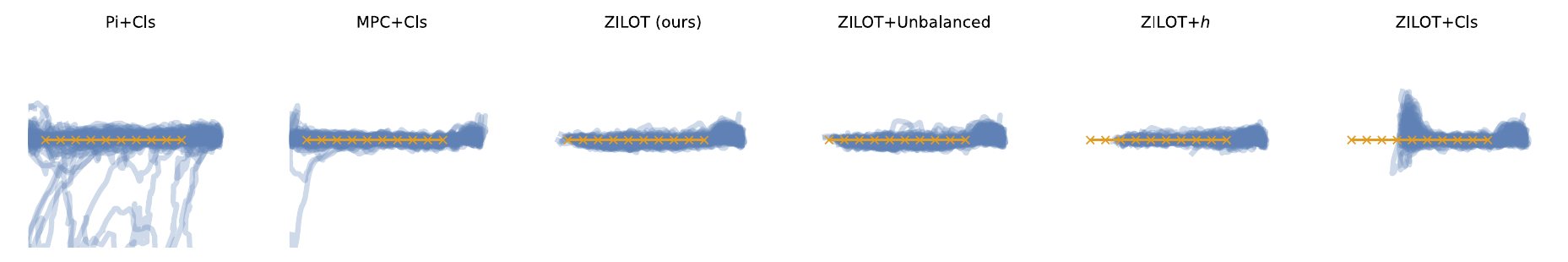}
		\vspace{-12pt}
		\caption{\texttt{run-backward}}
	\end{subfigure}
	\begin{subfigure}{\linewidth}
		\includegraphics[width=\linewidth]{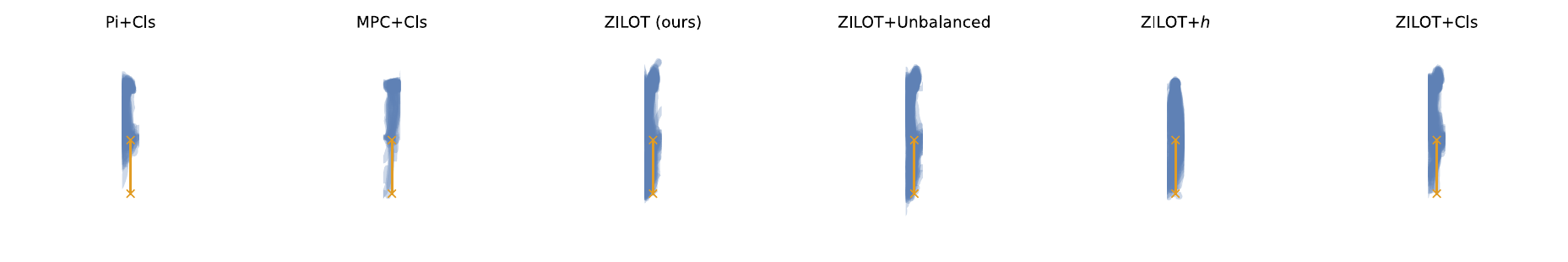}
		\vspace{-12pt}
		\caption{\texttt{backflip}}
	\end{subfigure}
	\begin{subfigure}{\linewidth}
		\includegraphics[width=\linewidth]{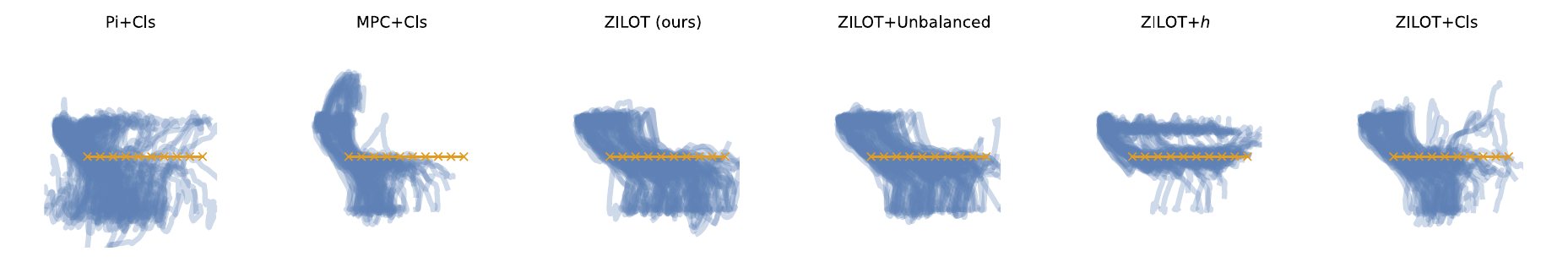}
		\vspace{-13pt}
		\caption{\texttt{hop-forward}}
	\end{subfigure}
    \vspace{-16pt}
	\caption{\texttt{halfcheetah} part 1}
\end{figure}
\begin{figure}
	\begin{subfigure}{\linewidth}
		\includegraphics[width=\linewidth]{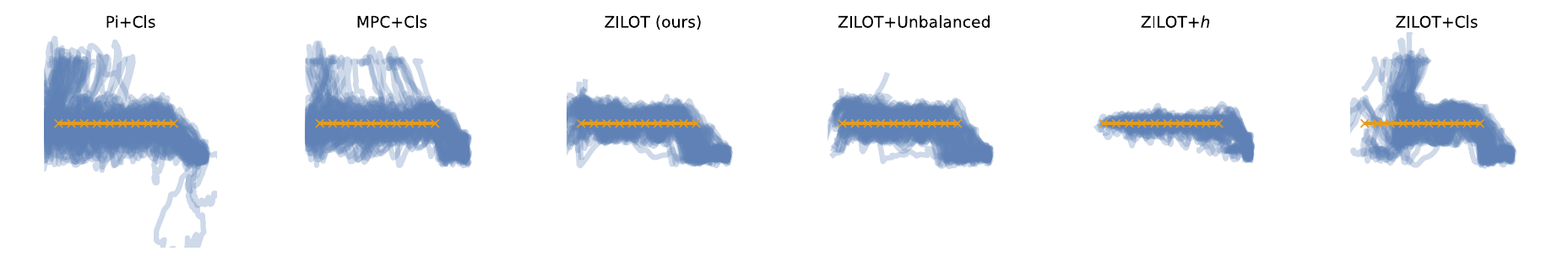}
		\vspace{-13pt}
		\caption{\texttt{hop-backward}}
	\end{subfigure}
	\begin{subfigure}{\linewidth}
		\includegraphics[width=\linewidth]{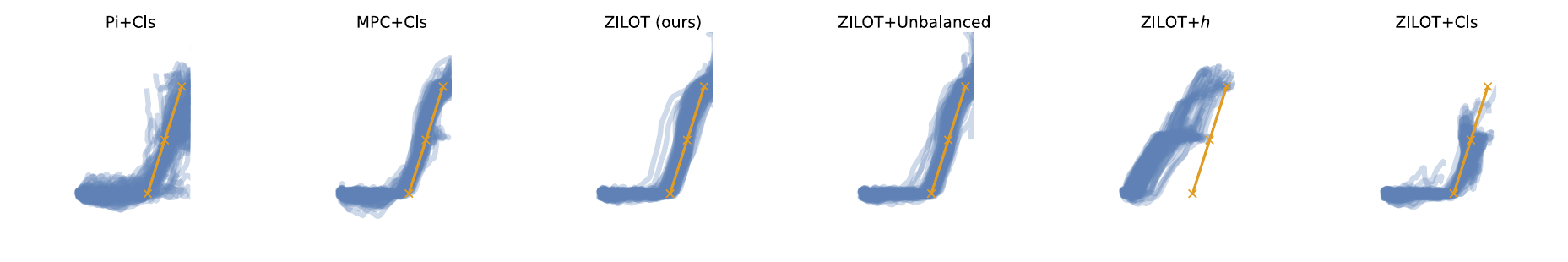}
		\vspace{-13pt}
		\caption{\texttt{frontflip-running}}
	\end{subfigure}
	\begin{subfigure}{\linewidth}
		\includegraphics[width=\linewidth]{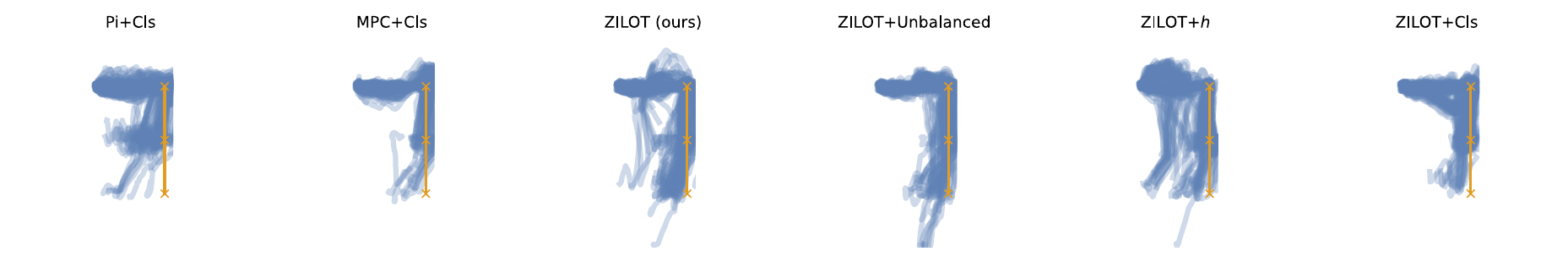}
		\vspace{-13pt}
		\caption{\texttt{backflip-running}}
	\end{subfigure}
	\begin{subfigure}{\linewidth}
		\includegraphics[width=\linewidth]{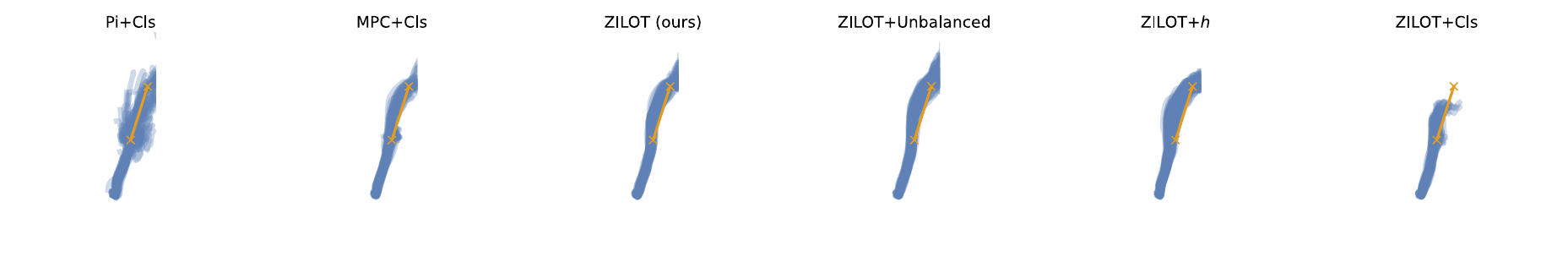}
		\vspace{-16pt}
		\caption{\texttt{frontflip}}
	\end{subfigure}
	\vspace{-16pt}
	\caption{\texttt{halfcheetah} part 2}
\end{figure}

\end{document}